\newtheorem{assumption}{Assumption}
\newtheorem{remark}{Remark}
\newtheorem{theorem}{Theorem}
\newtheorem{lemma}{Lemma}
\newtheorem{definition}{Definition}
\newcommand*\samethanks[1][\value{footnote}]{\footnotemark[#1]}
\begin{document}
%
\title{Inexact Proximal Gradient Methods for  Non-convex \\ and Non-smooth Optimization}
 \author{Bin Gu\samethanks[1],
De Wang\samethanks[2],
Zhouyuan Huo\samethanks[1], Heng Huang\samethanks[1]\\
\samethanks[1]Department of Electrical \& Computer Engineering, University of Pittsburgh, USA\\
\samethanks[2]Dept. of Computer Science and Engineering, University of Texas at Arlington,  USA \\
 big10@pitt.edu, wangdelp@gmail.com, zhouyuan.huo@pitt.edu, heng.huang@pitt.edu}

\maketitle
\begin{abstract}
In  machine learning research, the proximal gradient methods are popular for solving various optimization problems with  non-smooth  regularization. Inexact proximal gradient methods  are extremely important when   exactly solving  the proximal operator  is  time-consuming, or the proximal operator  does not have an analytic solution. However,  existing inexact proximal  gradient methods only consider  convex problems. The knowledge of inexact proximal gradient methods in the non-convex setting is very limited. 
 To address this challenge, in this paper, we first propose three  inexact proximal gradient algorithms, including the basic version and Nesterov's accelerated version. After that, we provide the theoretical analysis to the basic and Nesterov's accelerated versions.  The theoretical results show that our  inexact proximal gradient algorithms can have the same  convergence rates as the ones of exact proximal gradient algorithms  in the non-convex setting.
 Finally, we show the   applications of our inexact proximal gradient algorithms on  three representative non-convex learning problems. All experimental results confirm the  superiority of our new inexact proximal gradient algorithms. 
\end{abstract}

\section{Introduction}
Many machine learning problems involve non-smooth  regularization, such as the machine learning models with a variety of sparsity-inducing penalties \citep{bach2012optimization}. Thus, efficiently solving the optimization problem with non-smooth  regularization is important for many machine learning applications. In this paper, we focus on the  optimization problem of machine learning model with non-smooth  regularization as the following form:
   \begin{eqnarray}\label{formulation1}
\min_{x\in \mathbb{R}^N} f(x) = g(x)+ h(x)
\end{eqnarray}
where  $g: \mathbb{R}^N \to \mathbb{R}$ corresponding to the empire risk is smooth and possibly non-convex, and $h: \mathbb{R}^N \to \mathbb{R}$ corresponding to the regularization term is  non-smooth and possibly non-convex.

Proximal gradient methods are popular for solving various optimization problems with  non-smooth  regularization. The pivotal step of the  proximal gradient method is to  solve the proximal operator as following.
\begin{eqnarray} \label{section1_1.1}
\textrm{Prox}_{\gamma h}(y) = {\arg \min}_{x\in \mathbb{R}^N} \frac{1}{2 \gamma}  \left \| x-y \right \|^2 + h(x)
\end{eqnarray}
where $\gamma $ is the  stepsize.\footnote{The  stepsize $\gamma $ is  set manually or automatically determined by a backtracking line-search procedure \citep{beck2009fast}.}
If the function $h(x)$  is simple enough, we can obtain the solution of the proximal operator analytically. For example, if $h(x)=\| x \|_1$, the solution of the proximal operator can be obtained by a shrinkage thresholding operator \citep{beck2009fast}. If the function $h(x)$  is complex such that the corresponding proximal operator does not have an analytic solution, a specific algorithm should be designed for solving the proximal operator. For example, if $h(x)=\| x \|_1+ c\sum_{i<j} \max \{ | x_i | , |x_j  | \}$ as used in  OSCAR \citep{zhong2012efficient}  for the sparse regression  with automatic feature grouping,  \cite{zhong2012efficient} proposed an iterative group merging  algorithm for exactly solving the proximal operator.

However,   it would be expensive to solve  the proximal operators  when the function $h(x)$  is complex.  Once again, take OSCAR as an example, when the data is with high dimensionality (empirically larger than 1,000), the iterative group merging  algorithm   would become very inefficient.
Even worse, there would be no solver for exactly solving the  proximal operators when the function $h(x)$ is over complex. For example,
  \cite{grave2011trace} proposed the  trace Lasso norm to take into account the correlation of the design matrix to stabilize the estimation in regression. However, due to the complexity of trace Lasso norm,   there   still have no  solver for solving the corresponding proximal operator, to the best of our knowledge.

  \begin{table*}[htb]
\small
 \center
 \caption{Representative (exact and inexact) proximal gradient  algorithms. (C and NC are the  abbreviations of convex and non-convex respectively.) } 
\begin{tabular}{c|c|c|c|c|c}
\hline
\textbf{Algorithm}  & \textbf{Proximal} & \textbf{Accelerated}  &  $g(x)$ &    $h(x)$  &  \textbf{Reference}  \\ \hline
PG+APG  &  Exact & Yes & C   & C  & \cite{beck2009fast} \\
APG  & Exact & Yes & C+NC    &  C & \cite{ghadimi2016accelerated} \\
PG  & Exact & No & NC   & NC  & \cite{boct2016inertial} \\
APG  & Exact & Yes & C+NC  & C+NC  & \cite{li2015accelerated} \\
 \hline
IPG+AIPG  & Inexact & Yes & C  & C  & \cite{schmidt2011convergence} \\
AIFB (AIPG)  & Inexact & Yes & C  & C  & \cite{villa2013accelerated} \\
IPG+AIPG  & Inexact & Yes & C+NC   & C+NC  & Ours
\\ \hline
\end{tabular}
\label{table:methods}
\end{table*}

To address the above issues,    \cite{schmidt2011convergence} first proposed the inexact proximal gradient  methods (including  the basic version (IPG) and Nesterov's accelerated version (AIPG)), which  solves the proximal operator approximately (\emph{i.e.},  tolerating an error  in the calculation of the proximal operator). They proved that the inexact proximal gradient  methods can have  the
same convergence rates as the ones of exact proximal gradient
methods, provided
that the errors in computing the proximal operator decrease at appropriate rates.
Independently,   \cite{villa2013accelerated}  proposed   AIPG algorithm and proved the corresponding convergence rate. In the paper of \citep{villa2013accelerated}, they called AIPG as the inexact forward-backward
splitting method (AIFB) which  is well-known  in the field of signal processing. We summarize these works  in Table \ref{table:methods}.

From Table \ref{table:methods}, we find that the existing inexact proximal  gradient methods only consider  convex problems. However,  a lot of optimization problems in machine learning are non-convex. The non-convexity  originates either from the empirical risk function $g(x)$  or  the regularization function $h(x)$. First, we investigate the empirical risk function $g(x)$ (\emph{i.e.}, loss function). The  correntropy induced loss \citep{feng2015learning}   is widely used  for robust regression and classification, which is non-convex.  The  symmetric sigmoid loss on the unlabeled samples is used in semi-supervised SVM \citep{chapelle2006continuation}  which is non-convex. Second,  we investigate  the  regularization function $h(x)$. Capped-$l_1$ penalty  \citep{zhang2010analysis} is  used for  unbiased variable selection,  and the low rank constraint \citep{jain2010guaranteed} is widely used for the matrix completion. Both of these regularization functions are non-convex.  However, our knowledge of inexact proximal gradient methods  is very limited in the non-convex setting.


To address this challenge, in this paper, we first propose three  inexact proximal gradient algorithms, including the basic and Nesterov's accelerated versions, which can handle the non-convex problems.
 Then we give the theoretical analysis to the basic and Nesterov's accelerated versions. The theoretical results show that our  inexact proximal gradient algorithms can have the same  convergence rates as the ones of exact proximal gradient algorithms.
Finally, we provide the   applications of our inexact proximal gradient algorithms on three representative non-convex learning problems. The applications on robust OSCAR and link prediction show that,  our inexact proximal gradient algorithms could be  significantly faster than the  exact proximal gradient algorithms. The application on robust Trace Lasso fills the vacancy that there is no proximal gradient algorithm for Trace Lasso.

\noindent \textbf{Contributions.} The main contributions of this paper are summarized as follows:
\begin{enumerate}
\item We  first propose the  basic and accelerated inexact proximal gradient algorithms  with  rigorous convergence guarantees.
   Specifically, our inexact proximal gradient algorithms can have the same  convergence rates as the ones of exact proximal gradient algorithms in the non-convex setting.
 \item We provide the  applications of our inexact proximal gradient algorithms on three  representative non-convex  learning problems, \emph{i.e.},     robust OSCAR,  link prediction and robust Trace Lasso. The results confirm the  superiority of our inexact proximal gradient algorithms.
\end{enumerate}

\section{Related Works}
Proximal gradient methods are one of the most important methods for solving various optimization problems with  non-smooth  regularization.  There have been a variety of exact proximal gradient methods.

Specifically, for convex problems, \cite{beck2009fast} proposed  basic proximal gradient (PG) method and  Nesterov's accelerated proximal gradient (APG) method. They  proved that PG has the convergence rate $O(\frac{1}{T})$, and APG has the  convergence rate $O(\frac{1}{T^2})$, where $T$ is the number of iterations. For non-convex problems,  \cite{ghadimi2016accelerated}  considered that only $g(x)$ could be non-convex, and proved that the convergence rate of APG method is $O(\frac{1}{T})$.  \cite{boct2016inertial} considered that  both of $g(x)$ and $h(x)$ could be non-convex, and proved the  convergence  of PG method.   \cite{li2015accelerated} considered that  both of $g(x)$ and $h(x)$ could be non-convex, and proved that the APG algorithm can converge in a finite number of iterations, in a  linear rate  or a sublinear rate (\emph{i.e.}, $O(\frac{1}{T})$) at different conditions. We  summarize these exact proximal gradient methods in Table \ref{table:methods}. 

In addition to the above batch  exact proximal gradient  methods, there are  the  stochastic and online proximal gradient  methods \citep{duchi2009efficient,xiao2014proximal}. Because they  are beyond  the scope of this paper, we do not review them in this paper.

\section{Preliminaries}\label{section_preliminary}
In this section, we introduce  the  Lipschitz smooth, $\varepsilon$-approximate subdifferential and $\varepsilon$-approximate Kurdyka-{\L}ojasiewicz (KL) property, which are  critical to  the convergence analysis of our inexact proximal gradient  methods  in the non-convex setting.

\noindent \textbf{Lipschitz smooth:} \ \
 For the smooth functions  $g(x)$, we have the Lipschitz constant $L$ for $\nabla g(x)$ \citep{wood1996estimation} as following.
\begin{assumption}\label{NormalLipschitzconstant}
\label{definition1}
$L $ is the  Lipschitz constant for $\nabla g(x)$. Thus,  for all $ x$ and $y$, $L$-Lipschitz smooth  can be presented as
   \begin{eqnarray}\label{ass3}
\| \nabla g(x) - \nabla  g(y) \|  \leq L \|x - y \|
\end{eqnarray}
Equivalently, $L$-Lipschitz smooth  can  also be written as the formulation (\ref{coordinate_lipschitz_constant2}).
   \begin{eqnarray} \label{coordinate_lipschitz_constant2}
 g(x) \leq g(y) + \langle \nabla g(y) , x-y \rangle + \frac{L}{2}  \left \| x-y \right \|^2
\end{eqnarray}
\end{assumption}

\noindent \textbf{$\varepsilon$-approximate subdifferential:} \ \  Because inexact proximal gradient is used in this paper, an $\varepsilon$-approximate proximal operator may produce  an $\varepsilon$-approximate subdifferential. In the following, we give the definition of $\varepsilon$-approximate subdifferential \citep{bertsekas2003convex}  which will be used in the $\varepsilon$-approximate KL property (\emph{i.e.}, Definition \ref{definition1}).
\begin{definition}[$\varepsilon$-approximate subdifferential] \label{definition0.1}
Given a convex function $h(x): \mathbb{R}^N \mapsto \mathbb{R}$ and a positive scalar $\varepsilon$, the $\varepsilon$-approximate subdifferential
 of $h(x)$ at a point $x \in  \mathbb{R}^N$ (denoted as $\partial_{\varepsilon} h(x)$) is
 \begin{eqnarray} \label{definition_subgradient}
\partial_{\varepsilon} h(x) = \left  \{ d \in \mathbb{R}^N : h(y) \geq h(x) + \langle d  , y-x \rangle - \varepsilon  \right \}
\end{eqnarray}
\end{definition}
Based on Definition \ref{definition0.1}, if $d \in \partial_{\varepsilon} h(x)$, we say that $d$ is an $\varepsilon$-approximate subgradient of $h(x)$ at the point $x$.

\noindent \textbf{$\varepsilon$-approximate KL property:} \ \
  Originally, KL property is introduced to analyze the convergence rate of exact proximal gradient methods in the non-convex setting \citep{li2015accelerated,boct2016inertial}. Because this paper focuses on the inexact proximal gradient methods,  correspondingly we propose the $\varepsilon$-approximate KL property in Definition \ref{definition1}, where the function $dist(x,S)$ is defined by $dist(x,S) = \min_{y\in S}  \left  \| x - y  \right \|$,  and $S$ is a subset of $ \mathbb{R}^N$.
\begin{definition}[$\varepsilon$-KL property]
 \label{definition1}
A function $f(x) = g(x)+h(x): \mathbb{R}^N \rightarrow ( - \infty, + \infty]$ is said to have the $\varepsilon$-KL property at $\overline{u} \in  \{u \in \mathbb{R}^N : \nabla g(u) + \partial_{\varepsilon} h(u)) \neq \emptyset \}$, if there exists $\eta \in (0,+ \infty ]$, a neighborhood $U$ of $\overline{u}$ and a function $\varphi \in \Phi_{\eta}$, such that for all $u \in U \cap \{ u \in \mathbb{R}^N : f(\overline{u}) < f({u}) <  f(\overline{u})  + \eta \}$, the following inequality holds
\begin{eqnarray} \label{definition1_1.2}
\varphi'(f(u) - f(\overline{u})) dist (\textbf{0}, \nabla g(u) + \partial_{\varepsilon} h(u))) \geq 1
\end{eqnarray}
where $\Phi_{\eta}$ stands for a class of functions $\varphi : [0,\eta) \rightarrow \mathbb{R}^+$ satisfying:

 \begin{enumerate}
 \item $\varphi$ is concave and continuously differentiable function on $(0,\eta)$;
 \item $\varphi$ is continuous at $0$, $\varphi(0) = 0$;
 \item and $\varphi'(x) >0$, $\forall x \in (0,\eta)$.
 \end{enumerate}
\end{definition}
 \section{ Inexact Proximal
Gradient  Algorithms}\label{section_algorithm}
In this section, we first propose the  basic  inexact proximal gradient algorithm for the   non-convex optimization, and then propose two  accelerated inexact proximal gradient algorithms.
 \subsection{Basic Version}
As shown in Table \ref{table:methods}, for the convex  problems (\emph{i.e.}, both the functions $g(x)$ and $h(x)$ are convex),   \cite{schmidt2011convergence}  proposed a basic inexact proximal gradient (IPG)   method. We follow the   framework of IPG  in \citep{schmidt2011convergence}, and give our IPG algorithm for the   non-convex optimization  problems (\emph{i.e.}, either the function $g(x)$ or the function $h(x)$ is  non-convex).

 Specifically, our IPG algorithm is presented in Algorithm \ref{algorithm1}.  Similar with the exact  proximal gradient algorithm, the pivotal step of our IPG (\emph{i.e.}, Algorithm \ref{algorithm1}) is to compute an inexact proximal operator $x \in \textrm{Prox}^{\varepsilon}_{\gamma h}  ( y)$  as following.
 \begin{eqnarray}\label{section2_equ1}
&& x \in \textrm{Prox}^{\varepsilon}_{\gamma h}  ( y)
= \nonumber \left \{ z\in \mathbb{R}^N :   \frac{1}{2  \gamma} \left  \| z - y  \right \|^2 + h(z) \right .
  \\ && \nonumber \left . \quad \quad \quad \quad \leq \varepsilon + \min_{x}\frac{1}{2 \gamma} \left  \| x - y  \right \|^2 + h(x) \right \}
\end{eqnarray}
where $\varepsilon$ denotes an error in  the calculation of the proximal operator. As discussed in \citep{tappenden2013inexact}, there are several methods to compute  the inexact proximal operator. The most popular method is using a primal dual algorithm to control the dual gap \citep{bach2012optimization}. Based on the dual gap, we can strictly control the error in  the calculation of the proximal operator.

\begin{algorithm}
\renewcommand{\algorithmicrequire}{\textbf{Input:}}
\renewcommand{\algorithmicensure}{\textbf{Output:}}
\caption{Basic inexact proximal gradient method (IPG)}
\begin{algorithmic}[1]
\REQUIRE  $m$, error $\varepsilon_k$ ($k=1,\cdots,m$), stepsize $\gamma < \frac{1}{L}$.
\ENSURE $x_{m}$.
 \STATE  Initialize  $x_0 \in \mathbb{R}^d$.
\FOR{$k=1,\cdots,m$}

 \STATE  Compute $x_{k} \in \textrm{Prox}^{\varepsilon_k}_{\gamma h} \left (x_{k-1} - \gamma  \nabla g(x_{k-1})  \right )$. 

\ENDFOR
\end{algorithmic}
\label{algorithm1}
\end{algorithm}
\subsection{Accelerated Versions}
We first propose a   Nesterov's  accelerated inexact proximal gradient algorithm for  non-convex optimization, then give a nonmonotone  accelerated inexact proximal gradient algorithm.

As shown in Table \ref{table:methods}, for the convex optimization problems,  \cite{beck2009fast}  proposed a Nesterov's accelerated inexact proximal gradient (\emph{i.e.}, APG)   method, and   \cite{schmidt2011convergence}  proposed a Nesterov's accelerated inexact proximal gradient (\emph{i.e.}, AIPG)   method. Both of APG and AIPG are accelerated by  a momentum term. However,  as mentioned in \citep{li2015accelerated}, traditional Nesterov's accelerated method may encounter a bad momentum term for the non-convex optimization. To address the bad momentum term,  \cite{li2015accelerated} added another proximal operator as a monitor to make the objective function sufficient descent. To make the  objective functions generated from our AIPG  strictly descent, we follow the framework of APG  in \citep{li2015accelerated}. Thus, we compute two inexact proximal operators $z_{k+1} \in \textrm{Prox}^{\varepsilon_{k} }_{\gamma h}  \left ( y_{k} - \gamma  \nabla g(y_{k}) \right )$ and $v_{k+1} \in \textrm{Prox}^{\varepsilon_{k} }_{\gamma h} \left (x_{k} - \gamma \nabla g(x_{k})\right )$, where $v_{k+1}$  is a monitor to make the objective function strictly descent.  Specifically, our AIPG is presented in Algorithm \ref{algorithm2}.

\begin{algorithm}
\renewcommand{\algorithmicrequire}{\textbf{Input:}}
\renewcommand{\algorithmicensure}{\textbf{Output:}}
\caption{Accelerated inexact proximal gradient method (AIPG)}
\begin{algorithmic}[1]
\REQUIRE  $m$,  error $\varepsilon_k$ ($k=1,\cdots,m$), $t_0 = 0$, $t_1 = 1$, stepsize $\gamma < \frac{1}{L}$.
\ENSURE $x_{m+1}$.
 \STATE  Initialize  $x_0 \in \mathbb{R}^d$, and $x_1= z_1 = x_0$.
\FOR{$k=1,2,\cdots,m$}

\STATE  $y_k = x_k + \frac{t_{k-1}}{t_k} (z_k - x_k ) + \frac{t_{k-1 } -1 }{ t_k} (x_k - x_{k-1})$.
\STATE  Compute $z_{k+1}$ such that $z_{k+1} \in \textrm{Prox}^{\varepsilon_{k} }_{\gamma h}  \left ( y_{k} - \gamma  \nabla g(y_{k}) \right )$.
\STATE  Compute $v_{k+1} $ such that $v_{k+1} \in \textrm{Prox}^{\varepsilon_{k} }_{\gamma h} \left (x_{k} - \gamma \nabla g(x_{k})\right )$.

\STATE $t_{k+1} = \frac{\sqrt{4t_k^2 + 1} + 1}{2}$.
\STATE $x_{k+1} =  \left \{   \begin{array} {l@{\ \
\  \ }l} z_{k+1} & \textrm{if} \ \ f(z_{k+1}) \leq f(v_{k+1})
\\ v_{k+1} & \textrm{otherwise }  \end{array} \right . $

\ENDFOR
\end{algorithmic}
\label{algorithm2}
\end{algorithm}

To address the bad momentum term in the non-convex setting,  our AIPG (\emph{i.e.}, Algorithm \ref{algorithm2}) uses a pair of  inexact proximal operators to make the  objective functions strictly descent. Thus, AIPG is  a monotone  algorithm. Actually, using two proximal operators is a conservative strategy. As mentioned in \citep{li2015accelerated}, we can accept $z_{k+1}$ as $x_{k+1}$ directly if it satisfies the criterion $f(z_{k+1}) \leq f(x_k)  - \frac{\delta}{2} \| z_{k+1} -  y_k \|^2$ which shows that $y_k $  is a good extrapolation.  $v_{k+1}$ is computed only when this criterion is not met. Thus, the average number of  proximal operators can be reduced. Following this idea, we  propose our nonmonotone accelerated inexact proximal gradient algorithm (nmAIPG)  in Algorithm \ref{algorithm3}. Empirically, we find that the nmAIPG with the value of $\delta \in [0.5,1]$ works good. In our experiments, we  set $\delta=0.6$. 
\begin{algorithm}
\renewcommand{\algorithmicrequire}{\textbf{Input:}}
\renewcommand{\algorithmicensure}{\textbf{Output:}}
\caption{Nonmonotone accelerated inexact proximal gradient method (nmAIPG)}
\begin{algorithmic}[1]
\REQUIRE  $m$, $\varepsilon_k$ ($k=1,\cdots,m$), $t_0 = 0$, $t_1 = 1$, stepsize $\gamma < \frac{1}{L}$, $\delta >0$.
\ENSURE $x_{m+1}$.
 \STATE  Initialize  $x_0 \in \mathbb{R}^d$, and $x_1= z_1 = x_0$.
\FOR{$k=1,2,\cdots,m$}

\STATE  $y_k = x_k + \frac{t_{k-1}}{t_k} (z_k - x_k ) + \frac{t_{k-1 } -1 }{ t_k} (x_k - x_{k-1})$.
\STATE  Compute $z_{k+1}$ such that $z_{k+1} \in \textrm{Prox}^{\varepsilon_{k} }_{\gamma h}  \left ( y_{k} - \gamma  \nabla g(y_{k}) \right )$.
\IF{$f(z_{k+1}) \leq f(x_k)  - \frac{\delta}{2} \| z_{k+1} -  y_k \|^2$}
\STATE $x_{k+1}  = z_{k+1}$
\ELSE
\STATE  Compute $v_{k+1} $ such that $v_{k+1} \in \textrm{Prox}^{\varepsilon_{k} }_{\gamma h} \left (x_{k} - \gamma \nabla g(x_{k})\right )$.
\STATE $x_{k+1} =  \left \{   \begin{array} {l@{\ \
\  \ }l} z_{k+1} & \textrm{if} \ \ f(z_{k+1}) \leq f(v_{k+1})
\\ v_{k+1} & \textrm{otherwise }  \end{array} \right . $
\ENDIF
\STATE $t_{k+1} = \frac{\sqrt{4t_k^2 + 1} + 1}{2}$.

\ENDFOR
\end{algorithmic}
\label{algorithm3}
\end{algorithm}
\section{Convergence Analysis} \label{convergence_analysis}
As mentioned before,  the  convergence analysis of inexact proximal gradient methods for the  non-convex  problems   is still an open problem. This section will  address this challenge.

Specifically,  we first prove that IPG and AIPG converge to a critical point in the convex or non-convex setting (Theorem \ref{theorem3})  if $\{ \varepsilon_k\}$ is a decreasing sequence and $ \sum_{k=1}^m  \varepsilon_k < \infty $. Next, we  prove that IPG has the convergence rate $O(\frac{1}{T})$ for the non-convex problems (Theorem \ref{theorem2}) when the errors decrease at an appropriate rate.   Then, we  prove  the convergence rates for  AIPG in the non-convex setting (Theorem \ref{theorem4}). The detailed  proofs of  Theorems \ref{theorem3}, \ref{theorem2} and \ref{theorem4}   can be found in Appendix. 

\subsection{Convergence  of IPG and AIPG}
We first   prove that IPGA and AIPG converge to a critical point (Theorem \ref{theorem3})  if $\{ \varepsilon_k\}$ is a decreasing sequence and $ \sum_{k=1}^m  \varepsilon_k < \infty $.
\begin{theorem} \label{theorem3} With Assumption \ref{NormalLipschitzconstant},
 if $\{ \varepsilon_k\}$ is a decreasing sequence and  $ \sum_{k=1}^m  \varepsilon_k < \infty $, we have $\textbf{0} \in \lim_{k\rightarrow \infty} \nabla g(x_k) + \partial_{\varepsilon_k} h({x}_k)$ for IPG and AIPG in the convex and non-convex optimization.
\end{theorem}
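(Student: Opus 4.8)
The plan is to reduce the statement to two facts about the IPG iterates of Algorithm~\ref{algorithm1}: that consecutive iterates become arbitrarily close, and that each inexact proximal step furnishes an $\varepsilon_k$-approximate subgradient of $h$ at $x_k$ whose sum with $\nabla g(x_k)$ vanishes. First I would write $y_{k-1} = x_{k-1} - \gamma\nabla g(x_{k-1})$ and use that the feasible point $x_{k-1}$ upper-bounds the proximal minimum; the inexactness condition then gives
\[
\tfrac{1}{2\gamma}\|x_k - y_{k-1}\|^2 + h(x_k) \le \varepsilon_k + \tfrac{\gamma}{2}\|\nabla g(x_{k-1})\|^2 + h(x_{k-1}).
\]
Expanding the square exposes the cross term $\langle \nabla g(x_{k-1}), x_k - x_{k-1}\rangle$, which I would lower-bound through the Lipschitz descent inequality~(\ref{coordinate_lipschitz_constant2}) by $g(x_k) - g(x_{k-1}) - \tfrac{L}{2}\|x_k-x_{k-1}\|^2$. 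After cancellation this collapses to the sufficient-decrease estimate
\[
f(x_k) + \Big(\tfrac{1}{2\gamma} - \tfrac{L}{2}\Big)\|x_k - x_{k-1}\|^2 \le f(x_{k-1}) + \varepsilon_k,
\]
where the coefficient $c := \tfrac{1}{2\gamma} - \tfrac{L}{2}$ is strictly positive because $\gamma < 1/L$.

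Summing this estimate over $k = 1,\dots,m$ telescopes the objective, yielding $c\sum_{k=1}^m \|x_k - x_{k-1}\|^2 \le f(x_0) - \inf f + \sum_{k=1}^m \varepsilon_k$. Under the standing assumption that $f$ is bounded below and using $\sum_k \varepsilon_k < \infty$, the right-hand side stays finite as $m\to\infty$, so $\sum_k \|x_k-x_{k-1}\|^2 < \infty$ and hence $\|x_k - x_{k-1}\| \to 0$. Lipschitz continuity of $\nabla g$ then forces $\|\nabla g(x_k) - \nabla g(x_{k-1})\| \le L\|x_k - x_{k-1}\| \to 0$ as well; these two limits are the only analytic inputs needed downstream.

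For the subdifferential membership I would introduce the \emph{exact} proximal point $x_k^\ast := \textrm{Prox}_{\gamma h}(y_{k-1})$ and set $d_k := \tfrac{1}{\gamma}(y_{k-1} - x_k^\ast)$, which satisfies $d_k \in \partial h(x_k^\ast)$ by exact optimality. A short computation using only the value-inexactness $\phi(x_k) \le \phi(x_k^\ast) + \varepsilon_k$ (with $\phi(z) = \tfrac{1}{2\gamma}\|z - y_{k-1}\|^2 + h(z)$) gives the Bregman bound $h(x_k) - h(x_k^\ast) - \langle d_k, x_k - x_k^\ast\rangle \le \varepsilon_k - \tfrac{1}{2\gamma}\|x_k - x_k^\ast\|^2 \le \varepsilon_k$, which, fed into the exact subgradient inequality at $x_k^\ast$, upgrades $d_k$ to an $\varepsilon_k$-approximate subgradient $d_k \in \partial_{\varepsilon_k} h(x_k)$ in the sense of Definition~\ref{definition0.1}. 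Meanwhile the $\tfrac{1}{\gamma}$-strong convexity of $\phi$ yields $\|x_k - x_k^\ast\| \le \sqrt{2\gamma\varepsilon_k} \to 0$. Writing $\nabla g(x_k) + d_k = \big(\nabla g(x_k) - \nabla g(x_{k-1})\big) + \tfrac{1}{\gamma}(x_{k-1} - x_k) + \tfrac{1}{\gamma}(x_k - x_k^\ast)$, all three groups tend to $0$ by the preceding limits, so $\mathbf{0} \in \lim_{k\to\infty} \nabla g(x_k) + \partial_{\varepsilon_k} h(x_k)$. For AIPG (Algorithm~\ref{algorithm2}) the same scheme applies with the monitor $v_{k+1}$, which is itself a proximal step taken from $x_k$: the acceptance rule guarantees $f(x_{k+1}) \le f(v_{k+1})$, so the identical decrease estimate holds with $v_{k+1} - x_k$ in place of $x_k - x_{k-1}$, forcing $\|v_{k+1} - x_k\| \to 0$ and yielding the approximate subgradient at $v_{k+1}$ verbatim.

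The step I expect to be the main obstacle is the subgradient extraction, because the inexact operator only controls the objective \emph{value} of the proximal subproblem rather than its gradient; passing from ``$x_k$ is an $\varepsilon_k$-approximate minimizer'' to ``$d_k \in \partial_{\varepsilon_k} h(x_k)$'' is exactly where care is required. The route above leans on two convexity features of $\phi$ — strong convexity to bound $\|x_k - x_k^\ast\|$ and convexity of $h$ to produce the exact subgradient at $x_k^\ast$ — so the genuinely delicate case is non-convex $h$, where $\phi$ need not be convex and the $\varepsilon$-subdifferential of Definition~\ref{definition0.1} must be replaced by a limiting/generalized version; there one cannot certify closeness to an exact minimizer and must instead rely on the descent structure enforced by the monitor $v_{k+1}$.
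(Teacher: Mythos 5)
Your argument has the same skeleton as the paper's proof: a sufficient-decrease inequality $f(x_k) \le f(x_{k-1}) - \left(\tfrac{1}{2\gamma}-\tfrac{L}{2}\right)\|x_k-x_{k-1}\|^2 + \varepsilon_k$ (for AIPG, with $v_{k+1}-x_k$ in place of $x_k-x_{k-1}$, using the acceptance rule $f(x_{k+1})\le f(v_{k+1})$), then telescoping plus $\sum_k \varepsilon_k<\infty$ and boundedness below of $f$ to get $\|x_k-x_{k-1}\|\to 0$, and finally exhibiting an element of $\nabla g(x_k)+\partial_{\varepsilon_k}h(x_k)$ whose norm is at most $\left(\tfrac{1}{\gamma}+L\right)\|x_k-x_{k-1}\| + \sqrt{2\varepsilon_k/\gamma}\to 0$. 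Where you genuinely differ is in how that element is produced: the paper imports it as a black box from Lemma 2 of \citep{schmidt2011convergence} --- existence of $f_k$ with $\|f_k\|\le\sqrt{2\gamma\varepsilon_k}$ such that $\tfrac{1}{\gamma}\left(x_{k-1}-x_k-\gamma\nabla g(x_{k-1})-f_k\right)\in\partial_{\varepsilon_k}h(x_k)$ --- whereas you reconstruct that lemma from scratch via the exact proximal point $x_k^\ast$. Your $d_k$ is precisely Schmidt's object with $f_k = x_k^\ast-x_k$, your strong-convexity bound $\|x_k-x_k^\ast\|\le\sqrt{2\gamma\varepsilon_k}$ is his bound on $\|f_k\|$, and your Bregman computation is the proof that the perturbed vector is an $\varepsilon_k$-approximate subgradient. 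So your proposal is a self-contained rendering of the same route; what it buys is transparency about exactly what the key lemma requires.

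That transparency surfaces an issue you flag and the paper does not: the extraction step needs $h$ convex, both for the $\tfrac{1}{\gamma}$-strong convexity of the subproblem and for $d_k\in\partial h(x_k^\ast)$, and indeed Definition \ref{definition0.1} of $\partial_\varepsilon h$ is itself stated only for convex $h$. Yet Theorem \ref{theorem3} asserts the conclusion for non-convex $h$ as well; the paper's proof silently inherits the convexity restriction by citing a lemma proved in the convex setting of \citep{schmidt2011convergence}. In other words, your proof is correct exactly where the paper's proof is correct (convex $h$, $f$ bounded below), and the non-convex-$h$ case, which you honestly label as unresolved rather than claiming it, is a gap in the paper's own argument, not one your route introduces. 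Two minor presentational remarks: the paper works out AIPG in detail and waves at IPG, you do the reverse; and in the AIPG case the membership is obtained at the monitor point $v_{k+1}$ and then transferred to $x_k$ via $\|v_{k+1}-x_k\|\to 0$, a step that is equally informal in the paper and in your sketch, since it implicitly uses a closedness property of the map $x\mapsto \nabla g(x)+\partial_{\varepsilon}h(x)$.
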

\begin{remark}\label{convergence_IPG_remark}
Theorem \ref{theorem3} guarantees that  IPG and AIPG converge to a critical point (or called as stationary point)  after an infinite number of iterations in the  convex or non-convex setting.
\end{remark}
\subsection{Convergence Rates of IPG}
Because  both the functions $g(x)$ and $h(x)$ are possibly non-convex, we cannot directly use $f(x_k) - f(x^*)$ or $\|x_k - x^* \|$ for analyzing the convergence rate of IPG, where $x^*$ is an optimal solution of (\ref{formulation1}).  In this paper, we use $\frac{1}{m}\sum_{k=1}^m \left \| x_k - x_{k-1} \right \|^2$  for analyzing the convergence rate of IPG in the non-convex setting. The detailed reason is provided in Appendix.  Theorem \ref{theorem2} shows  that IPG has the convergence rate $O(\frac{1}{T})$ for the non-convex optimization  when the errors decrease at an appropriate rate, which is exactly the same as  the error-free case (see  discussion in Remark \ref{convergence_IPG_remark}).

\begin{theorem} \label{theorem2}
For $g(x)$ is non-convex, and $h(x)$ is convex or non-convex, we have the  following results for  IPG:
\begin{enumerate}
\item If  $h(x)$ is convex,  we have that
\begin{eqnarray}\label{equ_theorem1}
&& \frac{1}{m}\sum_{k=1}^m \left \| x_k - x_{k-1} \right \|^2 \leq
\\ && \nonumber \frac{1}{m}\left ( 2A_m +  \sqrt{ \frac{1}{\frac{1}{\gamma} - \frac{L}{2}} \left (  f(x_{0}) - f(x^*)  \right )} + \sqrt{B_m}  \right )^2
\end{eqnarray}
where $A_m = { \frac{1}{2}\sum_{k=1}^m {\frac{1}{\frac{1}{\gamma} - \frac{L}{2}} \sqrt{\frac{2  \varepsilon_k}{\gamma} }} }$ and $B_m= {\frac{1}{\frac{1}{\gamma} - \frac{L}{2}} \sum_{k=1}^m  \varepsilon_k }$.
\item If  $h(x)$ is non-convex,  we have that
\begin{eqnarray}\label{equ_theorem2}
&& \frac{1}{m}\sum_{k=1}^m \left \| x_k - x_{k-1} \right \|^2
\\ &\leq& \nonumber {\frac{1}{ m \left (\frac{1}{2\gamma} - \frac{L}{2} \right )}  } \left ( f(x_{0}) -  f(x^*)  +  \sum_{k=1}^m  \varepsilon_k \right )
\end{eqnarray}
\end{enumerate}
\end{theorem}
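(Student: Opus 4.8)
The plan is to establish a single per-iteration descent inequality in each case, telescope it, and then (in the convex case) resolve a residual cross term with an elementary quadratic argument. I would treat the non-convex case first, since it is the cleaner of the two. Writing $y = x_{k-1} - \gamma\nabla g(x_{k-1})$ and using $x_{k-1}$ itself as a feasible comparison point inside the inexact proximal condition gives $\frac{1}{2\gamma}\|x_k-y\|^2 + h(x_k) \le \frac{1}{2\gamma}\|x_{k-1}-y\|^2 + h(x_{k-1}) + \varepsilon_k$. Expanding both squared norms around $x_{k-1}$ cancels the common $\frac{\gamma}{2}\|\nabla g(x_{k-1})\|^2$ term and leaves an inequality in $\|x_k - x_{k-1}\|$ and $\langle \nabla g(x_{k-1}), x_k - x_{k-1}\rangle$; substituting the Lipschitz bound (\ref{coordinate_lipschitz_constant2}) to replace that inner product by $g(x_k)-g(x_{k-1})-\frac{L}{2}\|x_k-x_{k-1}\|^2$ and collecting terms yields $\left(\frac{1}{2\gamma}-\frac{L}{2}\right)\|x_k-x_{k-1}\|^2 \le f(x_{k-1}) - f(x_k) + \varepsilon_k$. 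Summing from $k=1$ to $m$, telescoping the function gap, and using $f(x_m)\ge f(x^*)$ gives (\ref{equ_theorem2}) after dividing by $m\left(\frac{1}{2\gamma}-\frac{L}{2}\right)$.

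For the convex case the same comparison trick only recovers the constant $\frac{1}{2\gamma}-\frac{L}{2}$; to reach the sharper $\frac{1}{\gamma}-\frac{L}{2}$ I would exploit that, when $h$ is convex, the proximal objective $\phi(x)=\frac{1}{2\gamma}\|x-y\|^2+h(x)$ is $\frac{1}{\gamma}$-strongly convex. Let $\bar{x}_k$ be its exact minimizer. Strong convexity at $\bar{x}_k$ gives $\phi(x_k)\ge\phi(\bar{x}_k)+\frac{1}{2\gamma}\|x_k-\bar{x}_k\|^2$ and $\phi(x_{k-1})\ge\phi(\bar{x}_k)+\frac{1}{2\gamma}\|x_{k-1}-\bar{x}_k\|^2$; combined with the inexact condition $\phi(x_k)\le\phi(\bar{x}_k)+\varepsilon_k$, the first yields the crucial distance bound $\|x_k-\bar{x}_k\|\le\sqrt{2\gamma\varepsilon_k}$ and the second gives $\phi(x_k)-\phi(x_{k-1})\le-\frac{1}{2\gamma}\|x_{k-1}-\bar{x}_k\|^2+\varepsilon_k$. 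I would then write $f(x_k)=g(x_k)+h(x_k)$, bound $g(x_k)$ by (\ref{coordinate_lipschitz_constant2}) around $x_{k-1}$, and substitute $h(x_k)=\phi(x_k)-\frac{1}{2\gamma}\|x_k-y\|^2$; expanding $\|x_k-y\|^2$ again cancels the gradient inner products and leaves $f(x_k)\le f(x_{k-1})+\left(\phi(x_k)-\phi(x_{k-1})\right)-\left(\frac{1}{2\gamma}-\frac{L}{2}\right)\|x_k-x_{k-1}\|^2$. Inserting the strong-convexity bound and converting $\|x_{k-1}-\bar{x}_k\|$ to $\|x_{k-1}-x_k\|$ via $\|x_{k-1}-\bar{x}_k\|^2\ge\|x_{k-1}-x_k\|^2-2\sqrt{2\gamma\varepsilon_k}\,\|x_{k-1}-x_k\|$ merges the two quadratics into $\frac{1}{\gamma}-\frac{L}{2}$, produces the cross term $\sqrt{\frac{2\varepsilon_k}{\gamma}}\,\|x_k-x_{k-1}\|$, and retains a residual $\varepsilon_k$, giving $\left(\frac{1}{\gamma}-\frac{L}{2}\right)\|x_k-x_{k-1}\|^2\le f(x_{k-1})-f(x_k)+\sqrt{\frac{2\varepsilon_k}{\gamma}}\,\|x_k-x_{k-1}\|+\varepsilon_k$.

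Summing and telescoping, with $c=\frac{1}{\gamma}-\frac{L}{2}$ and $d_k=\|x_k-x_{k-1}\|$, gives $c\sum_{k=1}^m d_k^2\le\left(f(x_0)-f(x^*)\right)+\sum_{k=1}^m\varepsilon_k+\sum_{k=1}^m\sqrt{\frac{2\varepsilon_k}{\gamma}}\,d_k$. The final step closes this self-referential bound: setting $U=\sqrt{\sum_{k=1}^m d_k^2}$ and using $d_k\le U$ in the cross term gives $U^2\le S+B_m+2A_m U$ with $S=\frac{f(x_0)-f(x^*)}{c}$ and $A_m,B_m$ exactly as defined (note $\frac{1}{c}\sum\sqrt{\frac{2\varepsilon_k}{\gamma}}=2A_m$ and $\frac{1}{c}\sum\varepsilon_k=B_m$); solving this quadratic yields $U\le A_m+\sqrt{A_m^2+S+B_m}$, and $\sqrt{A_m^2+S+B_m}\le A_m+\sqrt{S}+\sqrt{B_m}$ gives $U\le 2A_m+\sqrt{S}+\sqrt{B_m}$, so that squaring and dividing by $m$ produces (\ref{equ_theorem1}). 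I expect the convex per-iteration inequality to be the main obstacle: one must combine the strong-convexity lower bound at $\bar{x}_k$ with the triangle-inequality transfer from $\bar{x}_k$ to $x_k$ so that the two quadratic contributions collapse into exactly $\frac{1}{\gamma}-\frac{L}{2}$ while the error splits precisely into the $\sqrt{2\varepsilon_k/\gamma}\,d_k$ and $\varepsilon_k$ pieces feeding $A_m$ and $B_m$. The concluding quadratic/square-root step is a standard Schmidt--Roux--Bach-type argument and should be routine once the per-step inequality is in hand.
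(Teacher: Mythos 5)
Your proposal is correct, and both of your per-iteration inequalities coincide exactly with the ones the paper derives; the non-convex case in particular is identical to the paper's argument (comparison point $x_{k-1}$ in the inexact prox definition, cancellation of the $\frac{\gamma}{2}\|\nabla g(x_{k-1})\|^2$ terms, Lipschitz bound, telescoping). Where you genuinely diverge is in the two supporting devices for the convex case. First, the paper obtains the inequality $f(x_k) \leq f(x_{k-1}) - \left(\frac{1}{\gamma}-\frac{L}{2}\right)\|x_k-x_{k-1}\|^2 + \sqrt{\frac{2\varepsilon_k}{\gamma}}\|x_k-x_{k-1}\| + \varepsilon_k$ by invoking Lemma 2 of Schmidt et al.\ (2011): there exists $f_k$ with $\|f_k\|\leq\sqrt{2\gamma\varepsilon_k}$ such that $\frac{1}{\gamma}\left(x_{k-1}-x_k-\gamma\nabla g(x_{k-1})-f_k\right)\in\partial_{\varepsilon_k}h(x_k)$, and then applies the $\varepsilon$-subdifferential inequality at $x_{k-1}$; you instead work directly with the $\frac{1}{\gamma}$-strong convexity of the prox objective at its exact minimizer $\bar{x}_k$, getting $\|x_k-\bar{x}_k\|\leq\sqrt{2\gamma\varepsilon_k}$ and transferring the quadratic decrease from $\bar{x}_k$ to $x_k$ by a triangle-inequality estimate. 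This essentially inlines the content of Schmidt et al.'s lemma, making your proof self-contained at the cost of a slightly longer computation. Second, to resolve the cross term $\sum_k\sqrt{\frac{2\varepsilon_k}{\gamma}}\|x_k-x_{k-1}\|$, the paper applies Schmidt et al.'s Lemma 1 (a sequence recursion lemma) to bound each individual increment $\|x_k-x_{k-1}\|$ uniformly by $2A_m+\sqrt{S}+\sqrt{B_m}$, then substitutes this back into the summed inequality and completes a square; your argument bounds $U=\bigl(\sum_k\|x_k-x_{k-1}\|^2\bigr)^{1/2}$ directly via the elementary observation $\|x_k-x_{k-1}\|\leq U$, yielding a single quadratic inequality $U^2\leq S+B_m+2A_mU$ that you solve and relax by subadditivity of the square root. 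Both routes give the identical final bound; yours avoids both external lemmas and the prefix-recursion bookkeeping, while the paper's route additionally delivers the uniform per-increment bound (information your argument does not produce, though the theorem does not need it).
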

\begin{remark}\label{convergence_IPG_remark}
Theorem \ref{theorem2} implies that IPG has the convergence rate $O(\frac{1}{T})$ for the non-convex optimization without errors. If  $\{ \sqrt{\varepsilon_k} \}$ is summable and $h(x)$ is  convex, we can also have that IPG has the convergence rate $O(\frac{1}{T})$ for the non-convex optimization. If  $\{ \varepsilon_k \}$ is summable and $h(x)$ is  non-convex, we can also have that IPG has the convergence rate $O(\frac{1}{T})$ for the non-convex optimization.
\end{remark}
\subsection{Convergence Rates of AIPG }
In this section, based on  the $\varepsilon$-KL property, we  prove that  AIPG  converges in a finite number of iterations, in a  linear rate  or a sublinear rate  at different conditions in the non-convex setting (Theorem \ref{theorem4}), which is exactly the same as  the error-free case \citep{li2015accelerated}.


\begin{theorem} \label{theorem4}
Assume that $g$ is a non-convex function with Lipschitz continuous gradients, $h$ is a proper and lower semicontinuous function. If  the function $f$ satisfies the $\varepsilon$-KL property, $\varepsilon_k = \alpha \left \| v_{k+1} - x_{k} \right \|^2$, $\alpha \geq 0$, $ \frac{1}{2 \gamma} - \frac{L}{2} -  \alpha \geq 0$ and the desingularising function has the form $\varphi(t)=\frac{C}{\theta} t^\theta$ for some $C>0$, $\theta \in (0,1]$, then
\begin{enumerate}
\item If $\theta=1$, there exists $k_1$ such that $f(x_k)= f^* $ for all $k>k_1$ and AIPG terminates in a finite number of steps, where $\lim_{k \rightarrow \infty } f(x_k) = f^*$.
\item If $\theta \in [\frac{1}{2},1)$, there exists $k_2$ such that  for all $k>k_2$
\begin{eqnarray} \label{thm4_1.1}
 f(x_{k}) - \lim_{k\rightarrow \infty} f(x_k)
 \leq  \left ( \frac{d_1C^2}{1+d_1 C^2} \right )^{k-k_2} \left (f(v_{k}) - f^* \right )  \nonumber
\end{eqnarray}
where $d_1 =  \frac{\left  ( \frac{1}{\gamma } + L  + \sqrt{ \frac{2 \alpha}{\gamma} } \right )^2}{  \frac{1}{2 \gamma} - \frac{L}{2} -  \alpha  }$.
\item If $\theta \in (0,\frac{1}{2})$, there exists $k_3$ such that  for all $k>k_3$
\begin{eqnarray} \label{thm4_1.1}
 f(x_{k}) - \lim_{k\rightarrow \infty} f(x_k)
\leq  \left ( \frac{C}{(k-k_3)d_2 (1-2 \theta)} \right )^{\frac{1}{1-2 \theta}} \nonumber
\end{eqnarray}
where
\begin{eqnarray}
\nonumber d_2 =  \min \left  \{ \frac{1}{2d_1 C}, \frac{C}{1-2\theta} \left ( 2^{\frac{2 \theta -1}{2 \theta -2}} -1 \right ) \left (f(v_{0}) - f^* \right )^{2 \theta -1}\right  \}
\end{eqnarray}
\end{enumerate}
\end{theorem}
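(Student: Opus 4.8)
The plan is to reduce the whole statement to two ``engine'' estimates about the monitor iterate $v_{k+1}$---a \emph{sufficient-descent} inequality and a \emph{relative-error} (subgradient) inequality---and then to feed these into the standard Kurdyka--{\L}ojasiewicz recursion, specialised to $\varphi(t)=\frac{C}{\theta}t^{\theta}$, so as to read off the three regimes. First I would establish the sufficient descent. Unfolding the inexact prox that defines $v_{k+1}$ in (\ref{section2_equ1}) and using the feasible comparison point $x=x_k$, I obtain $\frac{1}{2\gamma}\|v_{k+1}-x_k\|^2+\langle\nabla g(x_k),v_{k+1}-x_k\rangle+h(v_{k+1})\le \varepsilon_k+h(x_k)$ after the $\frac{\gamma}{2}\|\nabla g(x_k)\|^2$ terms cancel. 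Combining this with the descent-lemma form (\ref{coordinate_lipschitz_constant2}) of Assumption \ref{NormalLipschitzconstant} to eliminate the inner product, and substituting $\varepsilon_k=\alpha\|v_{k+1}-x_k\|^2$, yields
\[
f(v_{k+1})\le f(x_k)-\Big(\tfrac{1}{2\gamma}-\tfrac{L}{2}-\alpha\Big)\|v_{k+1}-x_k\|^2 .
\]
Writing $\rho=\tfrac{1}{2\gamma}-\tfrac{L}{2}-\alpha\ge 0$ and invoking the selection rule of Algorithm \ref{algorithm2} (which guarantees $f(x_{k+1})\le f(v_{k+1})$), I get $f(x_{k+1})\le f(x_k)-\rho\|v_{k+1}-x_k\|^2$. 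Hence $\{f(x_k)\}$ is non-increasing, converges to some $f^{*}=\lim_k f(x_k)$, and $\sum_k\|v_{k+1}-x_k\|^2<\infty$, so $\|v_{k+1}-x_k\|\to 0$.

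Next I would prove the relative-error bound $\mathrm{dist}\big(\mathbf{0},\,\nabla g(v_{k+1})+\partial_{\varepsilon_k}h(v_{k+1})\big)\le b\,\|v_{k+1}-x_k\|$ with $b=\tfrac{1}{\gamma}+L+\sqrt{2\alpha/\gamma}$. Exact optimality of the true proximal point $v^{*}_{k+1}$ contributes a subgradient of norm $(\tfrac{1}{\gamma}+L)\|v_{k+1}-x_k\|$ via the $\tfrac1\gamma(v^{*}_{k+1}-x_k)$ and $\nabla g(v^{*}_{k+1})-\nabla g(x_k)$ terms; the inexactness contributes the extra $\sqrt{2\alpha/\gamma}$. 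Concretely, $\tfrac1\gamma$-strong convexity of the prox objective converts the $\varepsilon_k$-accuracy into the displacement bound $\|v_{k+1}-v^{*}_{k+1}\|\le\sqrt{2\gamma\varepsilon_k}=\sqrt{2\gamma\alpha}\,\|v_{k+1}-x_k\|$, and transporting the optimality condition to $v_{k+1}$ through the $\varepsilon$-approximate subdifferential of Definition \ref{definition0.1} produces the $\tfrac{1}{\gamma}\sqrt{2\gamma\varepsilon_k}=\sqrt{2\alpha/\gamma}\,\|v_{k+1}-x_k\|$ correction. The key algebraic observation is that the two engine constants assemble precisely into $d_1=b^2/\rho$.

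Finally I would run the KL recursion. Since $f(x_k)$ converges and $\|v_{k+1}-x_k\|\to 0$, any limit point $\bar u$ of the iterates is critical (consistent with Theorem \ref{theorem3}), the iterates eventually lie in the neighbourhood $U$ of Definition \ref{definition1}, and $f^{*}=f(\bar u)$. Abbreviating $a_k=f(x_k)-f^{*}$ and applying the $\varepsilon$-KL inequality (\ref{definition1_1.2}) at $u=v_{k+1}$ with $\varphi'(t)=Ct^{\theta-1}$ together with the relative-error bound gives $\|v_{k+1}-x_k\|\ge \tfrac{1}{Cb}\,(f(v_{k+1})-f^{*})^{1-\theta}\ge \tfrac{1}{Cb}\,a_{k+1}^{1-\theta}$ (using $f(x_{k+1})\le f(v_{k+1})$). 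Substituting into the descent inequality produces the master recursion
\[
a_k-a_{k+1}\ \ge\ \frac{1}{d_1C^2}\,a_{k+1}^{\,2(1-\theta)} .
\]
The three cases are now mechanical. If $\theta=1$ the right-hand side is the constant $1/(d_1C^2)$ whenever $a_{k+1}>0$, which contradicts $\|v_{k+1}-x_k\|\to 0$ unless $a_k$ reaches $0$ in finitely many steps, giving finite termination. If $\theta\in[\tfrac12,1)$ then $2(1-\theta)\le 1$, so as $a_{k+1}\to 0$ eventually $a_{k+1}^{2(1-\theta)}\ge a_{k+1}$, whence $a_{k+1}\le \tfrac{d_1C^2}{1+d_1C^2}\,a_k$, the claimed linear rate. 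If $\theta\in(0,\tfrac12)$ the exponent $2(1-\theta)>1$ and the classical lemma for recursions $a_k-a_{k+1}\ge \beta a_{k+1}^{p}$ with $p>1$ yields $a_k=O\big(k^{-1/(1-2\theta)}\big)$, the explicit $d_2$ emerging from comparing consecutive reciprocal powers of $a_k$.

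The hard part will be the relative-error step: converting $\varepsilon_k$-optimality of the proximal subproblem into a clean membership in the $\varepsilon_k$-approximate subdifferential of Definition \ref{definition0.1}, so that exactly $b=\tfrac1\gamma+L+\sqrt{2\alpha/\gamma}$ (hence $d_1=b^2/\rho$) appears. This is delicate because an $\varepsilon$-minimiser of a strongly convex function is only a quadratic, not an affine, under-estimator, and because $h$ is merely proper and lower semicontinuous here while Definition \ref{definition0.1} is stated for convex $h$; reconciling the two, and ensuring the iterates genuinely enter the KL neighbourhood with $f^{*}=f(\bar u)$, is where the real work lies. The remaining steps---the descent inequality and the three-way case split of the master recursion---are routine once these two engine estimates are in place.
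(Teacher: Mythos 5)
Your proposal is correct and follows essentially the same route as the paper's own proof: the same sufficient-descent bound $f(v_{k+1})\le f(x_k)-\left(\tfrac{1}{2\gamma}-\tfrac{L}{2}-\alpha\right)\|v_{k+1}-x_k\|^2$, the same relative-error bound with constant $\tfrac{1}{\gamma}+L+\sqrt{2\alpha/\gamma}$ (the paper obtains it by citing Lemma 2 of Schmidt et al., you re-derive it via strong convexity of the prox objective), the same application of a uniformized $\varepsilon$-KL property at the monitor iterates over the compact set of limit points (the paper's Lemma \ref{lemma1}, whose necessity you correctly flag as the hard part), and the identical master recursion governed by $d_1C^2$ followed by the standard three-case analysis, which the paper simply outsources to Theorem 3 of \cite{li2015accelerated}. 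The only cosmetic difference is that you run the recursion on $a_k=f(x_k)-f^*$ while the paper uses $r_k=f(v_k)-f^*$; both give the stated rates, and the convexity gap you flag in Definition \ref{definition0.1} (the $\varepsilon$-subdifferential is defined only for convex $h$) is an issue the paper itself glosses over rather than a defect of your plan.
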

\begin{remark}\label{convergence_AIPG_remark}
Theorem \ref{theorem4} implies that AIPG converges in a finite number of iterations when $\theta=1$, in a  linear rate when  $\theta \in [\frac{1}{2},1)$ and at least a sublinear rate when $\theta \in (0,\frac{1}{2})$.
\end{remark}
\section{Experiments}\label{experiments}
We first give the experimental setup, then
present the implementations of three non-convex applications with the corresponding experimental results.
\subsection{Experimental Setup}
\subsubsection{Design of Experiments}
To validate the effectiveness  of our inexact proximal gradients methods (\emph{i.e.},  IPG, AIPG and
nmAIPG), we apply them to solve three  representative non-convex learning problems  as follows.
\begin{enumerate}[leftmargin=0.2in]
\item \textbf{Robust OSCAR:} Robust OSCAR is a robust version of OSCAR method \citep{zhong2012efficient}, which is a feature selection model with the capability to automatically detect feature group structure.  The function $g(x)$ is non-convex, and the   function $h(x)$ is convex. {Zhong and Kwok} \citep{zhong2012efficient} proposed a fast iterative group merging  algorithm for exactly solving the proximal operator. 
\item \textbf{Social Link Prediction:} Given  an incomplete matrix $M$ (user-by-user) with each entry $M_{ij}\in \{ +1,-1 \}$, social link prediction  is to recover the matrix $M$ (\emph{i.e.} predict the potential social links or friendships between users) with low-rank constraint. The function $g(x)$ is convex, and the   function $h(x)$ is non-convex. The low-rank proximal operator can be solved exactly by the Lanczos method \citep{larsen1998lanczos}. 
\item \textbf{Robust Trace Lasso:} Robust trace Lasso  is a robust version of trace Lasso \citep{grave2011trace}. The   function $g(x)$ is non-convex, and the   function $h(x)$ is convex. To the best of our knowledge, there  is still no   proximal gradient algorithm  for solving trace Lasso. 
\end{enumerate}
We also summarize these three non-convex learning problems in Table \ref{table:Application}. To show the advantages of our  inexact proximal gradients methods, we compare the convergence rates and the running time for our  inexact proximal gradients methods and the exact proximal gradients methods (\emph{i.e.},  PG, APG and
nmAPG).
 \begin{table}[htbp]
 \center
 \caption{Three typical  learning applications. (C, NC and PO are the  abbreviations of convex, non-convex and proximal operator, respectively.)}
\begin{tabular}{c|c|c|c}
\hline
\textbf{Application}  &  $g(x)$ &    $h(x)$  &  \textbf{Exact PO}  \\ \hline
Robust OSCAR  &  NC  & C  & Yes  \\
Link Prediction  & C   &  NC & Yes  \\
Robust Trace Lasso  & NC   & C  & No
\\ \hline
\end{tabular}
\label{table:Application}
\end{table}

\subsubsection{Datasets}
Table \ref{table:datasets} summarizes the six  datasets used  in our experiments.
 Specifically, the  Cardiac and Coil20 datasets are for the robust OSCAR application. The Cardiac dataset was  collected by us from   hospital, which is to predict the area of
the left  ventricle, and is encouraged to find the homogenous groups of features. The Coil20 dataset is from \url{http://www1.cs.columbia.edu/CAVE/software/softlib/coil-20.php}.   The Soc-sign-epinions and Soc-Epinions1 datasets are the social network data for the social link prediction application, where each row corresponds to a node, and each collum corresponds to an edge. They are from \url{http://snap.stanford.edu/data}. The  GasSensorArrayDrift and YearPredictionMSD datasets are for the robust trace Lasso application. They are from the UCI benchmark repository  \url{https://archive.ics.uci.edu/ml/datasets.html}.

\begin{table}[htbp]
\scriptsize
\center \caption{The  datasets used in the experiments.}
\setlength{\tabcolsep}{0.7mm}
\begin{tabular}{c|c|c|c}
\hline \textbf{Application} & \textbf{Dataset} &   \textbf{Sample size}    & \textbf{Attributes} \\
 \hline
 \multirow{2}{1.5cm}{Robust OSCAR}  & Cardiac & 3,360   & 800  \\
 & Coil20 & 1,440  & 1,024  \\  \hline
 \multirow{2}{1.5cm}{Social Link Prediction} & Soc-sign-epinions(SSE) & 131,828  & 841,372  \\
 & Soc-Epinions1(SE) & 75,879 & 508,837  \\  \hline
 \multirow{2}{1.5cm}{Robust Trace Lasso}  & GasSensorArrayDrift(GS) & 13,910 & 128 \\
 & YearPredictionMSD(YP)  & 51,5345 & 90 \\
\hline
\end{tabular}
\label{table:datasets}
\end{table}
\begin{figure*}[th]
	\centering
	\begin{subfigure}[b]{0.245\textwidth}
		\centering
		\includegraphics[width=1.75in]{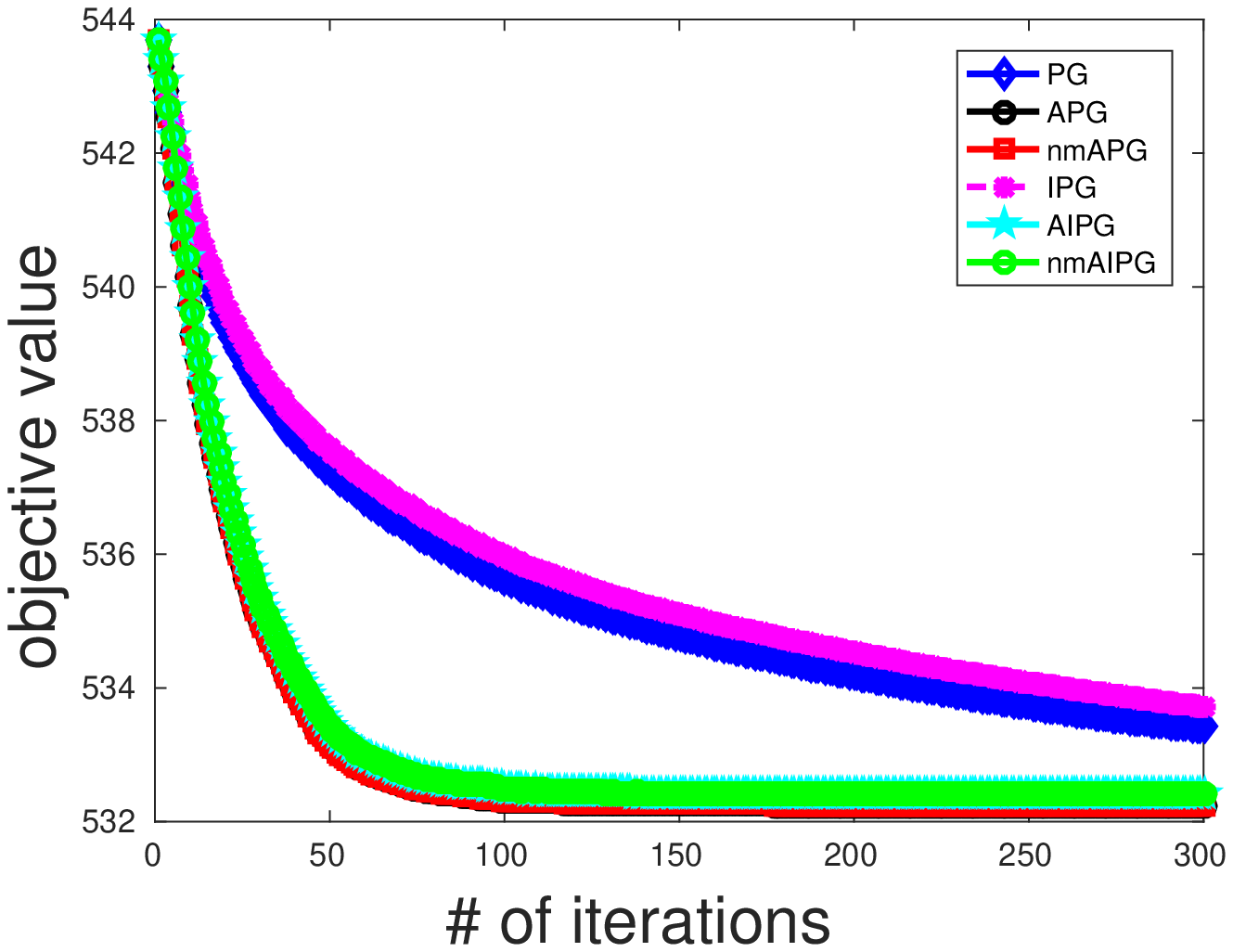}
		\caption{Cardiac: obj. vs. iteration}
\label{OSCAR_MNIST_iteration}
	\end{subfigure}
	\begin{subfigure}[b]{0.245\textwidth}
		\centering
		\includegraphics[width=1.75in]{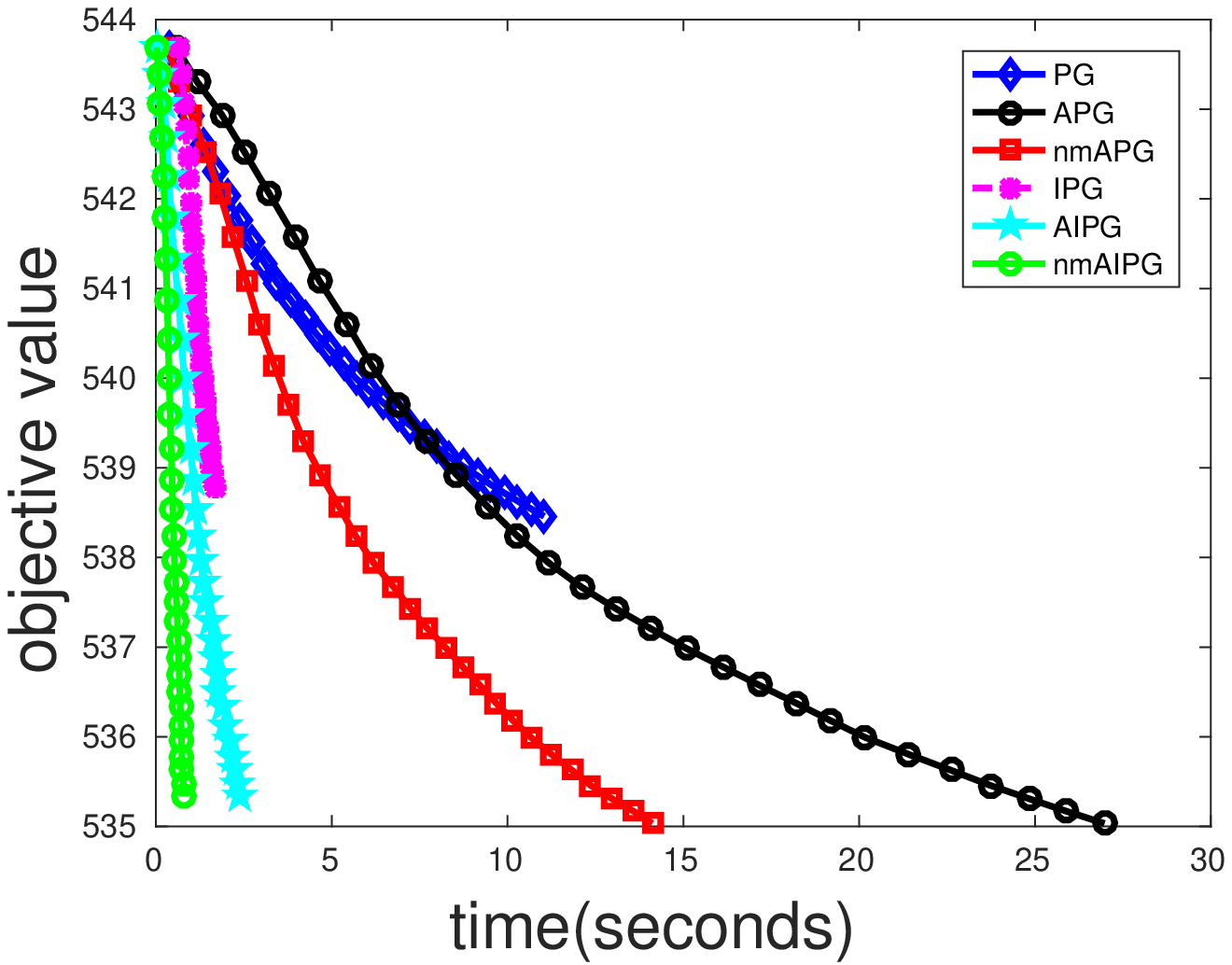}
		\caption{Cardiac: obj. vs. time}
\label{OSCAR_MNIST_time}
	\end{subfigure}
	\begin{subfigure}[b]{0.245\textwidth}
		\centering
		\includegraphics[width=1.75in]{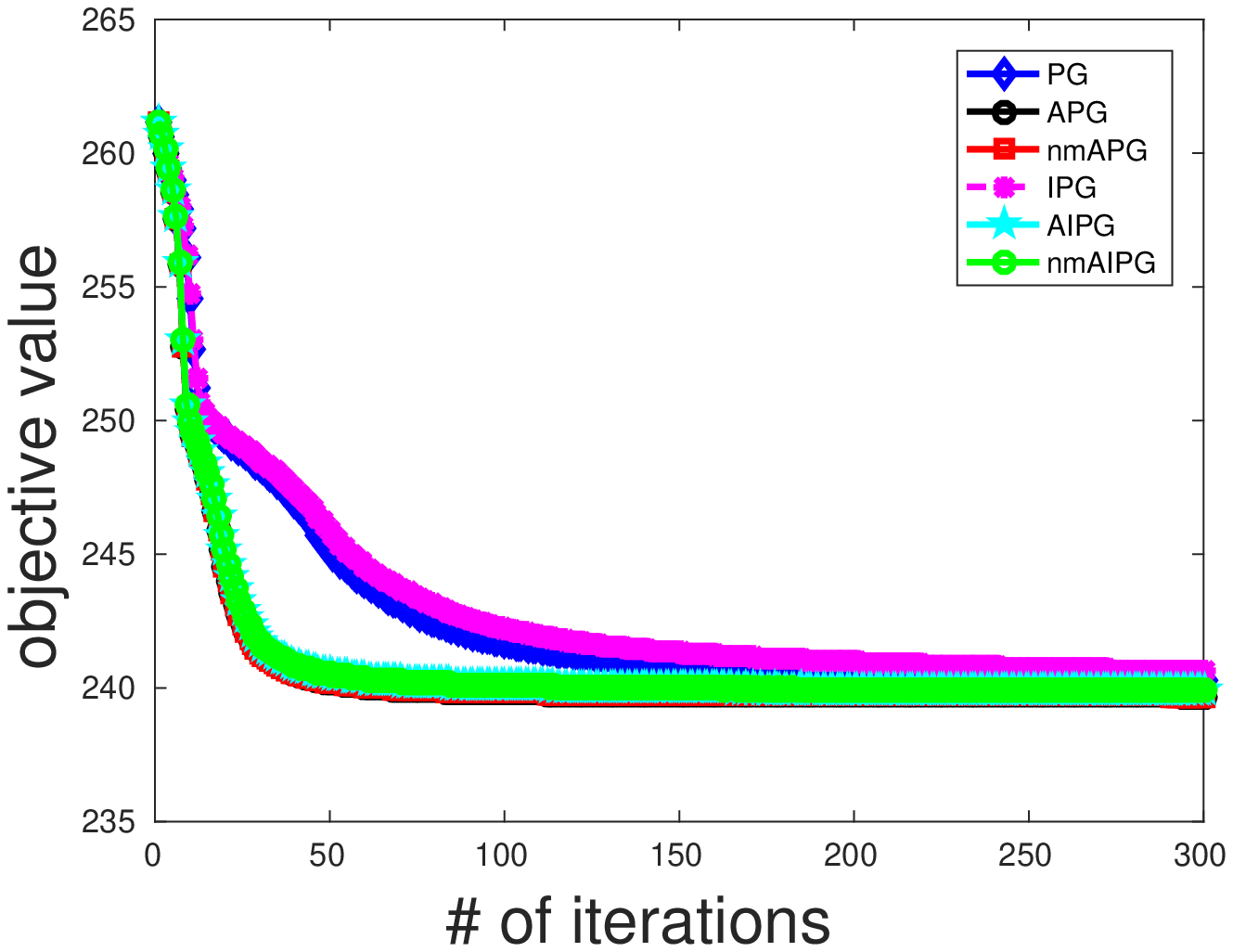}
		\caption{Coil20: obj. vs. iteration}
\label{OSCAR_coil20_iteration}
	\end{subfigure}
	\begin{subfigure}[b]{0.245\textwidth}
		\centering
		\includegraphics[width=1.75in]{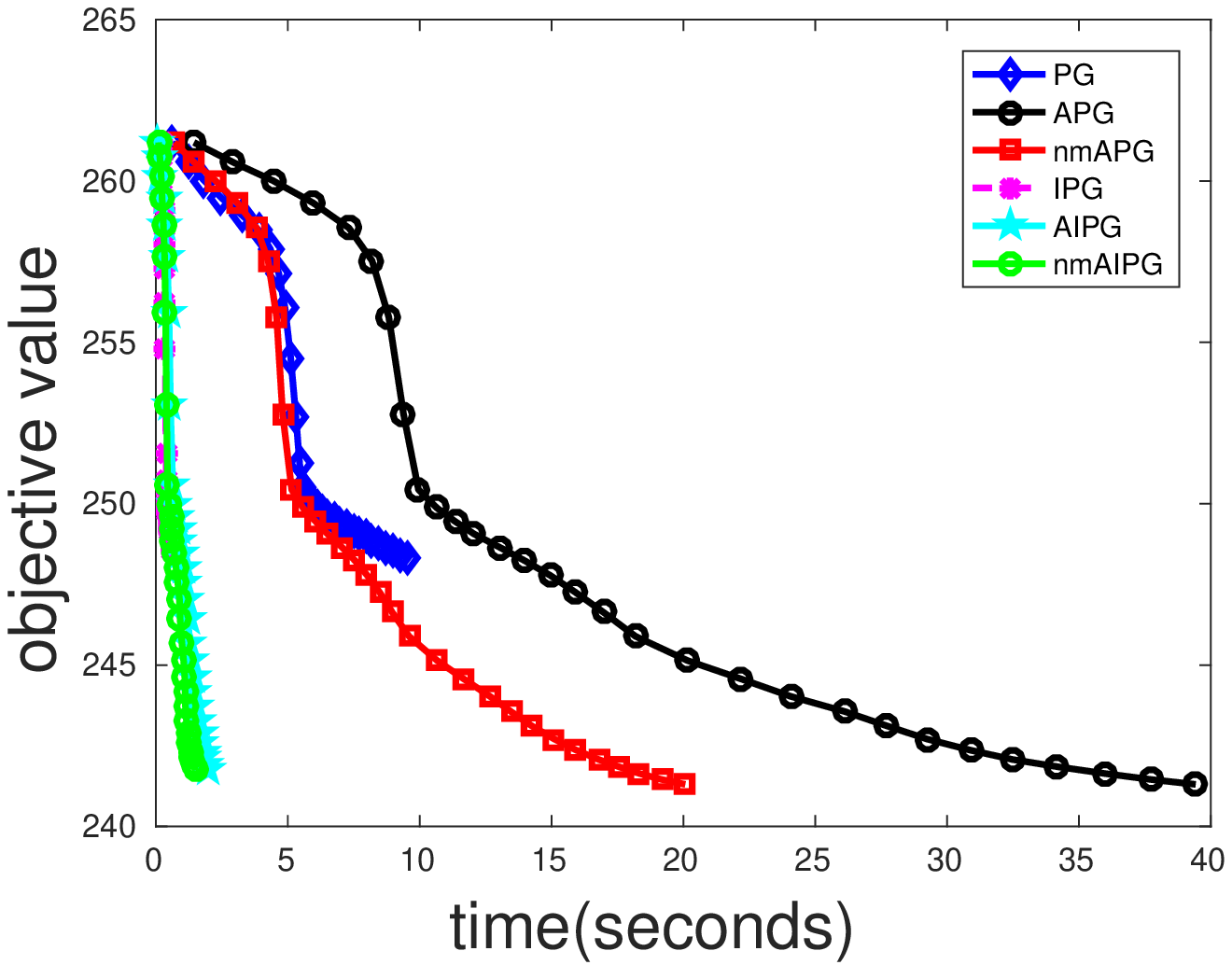}
		\caption{Coil20: obj. vs time}
\label{OSCAR_coil20_time}
	\end{subfigure}

	\begin{subfigure}[b]{0.245\textwidth}
		\centering
		\includegraphics[width=1.75in]{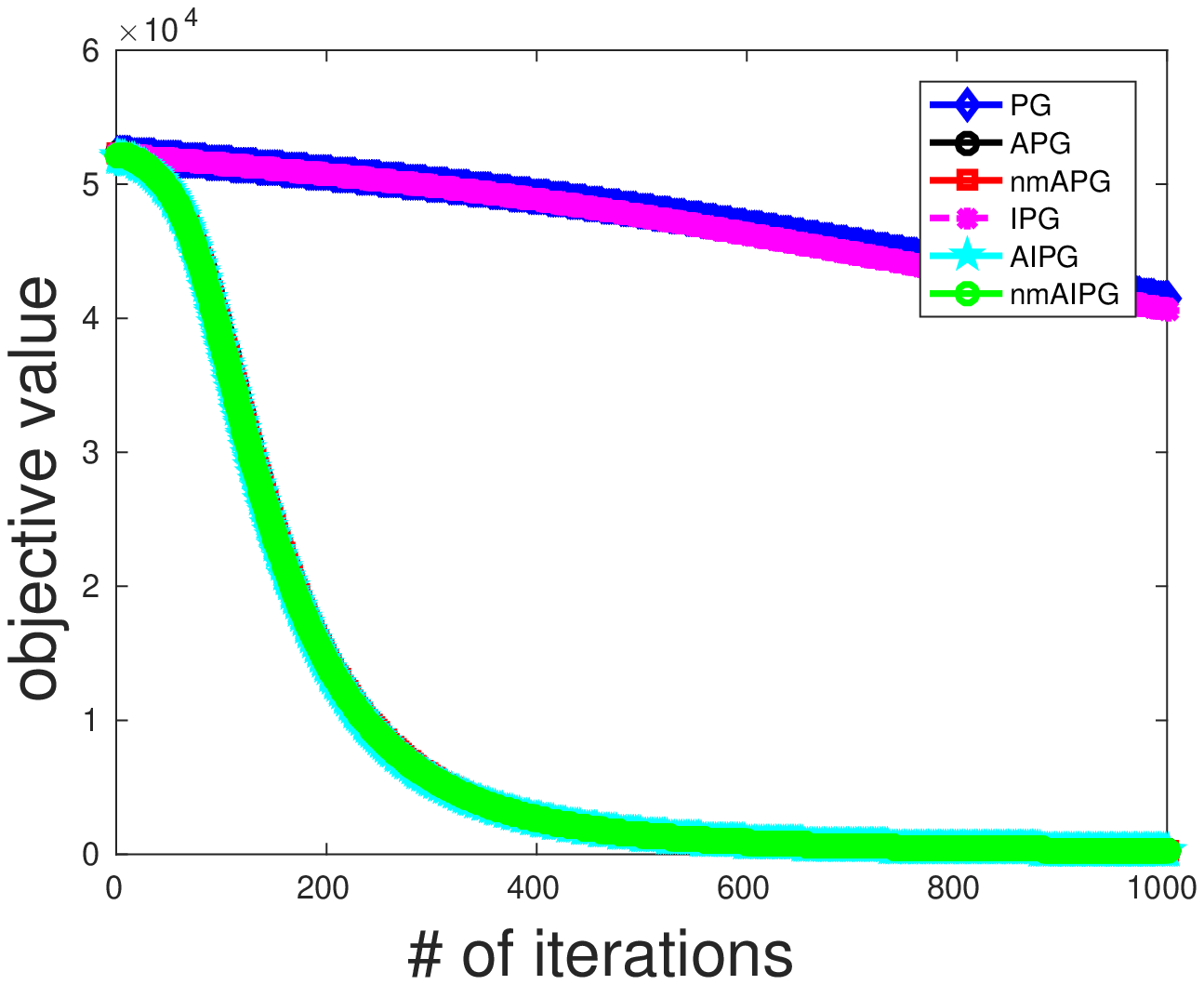}
		\caption{SSE: obj. vs. iteration}
\label{link_Epinions_iteration}
	\end{subfigure}
	\begin{subfigure}[b]{0.245\textwidth}
		\centering
		\includegraphics[width=1.75in]{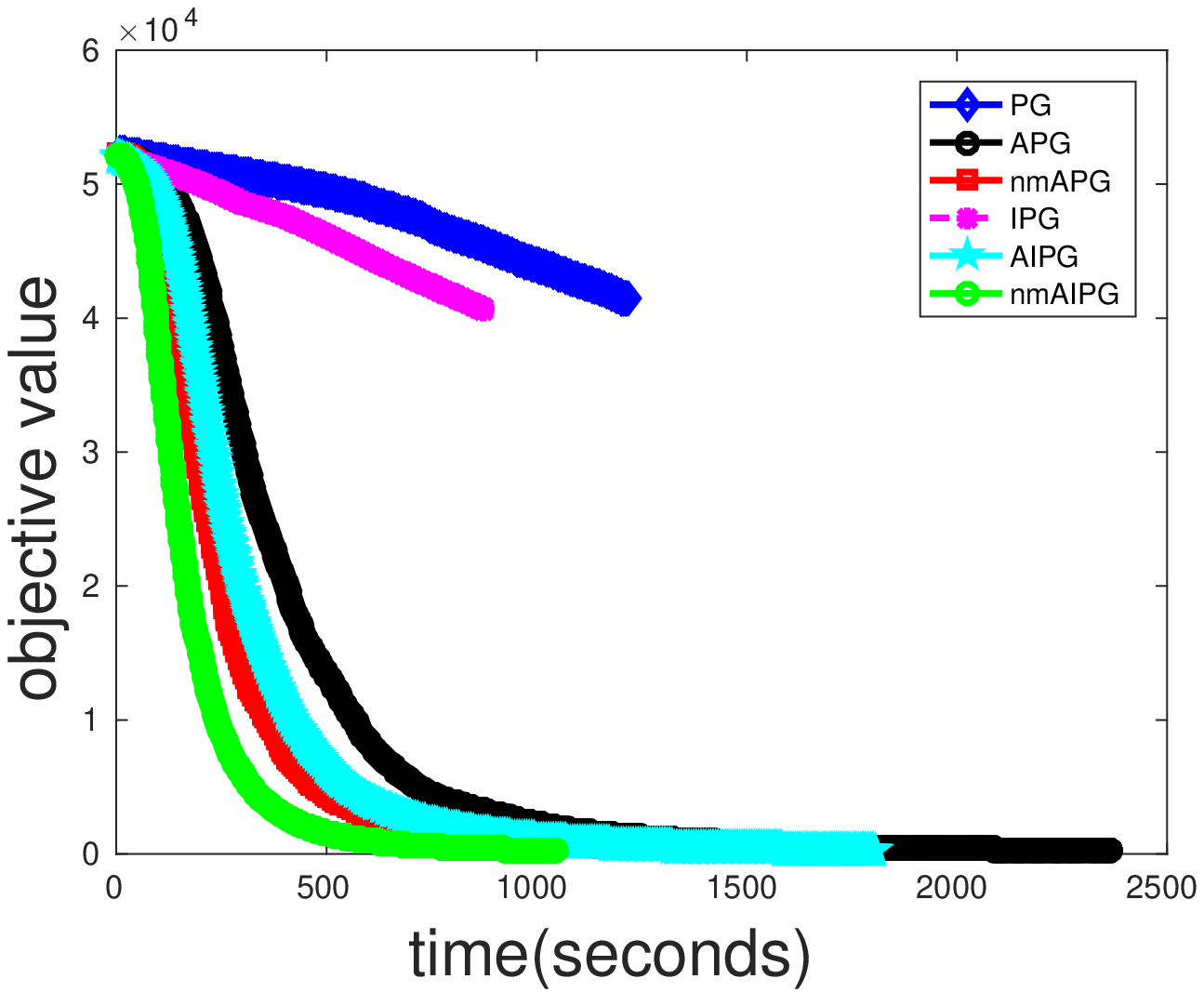}
		\caption{SSE: obj. vs. time}
\label{link_Epinions_time}
	\end{subfigure}
	\begin{subfigure}[b]{0.245\textwidth}
		\centering
		\includegraphics[width=1.8in]{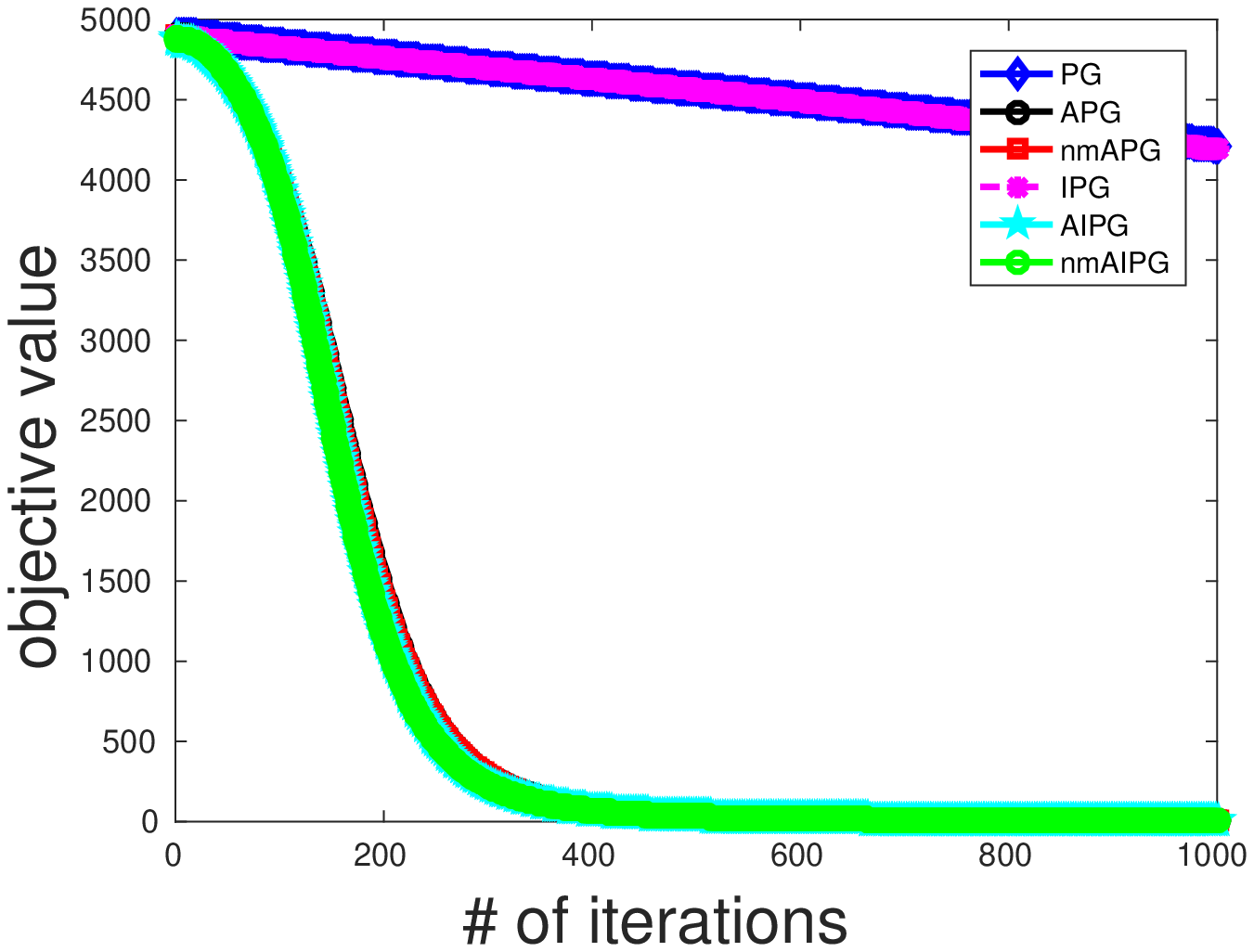}
		\caption{SE: obj. vs. iteration}
\label{link_trust_iteration}
	\end{subfigure}
	\begin{subfigure}[b]{0.245\textwidth}
		\centering
		\includegraphics[width=1.8in]{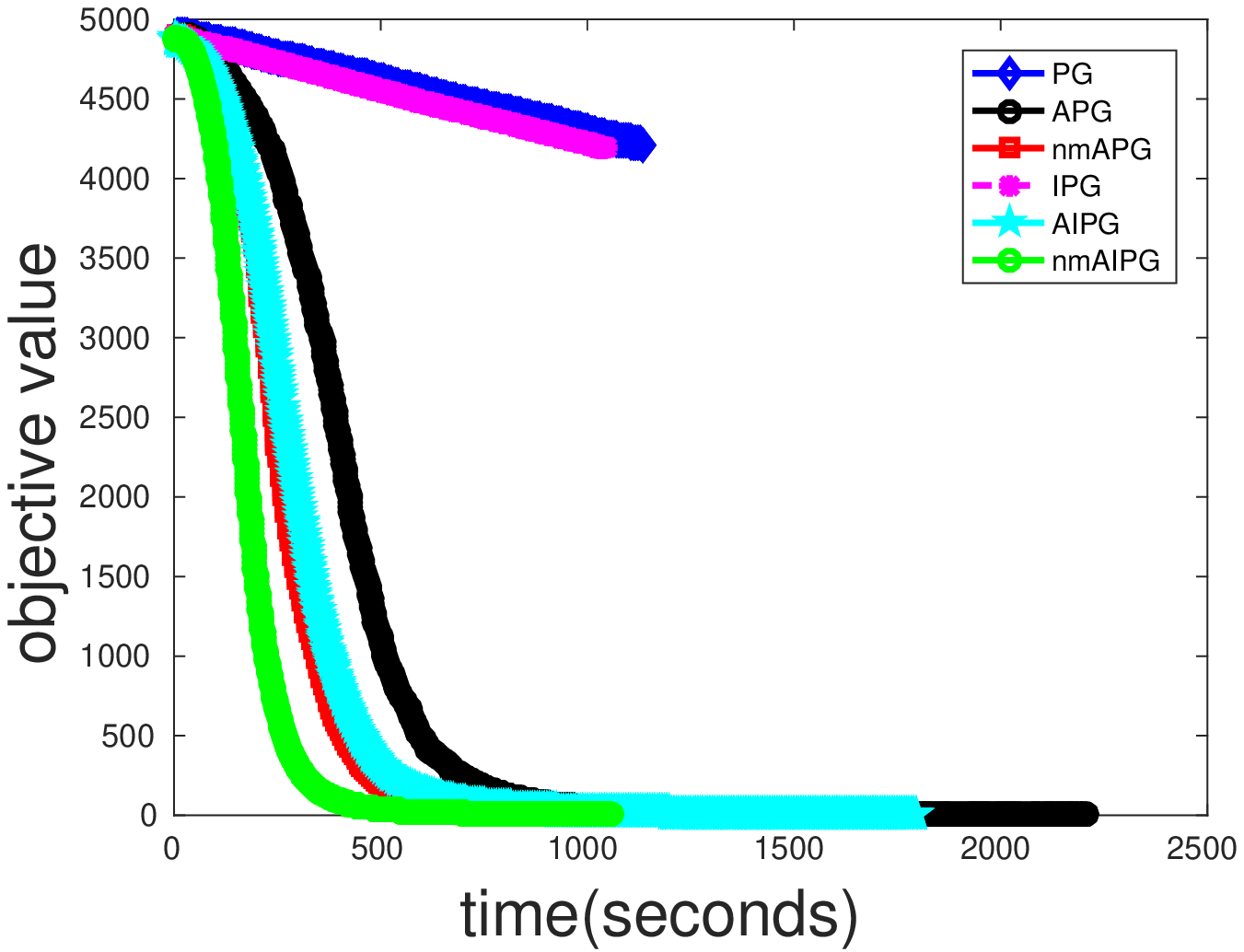}
		\caption{SE: obj. vs. time}
\label{link_trust_time}
	\end{subfigure}

	\begin{subfigure}[b]{0.245\textwidth}
		\centering
		\includegraphics[width=1.75in]{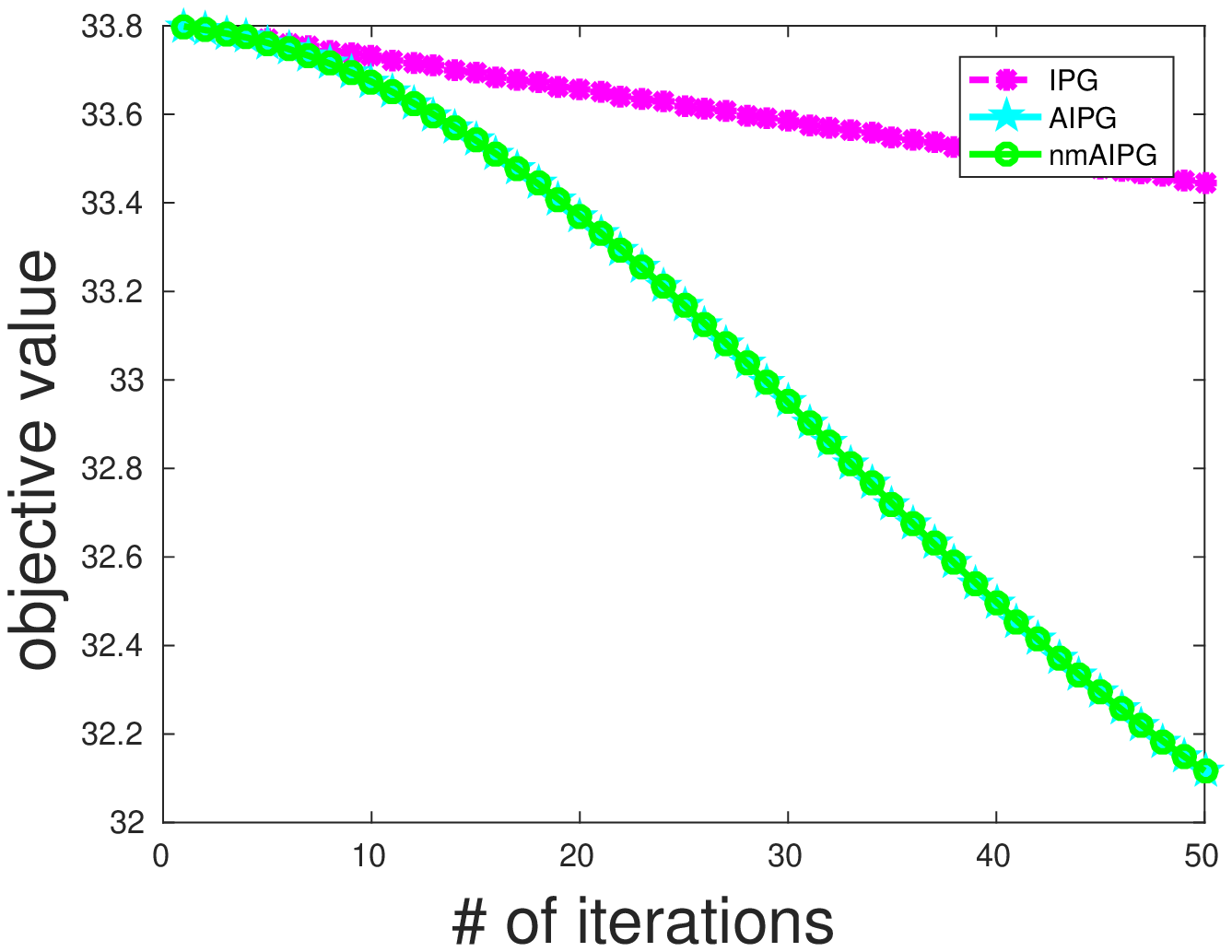}
		\caption{GS: obj. vs. iteration}
\label{GS_iteration}
	\end{subfigure}
	\begin{subfigure}[b]{0.245\textwidth}
		\centering
		\includegraphics[width=1.75in]{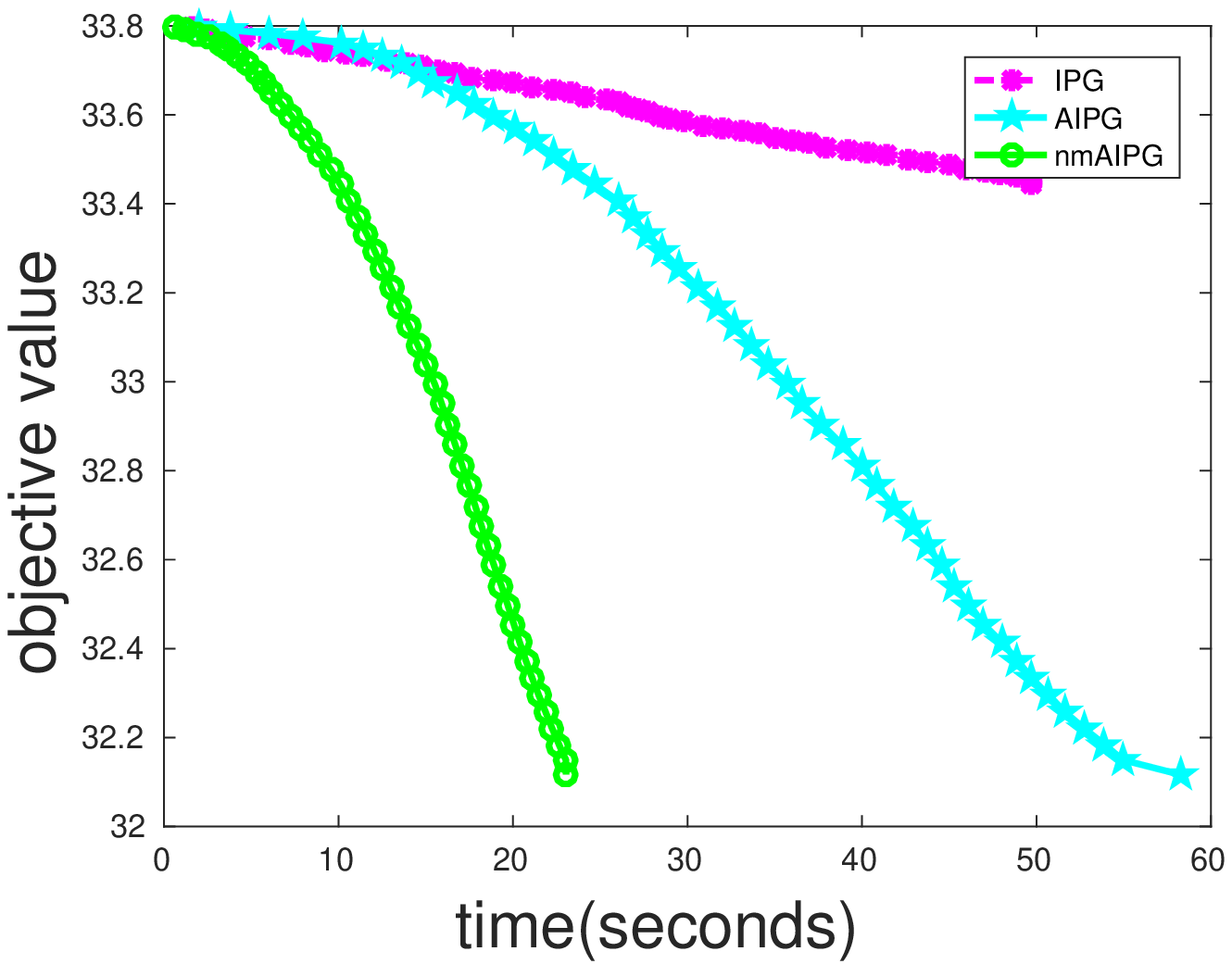}
		\caption{GS: obj. vs. time}
	\end{subfigure}
	\begin{subfigure}[b]{0.245\textwidth}
		\centering
		\includegraphics[width=1.75in]{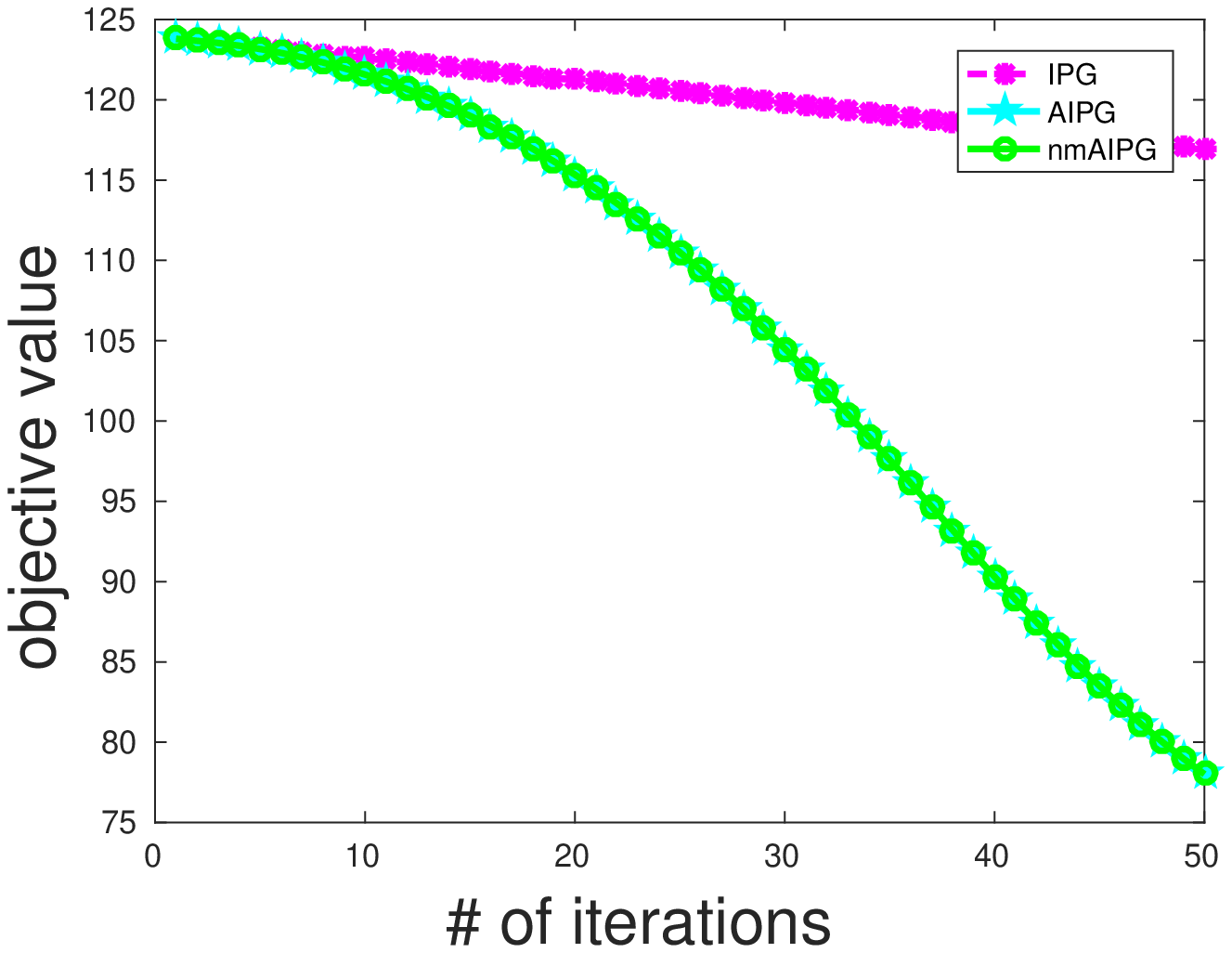}
		\caption{YP: obj. vs. iteration}
	\end{subfigure}
	\begin{subfigure}[b]{0.245\textwidth}
		\centering
		\includegraphics[width=1.75in]{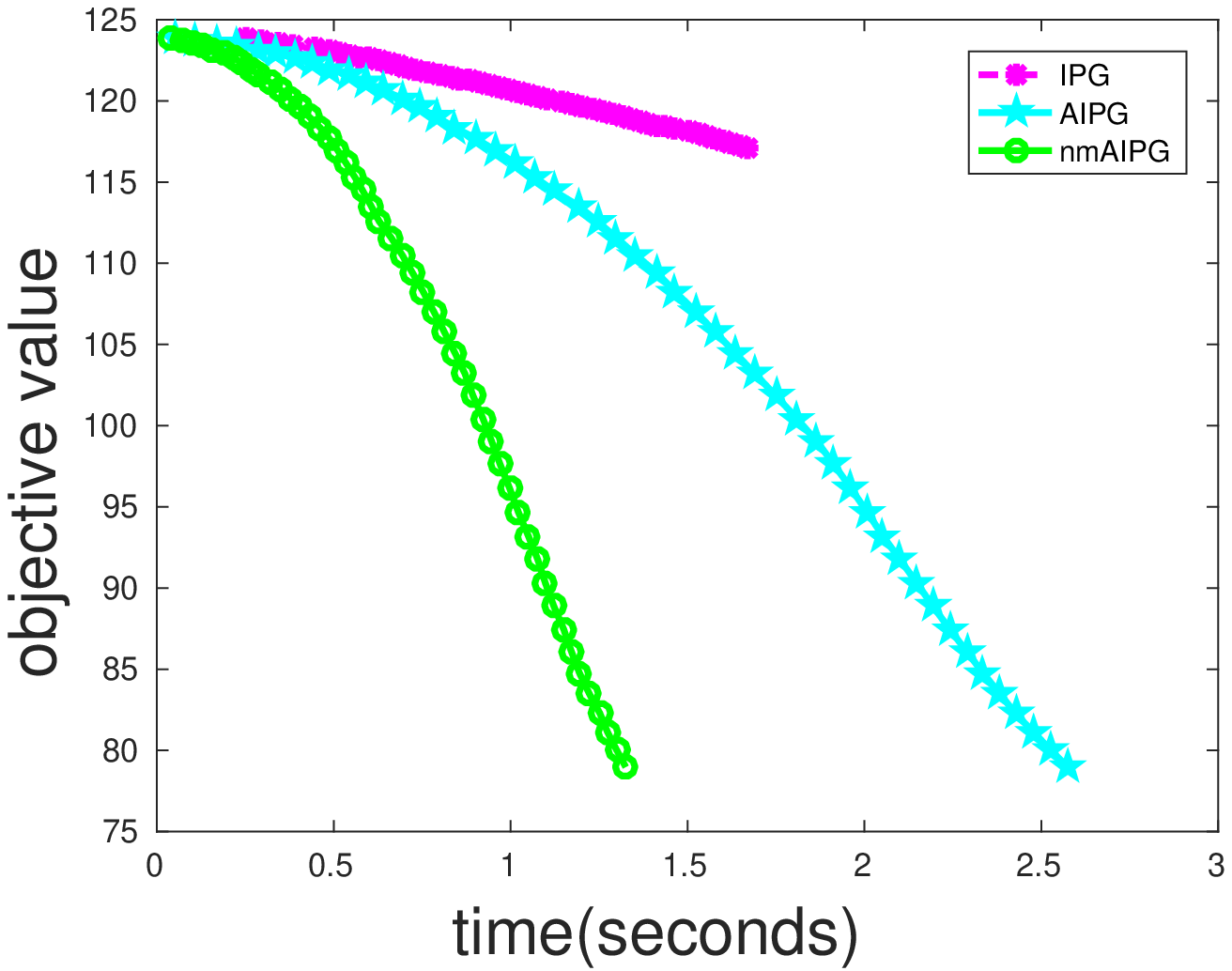}
		\caption{YP: obj. vs. time}
\label{YP_time}
	\end{subfigure}
\setlength{\abovecaptionskip}{-0.1pt}
\caption{Comparison of convergence speed  for different methods. (a-d):  Robust OSCAR. (e-h): Link prediction. (i-l) Robust trace LASSO.}
\label{link}
\end{figure*}
\subsection{ Implementations and  Exerimental Results}\label{implentation}
\subsubsection{Robust OSCAR}
For the  robust regression, we replace the square loss originally used in OSCAR  with the  correntropy induced loss \citep{he2011maximum}. Thus, we consider the robust OSCAR with the functions $g(x )$ and $h(x )$ as follows.
\begin{eqnarray}\label{formulation5.1.1}
g(x ) &=&
 \frac{\sigma^2}{2}\sum_{i=1}^{l}\left (1 -e^{ - \frac{ (y_i -  X_i^Tx)^2 }{\sigma^2 }}\right ) \,,
\\  h(x ) &=& \lambda_1  \| x \|_1 +\lambda_2 \sum_{i<j} \max \{ | x_i | , |x_j  | \}\,,
\end{eqnarray}
where  $\lambda_1\geq 0$ and $\lambda_2\geq 0$ are two  regularization parameters.
 For exact proximal gradients algorithms, {Zhong and Kwok} \cite{zhong2012efficient} proposed a fast iterative group merging  algorithm for exactly solving the proximal subproblem. We propose a subgradient algorithm to approximately solve the proximal subproblem. Let $o(j) \in \{1,2,\cdots,N \}$ denote the order  of $| x_j |$ \footnote{Here, $x_j$ denotes the $j$-th coordinate of the vector $x$.} among $\{| x_{1} |,| x_{2} |,\cdots,| x_{N} | \}$ such that if    $o(j_1)<o(j_2) $, we have
$|x_{j_1 } | \leq  | x_{j_2 } |$. Thus, the  subgradient of $h(x)$  can be computed as $\partial h(x) = \sum_{j=1}^{N}\left (\lambda_1 + \lambda_2 (o(j)-1)  \right  ) \partial | x_j |$.
The  subgradient algorithm  is omitted here due to the space limit.
We implement our IPG,  AIPG and nmAIPG methods for robust OSCAR  in Matlab. We also implement PG, APG and nmAPG in Matlab.

 Figures \ref{OSCAR_MNIST_iteration}  and \ref{OSCAR_coil20_iteration} show the convergence rates of the objective value  vs. iteration for the exact and inexact proximal gradients methods. The results confirm that exact and inexact proximal gradients methods have the
same convergence rates. The convergence rates of  exact and inexact accelerated methods  are faster than the ones of the basic methods (PG and IPG). Figures  \ref{OSCAR_MNIST_time}  and \ref{OSCAR_coil20_time} show the convergence rates of the objective value  vs. the running time for the exact and inexact proximal gradients methods. The  results show that our inexact methods are \textbf{significantly faster} than the exact  methods. When the dimension increases,  we can even achieve more than \textbf{100 folds speedup}. This is because our subgradient based  algorithm for  approximately solving the proximal subproblem  is much efficient than  the   projection algorithm for exactly solving the proximal subproblem \cite{zhong2012efficient}.

\subsubsection{Social Link Prediction}
In social link prediction problem, we hope to predict the new potential social links or friendships between online users. Given  an incomplete matrix $M$ (user-by-user) with each entry $M_{ij}\in \{ +1,-1 \}$, social link prediction  is to recover the matrix $M$ with low-rank constraint. Specifically, social link prediction considers the function $f(X)$  as following:
\begin{eqnarray}\label{formulation5.1.1}
\min_{X} && \underbrace{\frac{1}{2}\sum_{(i,j) \in \Omega}  \log (1+ \exp (-X_{ij} M_{ij}))}_{g(X)}
\\  \nonumber s.t.&&  \textrm{rank}(X) \leq r \,,
\end{eqnarray}
where  $\Omega$  is a set of $(i,j)$ corresponding to the entries of $M$ which are observed, $\lambda$ is a regularization parameter.
 The proximal operator  $ \min_{\textrm{rank}(X) \leq r}  \left \| X- \left (X_{t-1} - \gamma  \nabla g(X_{t-1}) \right ) \right \|^2
$ can be solved by the \textrm{rank}-$r$ singular value decomposition (SVD) \citep{jain2010guaranteed}. The  \textrm{rank}-$r$ SVD can be solved exactly by the  Lanczos method \citep{larsen1998lanczos}, and also can be solved approximately  by the power method  \cite{halko2011finding,journee2010generalized}.  We implement our IPG,  AIPG and nmAIPG methods for social link prediction  in Matlab. We also implement PG, APG and nmAPG in Matlab.

Figures \ref{link_Epinions_iteration}  and \ref{link_trust_iteration} show  the convergence rates of the objective value  vs. iteration for the exact and inexact proximal gradients methods. The results confirm that exact and inexact proximal gradients methods have the
same convergence rates.  Figures \ref{link_Epinions_time} and \ref{link_trust_time} illustrate the convergence rates of the
objective value vs.  running time for the exact and inexact proximal gradients methods. The  results verify that our inexact methods are \textbf{faster} than the exact  methods.

\subsubsection{Robust Trace Lasso}
Robust trace Lasso  is a robust version of trace Lasso \citep{grave2011trace,bach2008consistency}. Same with robust OSCAR, we  replace the square loss originally used in trace Lasso  with the  correntropy induced loss \citep{he2011maximum} for robust regression.
Thus, we consider the robust trace Lasso  with the functions $g(x )$ and $h(x )$ as following:
\begin{eqnarray}\label{formulation5.1.1}
g(x ) &=&
 \frac{\sigma^2}{2}\sum_{i=1}^{l}\left (1 -e^{ - \frac{ (y_i -  X_i^Tx)^2 }{\sigma^2 }}\right )
\\  h(x ) &=& \lambda \left \| X \textrm{Diag}(x) \right \|_* \,,
\end{eqnarray}
where $\left \| \cdot \right \|_*$  is the trace norm, $\lambda$ is a regularization parameter.
To the best of our knowledge, there is still no  algorithm to  exactly solve the proximal subproblem. To implement our IPG and AIPG,  we propose a subgradient algorithm  to approximately solve the proximal subproblem. Specifically, the subgradient of the trace Lasso regularization $\left \| X \textrm{Diag}(x) \right \|_* $ can be computed by Theorem \ref{thm_subg_trace} which is originally provided in \cite{bach2008consistency}. We implement our IPG,  AIPG and nmAIPG methods for  robust trace Lasso in Matlab.
\begin{theorem}\label{thm_subg_trace}
Let $U \textrm{Diag}(s)V^T$ be the singular value decomposition of $X \textrm{Diag}(x)$. Then, the subgradient of the trace Lasso regularization $\left \| X \textrm{Diag}(x) \right \|_* $ is given by
\begin{eqnarray}
&& \partial \left \| X \textrm{Diag}(x) \right \|_* = \{ \textrm{Diag}\left (X^T (UV^T+M) \right ) :
\\ \nonumber  && \quad  \quad  \quad  \quad     \|M\|_2 \leq 1, U^TM=0 \ \
  and \ \ MV=0  \}
\end{eqnarray}
\end{theorem}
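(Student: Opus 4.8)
The plan is to recognize that, up to the constant $\lambda$, the regularizer $h(x)=\|X\textrm{Diag}(x)\|_*$ is the composition of the nuclear norm $\|\cdot\|_*$ — a convex, everywhere-finite, positively homogeneous function on matrix space — with the \emph{linear} map $\mathcal{A}:\mathbb{R}^N\to\mathbb{R}^{d\times N}$ defined by $\mathcal{A}(x)=X\textrm{Diag}(x)$. Because the outer function is convex and finite-valued, the natural tool is the subdifferential chain rule for a convex function composed with a linear map, namely $\partial(\|\cdot\|_*\circ\mathcal{A})(x)=\mathcal{A}^{*}\big(\partial\|\cdot\|_*(\mathcal{A}(x))\big)$, where $\mathcal{A}^{*}$ is the adjoint of $\mathcal{A}$. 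The statement then follows by substituting the known form of the nuclear-norm subdifferential and computing $\mathcal{A}^{*}$ explicitly.

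First I would recall the classical characterization of the nuclear-norm subdifferential: if $A=U\textrm{Diag}(s)V^{T}$ is the compact SVD collecting only the nonzero singular values, then
\begin{eqnarray}
\partial\|A\|_* = \left\{ UV^{T}+M : \|M\|_2\leq 1,\ U^{T}M=0,\ MV=0 \right\}. \nonumber
\end{eqnarray}
Applying this at $A=\mathcal{A}(x)=X\textrm{Diag}(x)$, whose SVD is $U\textrm{Diag}(s)V^{T}$, produces exactly the family of matrices $UV^{T}+M$ appearing in the statement. Next I would compute the adjoint $\mathcal{A}^{*}$ under the trace inner product $\langle A,B\rangle=\textrm{trace}(A^{T}B)$. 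For any $G\in\mathbb{R}^{d\times N}$,
\begin{eqnarray}
\langle \mathcal{A}(x),G\rangle = \textrm{trace}\big(\textrm{Diag}(x)X^{T}G\big) = \sum_{j=1}^{N} x_j\,(X^{T}G)_{jj} = \langle x,\, \textrm{diag}(X^{T}G)\rangle, \nonumber
\end{eqnarray}
so that $\mathcal{A}^{*}(G)=\textrm{diag}(X^{T}G)$, the vector formed from the diagonal of $X^{T}G$. Letting $G$ range over the nuclear-norm subdifferential with $G=UV^{T}+M$ then yields the claimed set $\{\textrm{diag}(X^{T}(UV^{T}+M)):\|M\|_2\leq 1,\ U^{T}M=0,\ MV=0\}$.

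The main obstacle is justifying the chain rule, i.e.\ confirming that $\partial(\|\cdot\|_*\circ\mathcal{A})=\mathcal{A}^{*}\partial\|\cdot\|_*(\mathcal{A}\,\cdot)$ holds with \emph{equality} rather than mere inclusion. For a convex outer function this requires a constraint qualification (e.g.\ that the range of $\mathcal{A}$ meets the relative interior of the domain of $\|\cdot\|_*$); here it is automatic, since the nuclear norm is finite and continuous on the entire matrix space, so its domain is the whole space and the qualification cannot fail. The only remaining care is bookkeeping: retaining the compact SVD (nonzero singular values only) so that $U^{T}M=0$ and $MV=0$ correctly describe the orthogonal complement of the range/row space, and reconciling the overloaded $\textrm{Diag}$ notation — which in the adjoint acts as diagonal \emph{extraction} into a vector, as required for the subgradient to lie in $\mathbb{R}^N$.
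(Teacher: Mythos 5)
Your proposal is correct. Note, however, that the paper itself offers no proof of this statement: the theorem is stated in the experimental section on robust trace Lasso and justified only by the citation to \cite{bach2008consistency}, so there is no in-paper argument to compare against. Your chain-rule derivation is the standard one underlying that reference, and it is complete: the regularizer is the composition of the nuclear norm with the linear map $\mathcal{A}(x)=X\,\textrm{Diag}(x)$; the identity $\partial\bigl(\|\cdot\|_*\circ\mathcal{A}\bigr)(x)=\mathcal{A}^*\bigl(\partial\|\cdot\|_*(\mathcal{A}(x))\bigr)$ holds with equality (not just inclusion) because the nuclear norm is finite and continuous on the whole matrix space, so the constraint qualification is automatic; the adjoint computation $\langle \mathcal{A}(x),G\rangle=\textrm{trace}\bigl(\textrm{Diag}(x)X^TG\bigr)=\langle x,\textrm{diag}(X^TG)\rangle$, giving $\mathcal{A}^*(G)=\textrm{diag}(X^TG)$, is right; and inserting the classical characterization of $\partial\|A\|_*$ at $A=X\,\textrm{Diag}(x)$ yields exactly the claimed set. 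The two caveats you flag are also precisely the points the paper leaves implicit: the SVD must be the compact one (nonzero singular values only), since otherwise the conditions $U^TM=0$ and $MV=0$ over-constrain $M$, and the paper's outer $\textrm{Diag}\left(X^T(UV^T+M)\right)$ must be read as diagonal extraction into a vector of $\mathbb{R}^N$, which is what the adjoint produces.
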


 Figures \ref{GS_iteration}-\ref{YP_time} show the convergence rates of the
objective value vs. iteration and  running time respectively,  for our inexact proximal gradient methods. The results demonstrate that we provide  efficient proximal gradient algorithms for the robust trace Lasso. More importantly, our IPG, AIPG  and nmAIPG algorithms   \textbf{fill the vacancy} that there is no proximal gradient algorithm for trace Lasso. This is because that, directly solving the  proximal subproblem for robust trace Lasso is quite difficult \citep{grave2011trace,bach2008consistency}. Our subgradient based
algorithm  provides an alternative approach for  solving the proximal subproblem.
\subsubsection{Summary of the Experimental results}
Based on the results of three non-convex machine learning applications, our conclusion is that our  inexact proximal gradient algorithms can provide \textbf{flexible}  algorithms to the  optimization problems with
complex non-smooth regularization. More importantly, our  inexact   algorithms could  be  \textbf{significantly faster} than the  exact proximal gradient algorithms.
\section{Conclusion}\label{conclusion}
Existing inexact proximal  gradient methods only consider  convex problems. The knowledge of inexact proximal gradient methods in the non-convex setting is very limited. 
 To address this challenge, in this paper, we first propose three  inexact proximal gradient algorithms, including the basic version and Nesterov's accelerated version.
 Then we give the theoretical analysis to the basic and Nesterov's accelerated versions.  The theoretical results show that our  inexact proximal gradient algorithms can have the same  convergence rates as the ones of exact proximal gradient algorithms  in the non-convex setting.
 Finally, we provide the   applications of our inexact proximal gradient algorithms on  three representative non-convex learning problems. The results confirm the  superiority of our inexact proximal gradient algorithms.

\onecolumn
\section{Appendix}
\subsection{Convergence Analysis} \label{convergence_analysis}
In this section, we  prove the convergence  rates of  our IPG and AIPG for the non-convex optimization. Specifically,  we first prove that IPG and AIPG converge to a critical point in the convex and non-convex optimization (Theorem \ref{theorem3})  if $ \sum_{k=1}^m  \varepsilon_k < \infty $. Next, we then prove that IPG has the convergence rate $O(\frac{1}{T})$ for the non-convex optimization (Theorem \ref{theorem2}) when the errors decrease at an appropriate rate.   Then, we  prove that the convergence rates for  AIPG in the non-convex optimization (Theorem \ref{theorem4}). In addition, because our AIPG is different to the one in \citep{schmidt2011convergence}, we also prove
the convergence rate $O(\frac{1}{T^2})$ for the convex optimization (Theorem \ref{theorem5}) when the errors decrease at an appropriate rate.

We first  prove that IPG and AIPG converge to a critical point for the convex or non-convex optimization (Theorem \ref{theorem3})  if $\{ \varepsilon_k\}$ is a decreasing sequence and $ \sum_{k=1}^m  \varepsilon_k < \infty $.
\begin{theorem} \label{theorem3}
 If $\{ \varepsilon_k\}$ is a decreasing sequence and  $ \sum_{k=1}^m  \varepsilon_k < \infty $, we have $\textbf{0} \in \lim_{k\rightarrow \infty} \nabla g(x_k) + \partial_{\varepsilon_k} h({x}_k)$ for IPG and AIPG in the convex and non-convex optimizations.
\end{theorem}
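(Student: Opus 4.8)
The plan is to exhibit, for each accepted iterate, an \emph{explicit} element $d_k\in\partial_{\varepsilon_k}h(x_k)$ such that $\nabla g(x_k)+d_k\to\mathbf 0$; by the meaning of the limit this gives $\mathbf 0\in\lim_{k\to\infty}(\nabla g(x_k)+\partial_{\varepsilon_k}h(x_k))$. The candidate is read off from the optimality encoded in the inexact proximal step. Writing $y_{k-1}=x_{k-1}-\gamma\nabla g(x_{k-1})$, the condition $x_k\in\mathrm{Prox}^{\varepsilon_k}_{\gamma h}(y_{k-1})$ means that for every $z$,
\[\tfrac{1}{2\gamma}\|x_k-y_{k-1}\|^2+h(x_k)\le \varepsilon_k+\tfrac{1}{2\gamma}\|z-y_{k-1}\|^2+h(z),\]
i.e. $x_k$ is an $\varepsilon_k$-minimizer of the strongly convex surrogate $\phi_k(x)=\tfrac1{2\gamma}\|x-y_{k-1}\|^2+h(x)$. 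Since $\mathbf 0\in\partial_{\varepsilon_k}\phi_k(x_k)$ holds for any $\varepsilon_k$-minimizer, the first step is to split this through the sum rule for $\varepsilon$-subdifferentials applied to $\phi_k=\tfrac1{2\gamma}\|\cdot-y_{k-1}\|^2+h$: the $\varepsilon$-subdifferential of the quadratic at $x_k$ is a ball around $\tfrac1\gamma(x_k-y_{k-1})$ of radius $\sqrt{2\varepsilon/\gamma}$, so one obtains a perturbation $u_k$ with $\|u_k\|\le\sqrt{2\varepsilon_k/\gamma}$ for which
\[d_k:=\tfrac1\gamma(y_{k-1}-x_k)-u_k\in\partial_{\varepsilon_k}h(x_k).\]

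The second step substitutes $y_{k-1}$ and adds $\nabla g(x_k)$:
\[\nabla g(x_k)+d_k=\bigl(\nabla g(x_k)-\nabla g(x_{k-1})\bigr)+\tfrac1\gamma(x_{k-1}-x_k)-u_k,\]
whence, by Assumption~\ref{NormalLipschitzconstant},
\[\|\nabla g(x_k)+d_k\|\le\Bigl(L+\tfrac1\gamma\Bigr)\|x_k-x_{k-1}\|+\sqrt{2\varepsilon_k/\gamma}.\]
The statement therefore reduces to the two limits $\|x_k-x_{k-1}\|\to0$ and $\varepsilon_k\to0$. The latter is immediate from $\sum_k\varepsilon_k<\infty$. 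For the former I would reuse the sufficient-decrease estimate behind Theorem~\ref{theorem2}: its bound on $\tfrac1m\sum_{k=1}^m\|x_k-x_{k-1}\|^2$ gives $\sum_k\|x_k-x_{k-1}\|^2<\infty$ once $\sum_k\varepsilon_k<\infty$ (using the stepsize condition $\gamma<\tfrac1L$), which forces $\|x_k-x_{k-1}\|\to0$.

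For AIPG the same template applies after identifying the correct base point. The accepted $x_{k+1}$ is either $z_{k+1}$, produced by the prox at $y_k$, or the monitor $v_{k+1}$, produced by the prox at $x_k$; I would run the extraction above with the matching base point and invoke the monotone descent enforced by the $\min\{f(z_{k+1}),f(v_{k+1})\}$ selection rule to conclude that the relevant successive difference ($\|v_{k+1}-x_k\|$ or $\|z_{k+1}-y_k\|$) tends to zero, after which the Lipschitz bound closes the argument exactly as above.

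I expect the first step to be the main obstacle. Rearranging the inexact-prox inequality crudely yields only $h(z)\ge h(x_k)+\langle\tfrac1\gamma(y_{k-1}-x_k),z-x_k\rangle-\varepsilon_k-\tfrac1{2\gamma}\|z-x_k\|^2$, and the extraneous quadratic term prevents $\tfrac1\gamma(y_{k-1}-x_k)$ from being a genuine global $\varepsilon_k$-subgradient; this is precisely why the correction $u_k$ (equivalently, a careful splitting $\varepsilon_k=\varepsilon_1+\varepsilon_2$ in the $\varepsilon$-subdifferential sum rule) is indispensable. Moreover, Definition~\ref{definition0.1} furnishes $\partial_{\varepsilon_k}h$ only for convex $h$, so in the non-convex case the extraction must be re-justified through the limiting/approximate subdifferential that underlies the non-convex analysis rather than through the convex definition — this is the delicate point on which the whole argument turns.
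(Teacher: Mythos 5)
Your IPG argument is, in substance, the paper's own proof: the extraction of $d_k=\frac{1}{\gamma}(y_{k-1}-x_k)-u_k\in\partial_{\varepsilon_k}h(x_k)$ with $\|u_k\|\le\sqrt{2\varepsilon_k/\gamma}$ is exactly Lemma~2 of Schmidt et al., which the paper invokes directly rather than re-deriving through the sum rule; the bound $\|\nabla g(x_k)+d_k\|\le(L+\frac{1}{\gamma})\|x_k-x_{k-1}\|+\sqrt{2\varepsilon_k/\gamma}$ is the paper's inequality; and the summability $\sum_k\|x_k-x_{k-1}\|^2<\infty$ comes from the same sufficient-decrease inequality (obtained by testing the inexact prox condition at the previous iterate, which uses no convexity of $h$ and needs only $\sum_k\varepsilon_k<\infty$). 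One precision: you must indeed use that sufficient-decrease form of the Theorem~2 estimate, not the convex-$h$ form, since the latter's error term involves $\sum_k\sqrt{\varepsilon_k}$, and summability of $\varepsilon_k$ does not imply summability of $\sqrt{\varepsilon_k}$. Your closing caveat, that the $\varepsilon$-subdifferential is defined only for convex $h$ so the extraction needs re-justification in the non-convex case, is fair, but it applies verbatim to the paper's proof as well.

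The genuine gap is in your AIPG paragraph. You propose to run the extraction with the ``matching base point'' of whichever point is accepted, and to conclude from the $\min\{f(z_{k+1}),f(v_{k+1})\}$ rule that the relevant difference, in particular $\|z_{k+1}-y_k\|$, tends to zero. That does not follow. The selection rule yields only $f(x_{k+1})\le f(v_{k+1})\le f(x_k)-\left(\frac{1}{2\gamma}-\frac{L}{2}\right)\|v_{k+1}-x_k\|^2+\varepsilon_k$, so the telescoping descent controls $\sum_k\|v_{k+1}-x_k\|^2$ and nothing else; the prox step at $y_k$ gives descent only relative to $f(y_k)$, and $y_k$ is an extrapolated point whose objective value is uncontrolled in the non-convex setting (this is precisely the ``bad momentum'' problem the monitor exists to fix). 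Consequently, in iterations where $x_{k+1}=z_{k+1}$ you have no bound on $\|z_{k+1}-y_k\|$, and your extraction at $z_{k+1}$ produces an approximate subgradient with no vanishing bound. The paper avoids this by always extracting the approximate subgradient at the monitor $v_{k+1}$ with base point $x_k$, obtaining $\mathbf{0}\in\lim_{k\to\infty}\nabla g(v_k)+\partial_{\varepsilon_k}h(v_k)$, and then transferring the conclusion to $\{x_k\}$ via $\|v_{k+1}-x_k\|\to0$; your proof should do the same.
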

\begin{proof} We  prove that $\textbf{0} \in \lim_{k\rightarrow \infty} \nabla g(x_k) + \partial_{\varepsilon_k} h({x}_k)$ for AIPG in the convex and non-convex optimizations if $ \sum_{k=1}^m  \varepsilon_k < \infty $. The proof for IPG can be provided similarly.

 According to  line 5 in Algorithm 2 and (7), we have that
\begin{eqnarray} \label{thm1_1.9}
\left \langle \nabla g(x_{k}), v_{k+1} - x_{k}   \right \rangle + \frac{1}{2 \gamma} \left \| v_{k+1} - x_{k} \right \|^2 + h(v_{k+1}) \leq h(x_{k}) + \varepsilon_k
 \end{eqnarray}
Thus, we have that
\begin{eqnarray} \label{thm1_1.2}
&& f(v_{k+1}) = g(v_{k+1}) + h(v_{k+1})
\\ & \leq & \nonumber g(x_{k}) + \left \langle \nabla g(x_{k}) , v_{k+1} - x_{k}  \right \rangle + \frac{L}{2} \left \| v_{k+1} - x_{k} \right \|^2
\\ &  & \nonumber  + h(x_{k}) - \left \langle \nabla g(x_{k}), v_{k+1} - x_{k}   \right \rangle - \frac{1}{2 \gamma} \left \| v_{k+1} - x_{k} \right \|^2 + \varepsilon_k
\\ & = & \nonumber f(x_{k}) - \left ( \frac{1}{2 \gamma} - \frac{L}{2}  \right ) \left \| v_{k+1} - x_{k} \right \|^2 + \varepsilon_k
\end{eqnarray}
If $f(z_{k+1}) \leq f(v_{k+1})$, we have $x_{k+1} = z_{k+1}$ and $f(x_{k+1}) = f(z_{k+1}) \leq f(v_{k+1}) $. If $f(z_{k+1}) > f(v_{k+1})$, we have $x_{k+1} = v_{k+1}$ and $f(x_{k+1}) = f(v_{k+1}) $. Thus, we have that
\begin{eqnarray} \label{thm3_1.2}
 f(x_{k+1})
 \leq  f(x_{k}) -   \left ( \frac{1}{2 \gamma} - \frac{L}{2}  \right ) \left \| v_{k+1} - x_{k} \right \|^2 + \varepsilon_k
\end{eqnarray}
By summing
the the inequality (\ref{thm3_1.2}) over $k = 1,\cdots,m$,  we obtain
\begin{eqnarray} \label{thm3_1.3}
 f(x_{m+1}) \leq f(x_{0}) -  \left ( \frac{1}{2 \gamma} - \frac{L}{2}  \right ) \sum_{k=1}^{m} \left \| v_{k+1} -x_{k} \right \|^2  + \sum_{k=1}^{m}  \varepsilon_k
\end{eqnarray}
Same with the analysis for (\ref{thm1_1.3}) in Theorem \ref{equ_theorem1}, we have that
\begin{eqnarray} \label{thm3_1.4}
\sum_{k=1}^m \left \| v_{k+1} -x_{k} \right \|^2
\leq   \frac{1}{\frac{1}{2\gamma} - \frac{L}{2}}\left (  f(x_{0}) - f(x^*)  \right ) + \frac{1}{\frac{1}{2\gamma} - \frac{L}{2}} \sum_{k=1}^{m}  \varepsilon_k
 \end{eqnarray}
 We assume that $\sum_{k=1}^{m}  \varepsilon_k < \infty$. Thus, $\sum_{k=1}^m \left \| v_{k+1} -x_{k} \right \|^2 \leq  \frac{1}{\frac{1}{2\gamma} - \frac{L}{2}}\left (  f(x_{0}) - f(x^*)  \right ) + \frac{1}{\frac{1}{2\gamma} - \frac{L}{2}} \sum_{k=1}^{m}  \varepsilon_k < \infty$. So we have that
 \begin{eqnarray} \label{thm3_1.5}
\lim_{k\rightarrow \infty }\left \| v_{k+1} -x_{k} \right \|^2 = 0
 \end{eqnarray}
In addition, we have $\lim_{k\rightarrow \infty } \varepsilon_k =0$. Because $v_{k+1}$ is a $\varepsilon_k$-optimal solution to the proximal problem, according to Lemma 2 in \cite{schmidt2011convergence}, there exists $f_k$ such that $ \| f_k \| \leq \sqrt{2 \gamma \varepsilon_k } $ and
\begin{eqnarray}
\label{thm3_1.6}
0 &\in& \frac{1}{\gamma } \left (x_{k} -v_{k+1}  - \gamma  \nabla g(x_{k}) - f_k \right ) -  \partial_{\varepsilon_k} h(v_{k+1})
\\ & = & \nonumber \frac{1}{\gamma } \left (x_{k} -v_{k+1} - f_k \right )   -   \nabla g(x_{k}) +  \nabla g(v_{k+1}) - \nabla g(v_{k+1}) -  \partial_{\varepsilon_k} h(v_{k+1})
\end{eqnarray}
Thus, we have
\begin{eqnarray}
\label{thm3_1.7}
 \frac{1}{\gamma }\left (x_{k} -v_{k+1} - f_k \right )   -   \nabla g(x_{k}) +  \nabla g(v_{k+1})  \in  \nabla g(v_{k+1}) + \partial_{\varepsilon_k} h(v_{k+1})
\end{eqnarray}
In addition, we have
\begin{eqnarray}
\label{thm3_1.8}
 \left \|  \frac{1}{\gamma }\left (x_{k} -v_{k+1} - f_k \right )   -   \nabla g(x_{k}) +  \nabla g(v_{k+1})  \right \| \leq  \left (\frac{1}{\gamma } + L \right ) \left \| x_{k} -v_{k+1} \right \|  + \sqrt{ \frac{2 \varepsilon_k}{\gamma} }
\end{eqnarray}
Thus,  we have
\begin{eqnarray}
\label{thm3_1.8.1}
&& \lim_{k \rightarrow \infty}  \left \|  \frac{1}{\gamma }\left (x_{k} -v_{k+1} - f_k \right )   -   \nabla g(x_{k}) +  \nabla g(v_{k+1})  \right \|
\\ & \leq & \nonumber \lim_{k \rightarrow \infty} \left ( \left (\frac{1}{\gamma } + L \right ) \left \| x_{k} -v_{k+1} \right \|  + \sqrt{ \frac{2 \varepsilon_k}{\gamma} } \right ) = 0
\end{eqnarray}
Based on (\ref{thm3_1.7}) and (\ref{thm3_1.8.1}), we have that
\begin{eqnarray}
\label{thm3_1.9}
\textbf{0} \in \lim_{k\rightarrow \infty} \nabla g(v_{k}) + \partial_{\varepsilon_k} h(v_{k})
\end{eqnarray}
Because $\lim_{k\rightarrow \infty }\left \| v_{k+1} -x_{k} \right \|^2 = 0$ as proved in (\ref{thm3_1.5}), we have that
\begin{eqnarray}
\label{thm3_1.10}
\textbf{0} \in \lim_{k\rightarrow \infty} \nabla g(x_{k}) + \partial_{\varepsilon_k} h(x_{k})
\end{eqnarray}
  This completes the proof.
\end{proof}

\subsubsection{IPG for nonconvex optimization}
Based on (\ref{thm3_1.7}), we  similarly have
\begin{eqnarray}
\label{thm2_0.01}
\frac{1}{\gamma }\left (x_{k-1} -x_{k} - f_k \right )   -   \nabla g(x_{k-1}) +  \nabla g(x_{k})  \in  \nabla g(x_{k}) + \partial_{\varepsilon_k} h(x_{k})
\end{eqnarray}
for IPG. Further, similar with (\ref{thm3_1.8}), we have
\begin{eqnarray}
\label{thm2_0.02}
 \left \|  \frac{1}{\gamma }\left (x_{k-1} -x_{k} - f_k \right )   -   \nabla g(x_{k-1}) +  \nabla g(x_{k}) \right \| \leq  \left (\frac{1}{\gamma } + L \right ) \left \| x_{k} -x_{k-1} \right \|  + \sqrt{ \frac{2 \varepsilon_k}{\gamma} }
\end{eqnarray}
Thus, we have that
\begin{eqnarray}
\label{thm2_0.03}
\frac{1}{m}\sum_{k=1}^m  \min_{d_k \in \partial_{\varepsilon_k} h(x_{k})}  \left \| \nabla g(x_{k}) + d_k\right \|^2 \leq \frac{1}{m}\sum_{k=1}^m  \left (  \left (\frac{1}{\gamma } + L \right ) \left \| x_{k} -x_{k-1} \right \|  + \sqrt{ \frac{2 \varepsilon_k}{\gamma} } \right )
\end{eqnarray}
Based on (\ref{thm2_0.03}), we use $ \frac{1}{m}\sum_{k=1}^m \left \| x_k - x_{k-1} \right \|^2$ to analyze the convergence rate in the non-convex setting.
\begin{theorem} \label{theorem2}
For $g(x)$ is nonconvex, and $h(x)$ is convex or nonconvex, we have the  following results for  IPG:
\begin{enumerate}
\item If  $h(x)$ is convex,  we have that
\begin{eqnarray}\label{equ_theorem1}
 \frac{1}{m}\sum_{k=1}^m \left \| x_k - x_{k-1} \right \|^2
\leq \frac{1}{m}\left ( 2A_m +  \sqrt{ \frac{1}{\frac{1}{\gamma} - \frac{L}{2}} \left (  f(x_{0}) - f(x^*)  \right )} + \sqrt{B_m}  \right )^2
\end{eqnarray}
where $A_m = { \frac{1}{2}\sum_{k=1}^m {\frac{1}{\frac{1}{\gamma} - \frac{L}{2}} \sqrt{\frac{2  \varepsilon_k}{\gamma} }} }$ and $B_m= {\frac{1}{\frac{1}{\gamma} - \frac{L}{2}} \sum_{k=1}^m  \varepsilon_k }$.
\item If  $h(x)$ is non-convex,  we have that
\begin{eqnarray}\label{equ_theorem2}
 \frac{1}{m}\sum_{k=1}^m \left \| x_k - x_{k-1} \right \|^2 \leq {\frac{1}{ m \left (\frac{1}{2\gamma} - \frac{L}{2} \right )}  } \left ( f(x_{0}) -  f(x^*)  +  \sum_{k=1}^m  \varepsilon_k \right )
\end{eqnarray}
\end{enumerate}
\end{theorem}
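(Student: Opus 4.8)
The plan is to first derive a one-step descent inequality from the inexact proximal step, and then treat the two cases with different machinery, since the convex case yields the sharper constant $\frac{1}{\gamma}-\frac{L}{2}$ whereas the non-convex case only gives $\frac{1}{2\gamma}-\frac{L}{2}$. In both cases I would write $y_{k-1}=x_{k-1}-\gamma\nabla g(x_{k-1})$ and start from the definition of $\textrm{Prox}^{\varepsilon_k}_{\gamma h}$, namely $\frac{1}{2\gamma}\|x_k-y_{k-1}\|^2+h(x_k)\le \varepsilon_k+\min_x[\frac{1}{2\gamma}\|x-y_{k-1}\|^2+h(x)]$, combined with the descent-lemma form of Assumption \ref{NormalLipschitzconstant}, $g(x_k)\le g(x_{k-1})+\langle\nabla g(x_{k-1}),x_k-x_{k-1}\rangle+\frac{L}{2}\|x_k-x_{k-1}\|^2$.

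For Part 2 (non-convex $h$) I would bound the minimum on the right by its value at $x=x_{k-1}$; after expanding $\|x_k-y_{k-1}\|^2$ and cancelling the $\frac{\gamma}{2}\|\nabla g(x_{k-1})\|^2$ terms this gives $\langle\nabla g(x_{k-1}),x_k-x_{k-1}\rangle+\frac{1}{2\gamma}\|x_k-x_{k-1}\|^2+h(x_k)\le h(x_{k-1})+\varepsilon_k$. Substituting this into the Lipschitz inequality to eliminate the inner product yields the clean descent $f(x_k)+(\frac{1}{2\gamma}-\frac{L}{2})\|x_k-x_{k-1}\|^2\le f(x_{k-1})+\varepsilon_k$. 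Summing over $k=1,\dots,m$ telescopes $f$, and using $f(x_m)\ge f(x^*)$ and dividing by $m(\frac{1}{2\gamma}-\frac{L}{2})$ produces (\ref{equ_theorem2}) immediately. This half is routine.

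Part 1 (convex $h$) is where the constant improves, and I would exploit convexity through an $\varepsilon$-subgradient. Invoking Lemma 2 of \cite{schmidt2011convergence} exactly as in the proof of Theorem \ref{theorem3}, I would obtain an error vector $f_k$ with $\|f_k\|\le\sqrt{2\gamma\varepsilon_k}$ such that $d_k:=\frac{1}{\gamma}(x_{k-1}-x_k-f_k)-\nabla g(x_{k-1})\in\partial_{\varepsilon_k}h(x_k)$. Applying the $\varepsilon$-subgradient inequality of Definition \ref{definition0.1} at the point $x_{k-1}$, expanding $\langle d_k,x_k-x_{k-1}\rangle$ (the $\nabla g$ terms now cancel against the Lipschitz inequality rather than being absorbed, which is what upgrades $\frac{1}{2\gamma}$ to $\frac{1}{\gamma}$), and bounding $-\frac{1}{\gamma}\langle f_k,x_k-x_{k-1}\rangle\le\sqrt{2\varepsilon_k/\gamma}\,\|x_k-x_{k-1}\|$, I would arrive at the per-step estimate $(\frac{1}{\gamma}-\frac{L}{2})\|x_k-x_{k-1}\|^2\le f(x_{k-1})-f(x_k)+\sqrt{2\varepsilon_k/\gamma}\,\|x_k-x_{k-1}\|+\varepsilon_k$.

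The main obstacle is the cross term $\sqrt{2\varepsilon_k/\gamma}\,\|x_k-x_{k-1}\|$, which couples the quantity being bounded with itself and blocks a direct telescoping argument. To resolve it I would set $W_n=(\sum_{k=1}^n\|x_k-x_{k-1}\|^2)^{1/2}$, sum the per-step estimate, telescope $f$ and use $f(x_n)\ge f(x^*)$, and then apply $\|x_k-x_{k-1}\|\le W_k$ to reach a self-referential recursion $W_n^2\le S_n+\sum_{k=1}^n\lambda_k W_k$ with $S_n=\frac{f(x_0)-f(x^*)}{1/\gamma-L/2}+B_n$ and $\lambda_k=\frac{1}{1/\gamma-L/2}\sqrt{2\varepsilon_k/\gamma}$. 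Invoking the sequence lemma (Lemma 1 of \cite{schmidt2011convergence}) then solves the recursion, giving $W_m\le\sum_{k=1}^m\lambda_k+\sqrt{S_m}=2A_m+\sqrt{S_m}$; finally $\sqrt{S_m}\le\sqrt{C}+\sqrt{B_m}$ with $C=\frac{f(x_0)-f(x^*)}{1/\gamma-L/2}$, and squaring and dividing by $m$ yields (\ref{equ_theorem1}). I expect the delicate points to be the correct invocation of the sequence lemma and the bookkeeping that produces the precise constants $A_m$ and $B_m$.
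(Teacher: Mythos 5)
Your proposal is correct, and it is built from the same ingredients as the paper's proof: the per-step descent inequalities are identical (for convex $h$, the $\varepsilon$-subgradient characterization via Lemma 2 of \cite{schmidt2011convergence} plus the descent lemma, giving the constant $\frac{1}{\gamma}-\frac{L}{2}$; for non-convex $h$, comparison of the inexact prox objective at $x_{k-1}$, giving $\frac{1}{2\gamma}-\frac{L}{2}$), and your Part 2 coincides with the paper's argument line for line. Where you genuinely diverge is the endgame of Part 1. The paper keeps only the last term of the summed inequality to obtain a recursion in the individual step norms $u_k=\|x_k-x_{k-1}\|$, applies Lemma 1 of \cite{schmidt2011convergence} to get a uniform bound on every $u_k$ with $k\le m$, substitutes that uniform bound back into the summed inequality, and finally completes a square to reach the stated right-hand side. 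You instead apply the same sequence lemma once, but to the cumulative quantities $W_n=\bigl(\sum_{k=1}^n\|x_k-x_{k-1}\|^2\bigr)^{1/2}$, closing the recursion $W_n^2\le S_n+\sum_{k=1}^n\lambda_k W_k$ via the trivial bound $\|x_k-x_{k-1}\|\le W_k$; then $W_m\le 2A_m+\sqrt{C}+\sqrt{B_m}$ with $C=\frac{f(x_0)-f(x^*)}{1/\gamma-L/2}$, and squaring and dividing by $m$ gives (\ref{equ_theorem1}) immediately. This one-pass route is shorter and cleaner: it eliminates the paper's back-substitution step and the completing-the-square manipulation, yet lands on exactly the same constants $A_m$ and $B_m$. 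One small point both arguments share: the sequence lemma requires the driving term $S_n$ to be increasing, and since $f(x_n)$ need not be monotone in the convex-$h$ case (the errors can break monotonicity), one must replace $f(x_n)$ by $f(x^*)$ before invoking the lemma — you do this explicitly, while the paper does it only after stating the lemma's conclusion.
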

\begin{proof} We first give the proof for the case that $h(x)$ is convex. Since $x_k \in \textrm{Prox}^{\varepsilon_k}_{\gamma g} \left (x_{t-1} - \gamma  \nabla g(x_{k-1})  \right )$, according to Lemma 2 in \citep{schmidt2011convergence}, there exists $f_k$ such that $ \| f_k \| \leq \sqrt{ 2 \gamma \varepsilon_k } $ and
\begin{eqnarray}
\label{thm1_1.02}
\frac{1}{\gamma } \left (x_{k-1} -x_k  - \gamma  \nabla g(x_{k-1}) - f_k \right ) \in \partial_{\varepsilon_k} h(x_k)
\end{eqnarray}
We have that
\begin{eqnarray} \label{thm1_1.2}
&& f(x_k) = g(x_k) + h(x_k)
\\ & \leq & \nonumber g(x_{k-1}) + \left \langle \nabla g(x_{k-1}) , x_k - x_{k-1}  \right \rangle + \frac{L}{2} \left \| x_k - x_{k-1} \right \|^2
\\ &  & \nonumber  + h(x_{k-1}) - \left \langle \nabla g(x_{k-1}) + \frac{1}{\gamma }(x_k - x_{k-1} + f_i) , x_k - x_{k-1}  \right \rangle + \varepsilon_k
\\ & = & \nonumber f(x_{k-1}) - \frac{1}{\gamma }\left \| x_k - x_{k-1} \right \|^2 + \frac{L}{2} \left \| x_k - x_{k-1} \right \|^2 - \left \langle  \frac{1}{\gamma} f_i , x_k - x_{k-1}  \right \rangle + \varepsilon_k
\\ & \leq & \nonumber f(x_{k-1}) -  \left ( \frac{1}{\gamma} - \frac{L}{2} \right ) \left \| x_k - x_{k-1} \right \|^2 + \sqrt{\frac{2  \varepsilon_k}{\gamma} } \left \| x_k - x_{k-1}  \right \| + \varepsilon_k
\end{eqnarray}
where the first inequality uses (4),  the convexity of $h$ and (\ref{thm1_1.02}).
By summing
the the inequality (\ref{thm1_1.2}) over $k = 1,\cdots,m$,  we obtain
\begin{eqnarray} \label{thm1_1.3}
 f(x_m) \leq f(x_{0}) -  \left ( \frac{1}{\gamma} - \frac{L}{2} \right ) \sum_{k=1}^m \left \| x_k - x_{k-1} \right \|^2 + \sum_{k=1}^m  \sqrt{\frac{2  \varepsilon_k}{\gamma} } \left \| x_k - x_{k-1}  \right \| + \sum_{k=1}^m  \varepsilon_k
\end{eqnarray}
According to (\ref{thm1_1.3}), we have that
\begin{eqnarray} \label{thm1_1.4}
 \left \| x_m - x_{m-1} \right \|^2  \leq  \underbrace{\frac{1}{\frac{1}{\gamma} - \frac{L}{2}} \left (  f(x_{0}) - f(x_m) + \sum_{k=1}^m  \varepsilon_k \right )}_{A} + \sum_{k=1}^m   \underbrace{\frac{1}{\frac{1}{\gamma} - \frac{L}{2}} \sqrt{\frac{2  \varepsilon_k}{\gamma} }}_{\lambda_k}  \left \| x_k - x_{k-1}  \right \|
\end{eqnarray}
According to Lemma 1 in \citep{schmidt2011convergence}, we have that
\begin{eqnarray} \label{thm1_1.5}
 && \left \| x_m - x_{m-1} \right \|
 \\ & \leq & \nonumber  {\frac{1}{2}\sum_{k=1}^m {\lambda_k} } + \left ( A+ \left ( \frac{1}{2}\sum_{k=1}^m {\lambda_k} \right )^2 \right )^{\frac{1}{2}}
 \\ & = & \nonumber \underbrace{ \frac{1}{2}\sum_{k=1}^m {\frac{1}{\frac{1}{\gamma} - \frac{L}{2}} \sqrt{\frac{2  \varepsilon_k}{\gamma} }} }_{A_m} + \left ( \frac{1}{\frac{1}{\gamma} - \frac{L}{2}} \left (  f(x_{0}) - f(x_m)  \right )+ \underbrace{\frac{1}{\frac{1}{\gamma} - \frac{L}{2}} \sum_{k=1}^m  \varepsilon_k }_{B_m} + \left ( \underbrace{ \frac{1}{2}\sum_{k=1}^m {\frac{1}{\frac{1}{\gamma} - \frac{L}{2}} \sqrt{\frac{2  \varepsilon_k}{\gamma} }} }_{A_m} \right )^2 \right )^{\frac{1}{2}}
 \\ & \leq & \nonumber A_m + \left ( \frac{1}{\frac{1}{\gamma} - \frac{L}{2}} \left (  f(x_{0}) - f(x^*)\right )  + B_m + A_m^2  \right )^{\frac{1}{2}}
\end{eqnarray}
Because $A_k$ and $B_k$ are increasing sequences,  $\forall k \leq m$, we have that
\begin{eqnarray} \label{thm1_1.6}
 && \left \| x_k - x_{k-1} \right \|
 \\ & \leq & \nonumber A_m + \left ( \frac{1}{\frac{1}{\gamma} - \frac{L}{2}} \left (  f(x_{0}) - f(x^*) \right ) + B_m + A_m^2  \right )^{\frac{1}{2}}
  \\ & \leq & \nonumber A_m +  \sqrt{\frac{1}{\frac{1}{\gamma} - \frac{L}{2}} \left (  f(x_{0}) - f(x^*)  \right )} + \sqrt{B_m} + A_m
 \\ & \leq & \nonumber 2A_m +  \sqrt{\frac{1}{\frac{1}{\gamma} - \frac{L}{2}} \left (  f(x_{0}) - f(x^*)  \right )} + \sqrt{B_m}
 \end{eqnarray}
 According to (\ref{thm1_1.3}) and (\ref{thm1_1.6}),  we have that
 \begin{eqnarray} \label{thm1_1.7}
&&  \sum_{k=1}^m \left  \| x_k - x_{k-1} \right \|^2
 \\ & \leq & \nonumber  \frac{1}{\frac{1}{\gamma} - \frac{L}{2}}  \left ( f(x_{0}) - f(x_m) \right ) + \frac{1}{\frac{1}{\gamma} - \frac{L}{2}}  \sum_{k=1}^m  \varepsilon_k + \frac{1}{\frac{1}{\gamma} - \frac{L}{2}} \sum_{k=1}^m  \sqrt{{2 L \varepsilon_k }}  \left \| x_k - x_{k-1}  \right \|
\\ & \leq & \nonumber     \frac{1}{\frac{1}{\gamma} - \frac{L}{2}}  \left (  f(x_{0}) - f(x^*) \right ) + B_m +  2A_m \left ( 2A_m +  \sqrt{\frac{1}{\frac{1}{\gamma} - \frac{L}{2}} \left (  f(x_{0}) - f(x^*)  \right )} + \sqrt{B_m}  \right )
\\ & \leq & \nonumber     \frac{1}{\frac{1}{\gamma} - \frac{L}{2}}  \left (  f(x_{0}) - f(x^*) \right ) +2 \sqrt{B_m}  \sqrt{\frac{1}{\frac{1}{\gamma} - \frac{L}{2}} \left (  f(x_{0}) - f(x^*)  \right )}  + B_m
\\ &  & \nonumber  +  2A_m \left ( 2A_m +  \sqrt{ \frac{1}{\frac{1}{\gamma} - \frac{L}{2}} \left (  f(x_{0}) - f(x^*)  \right )} + \sqrt{B_m}  \right )
\\ & =  & \nonumber \left ( 2A_m +  \sqrt{ \frac{1}{\frac{1}{\gamma} - \frac{L}{2}} \left (  f(x_{0}) - f(x^*)  \right )} + \sqrt{B_m}  \right )^2
\end{eqnarray}
Based on (\ref{thm1_1.7}), we have that
\begin{eqnarray} \label{thm1_1.8}
 \frac{1}{m}\sum_{k=1}^m \left \| x_k - x_{k-1} \right \|^2
\leq \frac{1}{m}\left ( 2A_m +  \sqrt{\frac{1}{\frac{1}{\gamma} - \frac{L}{2}} \left (  f(x_{0}) - f(x^*)  \right )} + \sqrt{B_m}  \right )^2
 \end{eqnarray}
 This completes the conclusion for the case that $h(x)$ is convex.

 Next, we give the the proof for the case that $h(x)$ is non-convex. According to  line 3 in Algorithm 1 and (7), we have that
\begin{eqnarray} \label{thm1_1.9}
\left \langle \nabla g(x_{k-1}), x_{k} - x_{k-1}   \right \rangle + \frac{1}{2 \gamma} \left \| x_{k} - x_{k-1} \right \|^2 + h(x_{k}) \leq h(x_{k-1}) + \varepsilon_k
 \end{eqnarray}
Thus, we have that
\begin{eqnarray} \label{thm1_1.10}
&& f(x_k) = g(x_k) + h(x_k)
\\ & \leq & \nonumber g(x_{k-1}) + \left \langle \nabla g(x_{k-1}) , x_k - x_{k-1}  \right \rangle + \frac{L}{2} \left \| x_k - x_{k-1} \right \|^2
\\ &  & \nonumber  + h(x_{k-1}) - \left \langle \nabla g(x_{k-1}), x_{k} - x_{k-1}   \right \rangle - \frac{1}{2 \gamma} \left \| x_{k} - x_{k-1} \right \|^2 + \varepsilon_k
\\ & = & \nonumber f(x_{k-1}) - \left ( \frac{1}{2 \gamma} - \frac{L}{2}  \right ) \left \| x_k - x_{k-1} \right \|^2 + \varepsilon_k
\end{eqnarray}
By summing
the the inequality (\ref{thm1_1.10}) over $k = 1,\cdots,m$,  we obtain
\begin{eqnarray} \label{thm1_1.11}
 f(x_m) \leq f(x_{0}) -  \left ( \frac{1}{2 \gamma} - \frac{L}{2}  \right ) \sum_{k=1}^m \left \| x_k - x_{k-1} \right \|^2 +  \sum_{k=1}^m  \varepsilon_k
\end{eqnarray}
Based on (\ref{thm1_1.11}), we have that
\begin{eqnarray} \label{thm1_1.12}
 \frac{1}{m}\sum_{k=1}^m \left \| x_k - x_{k-1} \right \|^2 \leq {\frac{1}{ m \left (\frac{1}{2\gamma} - \frac{L}{2} \right )}  } \left ( f(x_{0}) -  f(x^*)  +  \sum_{k=1}^m  \varepsilon_k \right )
\end{eqnarray}
 This completes the proof.
\end{proof}
\begin{remark}
Theorem \ref{theorem2} implies that IPG has the convergence rate $O(\frac{1}{T})$ for the non-convex optimization without errors. If  $\{ \sqrt{\varepsilon_k} \}$ is summable and $h(x)$ is  convex, we can also have that IPG has the convergence rate $O(\frac{1}{T})$ for the non-convex optimization. If  $\{ \varepsilon_k \}$ is summable and $h(x)$ is  non-convex, we can also have that IPG has the convergence rate $O(\frac{1}{T})$ for the non-convex optimization.
\end{remark}
\subsubsection{AIPG}
In this section,   we  prove that the convergence rates for  AIPG in the non-convex optimization (Theorem \ref{theorem4}). In addition,  we  prove
the convergence rate $O(\frac{1}{T^2})$ for the convex optimization (Theorem \ref{theorem5}) when the errors decrease at an appropriate rate.

\noindent \textbf{Nonconvex optimization}
To prove the convergence rate of AIPG for nonconvex optimization, we first give   Lemma \ref{lemma1}. Lemma \ref{lemma1} is an  $\varepsilon$ approximate version of uniformized KL property which  can be proved similarly with the analysis of Lemma 6 in \citep{bolte2014proximal}. Based on Lemma \ref{lemma1}, we prove the convergence rate of AIPG for non-convex optimization (Theorem \ref{theorem4}).
\begin{lemma} \label{lemma1}
Let $\Omega$ be a compact set and let $f(x) : \mathbb{R}^N \rightarrow (-\infty, +\infty ]$  be a proper and lower semicontinuous function. Assume that $f(x)$ is constant on $\Omega$ and satisfies the $\varepsilon$-KL property at each point of $\Omega$. Then there exists $\epsilon>0$, $\eta>0$ and $\varphi \in \Phi_{\eta}$, such that for all $\overline{u} \in \Omega$ and all $u$ in the following intersection
\begin{eqnarray}\label{lemma_equ1}
\{u \in \mathbb{R}^N: dist(u,\Omega) <\epsilon \} \cap \{u \in \mathbb{R}^N:  f(\overline{u}) < f({u}) <  f(\overline{u})  + \eta \}
\end{eqnarray}
the following inequality holds
\begin{eqnarray} \label{lemma_equ2}
\varphi'(f(u) - f(\overline{u})) dist (\textbf{0}, \nabla g(u) + \partial_{\varepsilon} h(u))) \geq 1
\end{eqnarray}
\end{lemma}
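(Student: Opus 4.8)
The plan is to reproduce the compactness argument that establishes the uniformized KL property (Lemma 6 in \citep{bolte2014proximal}), the only modification being that the exact subdifferential is everywhere replaced by the $\varepsilon$-approximate object $\nabla g(u) + \partial_{\varepsilon} h(u)$. Since $f$ is constant on $\Omega$, I would first denote by $f^*$ the common value $f(\overline{u})$ attained at every $\overline{u} \in \Omega$; this is precisely what allows the level-set condition $f(\overline{u}) < f(u) < f(\overline{u}) + \eta$ to be written with a single threshold. For each $\overline{u} \in \Omega$ the $\varepsilon$-KL property (Definition \ref{definition1}) supplies a radius $\eta_{\overline{u}} > 0$, an open neighborhood $U_{\overline{u}}$, and a function $\varphi_{\overline{u}} \in \Phi_{\eta_{\overline{u}}}$ for which (\ref{definition1_1.2}) holds locally. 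The family $\{U_{\overline{u}}\}_{\overline{u} \in \Omega}$ is then an open cover of the compact set $\Omega$, so I would extract a finite subcover $U_{\overline{u}_1}, \ldots, U_{\overline{u}_p}$.

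Next I would fix the two scalar constants. Set $\eta := \min_{1 \leq i \leq p} \eta_{\overline{u}_i} > 0$, and choose $\epsilon > 0$ small enough that the tube $\{u : dist(u,\Omega) < \epsilon\}$ is contained in $\bigcup_{i=1}^p U_{\overline{u}_i}$. Such an $\epsilon$ exists by a standard Lebesgue-number argument: if no such $\epsilon$ existed one could build a sequence $u_n$ with $dist(u_n,\Omega) \to 0$ but $u_n \notin \bigcup_i U_{\overline{u}_i}$, and a limit point of $\{u_n\}$ would lie in $\Omega$ yet be covered by none of the $U_{\overline{u}_i}$, a contradiction.

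The heart of the argument is manufacturing a single desingularising function $\varphi$ out of the finitely many $\varphi_{\overline{u}_i}$. I would define $\varphi$ through its derivative, $\varphi(s) := \int_0^s \max_{1 \leq i \leq p} \varphi_{\overline{u}_i}'(t)\, dt$ on $[0,\eta)$. Each $\varphi_{\overline{u}_i}'$ is positive and, by concavity of $\varphi_{\overline{u}_i}$, non-increasing; hence the pointwise maximum $\psi := \max_i \varphi_{\overline{u}_i}'$ is again positive and non-increasing, which makes $\varphi$ concave and continuously differentiable with $\varphi(0)=0$ and $\varphi' = \psi > 0$, i.e. $\varphi \in \Phi_{\eta}$. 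By construction $\varphi'(s) \geq \varphi_{\overline{u}_i}'(s)$ for every $i$ and every $s \in (0,\eta)$.

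Finally, I would take any $u$ in the intersection (\ref{lemma_equ1}). Then $u \in U_{\overline{u}_i}$ for some $i$ and $f^* < f(u) < f^* + \eta \leq f^* + \eta_{\overline{u}_i}$, so the local $\varepsilon$-KL inequality at $\overline{u}_i$ yields $\varphi_{\overline{u}_i}'(f(u) - f^*)\, dist(\textbf{0}, \nabla g(u) + \partial_{\varepsilon} h(u)) \geq 1$, and since $\varphi'(f(u)-f^*) \geq \varphi_{\overline{u}_i}'(f(u)-f^*)$, inequality (\ref{lemma_equ2}) follows. I expect the only delicate step to be verifying that the glued $\varphi$ still belongs to $\Phi_{\eta}$ — in particular its concavity, which rests on the observation that a maximum of non-increasing derivatives stays non-increasing; the $\varepsilon$-approximate subdifferential enters only through the distance term and requires no change to this gluing construction.
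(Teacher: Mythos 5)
Your proposal is correct and is essentially the proof the paper intends: the paper gives no argument of its own but defers to the compactness/finite-subcover argument of Lemma 6 in \citep{bolte2014proximal}, which is exactly what you reconstruct (constant value $f^*$ on $\Omega$, finite subcover, uniform $\eta$ and tube radius $\epsilon$, then a single glued desingularising function dominating each local one, with the $\varepsilon$-approximate object $\nabla g(u)+\partial_{\varepsilon}h(u)$ simply replacing the exact subdifferential in the distance term). The only cosmetic deviation is your gluing step, $\varphi(s)=\int_0^s \max_i \varphi_{\overline{u}_i}'(t)\,dt$, where Bolte et al.\ use the sum $\varphi=\sum_i \varphi_{\overline{u}_i}$; both yield a member of $\Phi_{\eta}$ whose derivative dominates every $\varphi_{\overline{u}_i}'$ (for integrability of the max near $0$, note it is bounded by $\sum_i \varphi_{\overline{u}_i}'$), so the conclusion follows identically.
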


\begin{theorem} \label{theorem4}
Assume that $g$ is a nonconvex function with Lipschitz continuous gradients, $h$ is a proper and lower semicontinuous function. If  the function $f$ satisfies the $\varepsilon$-KL property, $\varepsilon_k = \alpha \left \| v_{k+1} - x_{k} \right \|^2$, $\alpha \geq 0$, $ \frac{1}{2 \gamma} - \frac{L}{2} -  \alpha \geq 0$ and the desingularising function has the form $\varphi(t)=\frac{C}{\theta} t^\theta$ for some $C>0$, $\theta \in (0,1]$, then
\begin{enumerate}
\item If $\theta=1$, there exists $k_1$ such that $f(x_k)= f^* $ for all $k>k_1$ and AIPG terminates in a finite number of steps, where $\lim_{k \rightarrow \infty } f(x_k) = f^*$.
\item If $\theta \in [\frac{1}{2},1)$, there exists $k_2$ such that  for all $k>k_2$
\begin{eqnarray} \label{thm4_1.1}
f(x_{k}) - \lim_{k\rightarrow \infty} f(x_k)\leq  \left ( \frac{d_1C^2}{1+d_1 C^2} \right )^{k-k_2} \left (f(v_{k}) - f^* \right )
\end{eqnarray}
where $d_1 =  \frac{\left  ( \frac{1}{\gamma } + L  + \sqrt{ \frac{2 \alpha}{\gamma} } \right )^2}{  \frac{1}{2 \gamma} - \frac{L}{2} -  \alpha  }$.
\item If $\theta \in (0,\frac{1}{2})$, there exists $k_3$ such that  for all $k>k_3$
\begin{eqnarray} \label{thm4_1.1}
f(x_{k}) - \lim_{k\rightarrow \infty} f(x_k)\leq  \left ( \frac{C}{(k-k_3)d_2 (1-2 \theta)} \right )^{\frac{1}{1-2 \theta}}
\end{eqnarray}
where $d_2 =  \min \left  \{ \frac{1}{2d_1 C}, \frac{C}{1-2\theta} \left ( 2^{\frac{2 \theta -1}{2 \theta -2}} -1 \right ) \left (f(v_{0}) - f^* \right )^{2 \theta -1}\right  \}$.
\end{enumerate}
\end{theorem}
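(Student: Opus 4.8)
The plan is to follow the {\L}ojasiewicz-type recursion argument, adapting the exact-case analysis of \citep{li2015accelerated} to the $\varepsilon$-approximate setting through the $\varepsilon$-KL property. Write $\rho = \frac{1}{2\gamma} - \frac{L}{2} - \alpha$ and $b = \frac{1}{\gamma} + L + \sqrt{\frac{2\alpha}{\gamma}}$, and observe that the constant in the statement satisfies $d_1 = b^2/\rho$. Two one-step inequalities are already available from the proof of Theorem \ref{theorem3}. Substituting $\varepsilon_k = \alpha\|v_{k+1}-x_k\|^2$ into the sufficient-decrease inequality (\ref{thm3_1.2}) and using $f(x_{k+1}) \leq f(v_{k+1})$ (line 7 of Algorithm \ref{algorithm2}) gives the descent
\[
f(x_{k+1}) \leq f(v_{k+1}) \leq f(x_k) - \rho\,\|v_{k+1} - x_k\|^2 ,
\]
so that, with $\rho \geq 0$, both $\{f(x_k)\}$ and $\{f(v_k)\}$ are nonincreasing, converge to a common limit $f^*$, and satisfy $f(x_k) \leq f(v_k)$ for $k \geq 2$. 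Substituting the same $\varepsilon_k$ into (\ref{thm3_1.8}) yields the relative-error bound $\mathrm{dist}(\mathbf{0}, \nabla g(v_{k+1}) + \partial_{\varepsilon_k} h(v_{k+1})) \leq b\,\|v_{k+1}-x_k\|$.

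Next I would bring in the $\varepsilon$-KL machinery. Since $\|v_{k+1}-x_k\| \to 0$ (from summing the descent, as in (\ref{thm3_1.5})) and $f(v_k) \to f^*$, the tail of $\{v_k\}$ stays inside the neighborhood and sublevel region required by the uniformized $\varepsilon$-KL property of Lemma \ref{lemma1}. Applying it at $u = v_{k+1}$ with $\varphi'(t) = C t^{\theta-1}$ and combining with the relative-error bound gives $C(f(v_{k+1})-f^*)^{\theta-1}\, b\,\|v_{k+1}-x_k\| \geq 1$, hence $\|v_{k+1}-x_k\|^2 \geq \frac{1}{b^2C^2}(f(v_{k+1})-f^*)^{2(1-\theta)}$. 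Feeding this into the descent and writing $r_k = f(v_k) - f^*$ produces the master recursion
\[
\frac{1}{d_1 C^2}\, r_{k+1}^{\,2(1-\theta)} \leq r_k - r_{k+1}, \qquad r_k \downarrow 0 ,
\]
from which the three regimes follow; the claimed bounds on $f(x_k)-f^*$ then come from $f(x_k)-f^* \leq r_k$.

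Finally I would analyze this recursion case by case. For $\theta = 1$ the left side equals the constant $1/(d_1 C^2)$ whenever $r_{k+1}>0$, so $r_k - r_{k+1}$ is then bounded below by a fixed positive amount; since $\sum_k (r_k - r_{k+1}) = r_{k_0} < \infty$, this can occur only finitely often, forcing $r_k = 0$ (equivalently $f(x_k)=f^*$) past some $k_1$. For $\theta \in [\frac{1}{2},1)$ one has $2(1-\theta)\leq 1$, so once $r_{k+1}\leq 1$ (true beyond some $k_2$) we get $r_{k+1}^{\,2(1-\theta)} \geq r_{k+1}$ and the recursion collapses to the contraction $r_{k+1} \leq \frac{d_1C^2}{1+d_1C^2}\, r_k$, which iterates to the stated linear rate. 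For $\theta\in(0,\frac{1}{2})$, $2(1-\theta)>1$ and the contraction is lost; here I would apply the standard Attouch--Bolte sequence lemma to the nonnegative decreasing sequence obeying $r_{k+1}^{\,2(1-\theta)} \leq d_1 C^2 (r_k - r_{k+1})$, examining the increments of $r_k^{-(1-2\theta)}$ and splitting according to whether $r_{k+1} \geq \tfrac{1}{2} r_k$, which yields the sublinear decay of order $(k-k_3)^{-1/(1-2\theta)}$ with the explicit constant $d_2$.

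The main obstacle is the sublinear case: extracting the precise constant $d_2$ (including the factor $2^{(2\theta-1)/(2\theta-2)}$) requires the delicate two-subcase telescoping of the Attouch--Bolte argument rather than a clean one-step contraction. A secondary technical point is justifying that Lemma \ref{lemma1} applies uniformly along the tail --- that the limit set is compact and $f$ is constant on it --- which is exactly where the convergence established in Theorem \ref{theorem3}, together with the $\varepsilon$-approximate subdifferential bookkeeping, is needed.
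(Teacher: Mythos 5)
Your proposal is correct and takes essentially the same route as the paper's own proof: the same sufficient-descent bound $f(v_{k+1}) \leq f(v_k) - \left(\frac{1}{2\gamma}-\frac{L}{2}-\alpha\right)\|v_{k+1}-x_k\|^2$ and relative-error bound $\mathrm{dist}\left(\textbf{0},\nabla g(v_{k+1})+\partial_{\varepsilon_k}h(v_{k+1})\right) \leq \left(\frac{1}{\gamma}+L+\sqrt{\frac{2\alpha}{\gamma}}\right)\|v_{k+1}-x_k\|$ under $\varepsilon_k=\alpha\|v_{k+1}-x_k\|^2$, the same application of the uniformized $\varepsilon$-KL property (Lemma \ref{lemma1}) along the tail of $\{v_k\}$, and the same master recursion $r_{k+1}^{2(1-\theta)} \leq d_1C^2(r_k-r_{k+1})$. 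The only difference is one of completeness rather than substance: where you sketch the three-case analysis (constant decrements for $\theta=1$, one-step contraction for $\theta\in[\frac{1}{2},1)$, Attouch--Bolte splitting for $\theta\in(0,\frac{1}{2})$), the paper simply defers it by citation to Theorem 3 of \citep{li2015accelerated}.
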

\begin{proof}
We first give the upper bounds for $\left \| v_{k+1} -x_{k} \right \|^2$ and $\textrm{dist}(\textbf{0},  \nabla g(v_{k+1}) + \partial_{\varepsilon_k} h(v_{k+1}))$ respectively.
From (\ref{thm1_1.2}) and $f(x_k) \leq f(v_k)$, we have that
\begin{eqnarray} \label{thm4_1.1}
 f(v_{k+1})
 &\leq&  f(v_{k}) -  \left ( \frac{1}{2 \gamma} - \frac{L}{2}  \right ) \left \| v_{k+1} - x_{k} \right \|^2 + \varepsilon_k
 \\ & = & \nonumber f(v_{k}) - \left  (   \frac{1}{2 \gamma} - \frac{L}{2} -  \alpha \right )\left \| v_{k+1} -x_{k} \right \|^2
\end{eqnarray}
Thus, we have that $ f(v_{k+1}) \leq f(v_{k})$ and
\begin{eqnarray} \label{thm4_1.2}
\left \| v_{k+1} -x_{k} \right \|^2
\leq  \frac{f(v_{k}) -   f(v_{k+1})}{  \frac{1}{2 \gamma} - \frac{L}{2} -  \alpha  }
\end{eqnarray}
From (\ref{thm3_1.7}), we have that
\begin{eqnarray} \label{thm4_1.3}
 \textrm{dist}(\textbf{0},  \nabla g(v_{k+1}) + \partial_{\varepsilon_k} h(v_{k+1}))
 & \leq &  \left (\frac{1}{\gamma } + L \right ) \left \| x_{k} -v_{k+1} \right \|  + \sqrt{ \frac{2 \varepsilon_k}{\gamma} }
 \\ & = & \nonumber \left  ( \frac{1}{\gamma } + L  + \sqrt{ \frac{2 \alpha}{\gamma} } \right ) \left \| x_{k} -v_{k+1} \right \|
\end{eqnarray}
According to (\ref{thm3_1.5}), we known $\{x_k \}$ and $\{v_k \}$ convergence to the same points. Let $\Omega$ be the set that contains all the convergence points of $\{x_k \}$ (also $\{v_k \}$). Because $ f(v_{k+1}) \leq f(v_{k})$, $\{f(v_{k})\}$ is a monotonically decreasing sequence. Thus, $f(v_{k})$ has the same value at all the convergence points in $\Omega$, which is denoted as $f^*$.

 Because  $\{f(v_{k})\}$ is a monotonically decreasing sequence, there exists $\widehat{k}_1$ such that $f(v_k) < f^* + \eta$, $\forall k > \widehat{k}_1$. On the other hand, because $\lim_{k\rightarrow \infty}dis(v_k, \Omega) = 0$, there exists $\widehat{k}_2$ such that $ dis(v_k, \Omega) < \epsilon$, $\forall k > \widehat{k}_2$. Let $k>k_0 = \max \{\widehat{k}_1, \widehat{k}_2 \}$, we have
 \begin{eqnarray}\label{thm4_1.3.1}
v_k \in \{ v: dist(v,\Omega) < \epsilon \} \cap \{v:  f^* < f(v) <  f^*  + \eta \}
\end{eqnarray}
From Lemma \ref{lemma1}, there exists a concave function $\varphi$ such that
\begin{eqnarray} \label{thm4_1.3.2}
\varphi'(f(v_k) - f^* ) dist (\textbf{0}, \nabla g(v_k) + \partial_{\varepsilon} h(v_k))) \geq 1
\end{eqnarray}
Define $r_k= f(v_k) - f^*$.  According to (\ref{thm4_1.2}), (\ref{thm4_1.3}) and (\ref{thm4_1.3.2}), we have that
\begin{eqnarray} \label{thm4_1.4}
 1 & \leq &  \left ( \varphi'(f(v_k) - f^*) dist (\textbf{0}, \nabla g(u) + \partial_{\varepsilon} h(v_k))  \right )^2
\\  & \leq & \nonumber \left (  \varphi'( r_k ) \right )^2  \left  ( \frac{1}{\gamma } + L  + \sqrt{ \frac{2 \alpha}{\gamma} } \right )^2 \left \| x_{k-1} -v_{k} \right \|^2
\\  & \leq & \nonumber \left (  \varphi'( r_k ) \right )^2  \left  ( \frac{1}{\gamma } + L  + \sqrt{ \frac{2 \alpha}{\gamma} } \right )^2  \frac{f(v_{k-1}) -   f(v_{k})}{  \frac{1}{2 \gamma} - \frac{L}{2} -  \alpha  }
\\  & = & \nonumber d_1 \left (  \varphi'( r_k ) \right )^2 \left ( r_{k-1} - r_k \right )
\end{eqnarray}
for all $k>k_0$, where $d_1 =  \frac{\left  ( \frac{1}{\gamma } + L  + \sqrt{ \frac{2 \alpha}{\gamma} } \right )^2}{  \frac{1}{2 \gamma} - \frac{L}{2} -  \alpha  }$. Because $\varphi$ has the form of $\varphi(t)=\frac{C}{\theta} t^\theta$, we have $\varphi'(t)= C t^{\theta-1}$. Thus, according to (\ref{thm4_1.4}), we have that
\begin{eqnarray} \label{thm4_1.5}
 1 \leq d_1C^2 t^{2\theta-2}\left ( r_{k-1} - r_k \right )
\end{eqnarray}
Next, we consider the three cases, i.e., $\theta=1$, $\theta \in [\frac{1}{2},1)$ and $\theta \in (0,\frac{1}{2})$, which are also considered in \citep{li2015accelerated}.
Same with the analysis  of Theorem 3 in \citep{li2015accelerated}, we can have the conclusions in Theorem \ref{theorem4}.
 This completes the proof.
\end{proof}

\noindent \textbf{Convex optimization}
In this section, we  prove
the convergence rate $O(\frac{1}{T^2})$ for the convex optimization (Theorem \ref{theorem5}) when the errors decrease at an appropriate rate.
\begin{theorem} \label{theorem5}
Assume that $f$ is convex. For AIPG, we have that
\begin{eqnarray}\label{equ_theorem2_o.1}
f(x_{k+1}) - f(x^*)\leq \frac{2L}{(m+1)^2 } \left ( \left \| x_0 - x^* \right \|  + 2A_m + \sqrt{B_m} \right )^2
\end{eqnarray}
where $A_m = { \frac{1}{2}\sum_{k=1}^m {2 \gamma t_k\sqrt{{2 \frac{1}{\gamma} \varepsilon_k }}} }$, $B_m= { 2 \gamma \sum_{k=1}^m  t_k^2\varepsilon_k}$.
\end{theorem}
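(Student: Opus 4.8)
The plan is to adapt the inexact FISTA analysis of \cite{schmidt2011convergence} and \cite{beck2009fast} to the monotone accelerated scheme in Algorithm \ref{algorithm2}, building a Lyapunov telescoping argument around the extrapolated iterate $z_{k+1}$. First I would establish a one-step inexact descent inequality evaluated at the extrapolation point $y_k$. Since $z_{k+1} \in \textrm{Prox}^{\varepsilon_k}_{\gamma h}(y_k - \gamma\nabla g(y_k))$, Lemma 2 in \cite{schmidt2011convergence} supplies an $f_k$ with $\|f_k\| \leq \sqrt{2\gamma\varepsilon_k}$ and $\frac{1}{\gamma}(y_k - z_{k+1} - \gamma\nabla g(y_k) - f_k) \in \partial_{\varepsilon_k} h(z_{k+1})$. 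Combining this $\varepsilon_k$-subgradient inequality with the $L$-smoothness bound (\ref{coordinate_lipschitz_constant2}) on $g$ and the convexity of $g$ and $h$ (available in the convex case), I would obtain, for every $x$,
\begin{eqnarray}
f(z_{k+1}) \leq f(x) + \frac{1}{2\gamma}\left(\|y_k - x\|^2 - \|z_{k+1} - x\|^2\right) + \big\langle \tfrac{1}{\gamma} f_k,\, x - z_{k+1}\big\rangle + \varepsilon_k. \nonumber
\end{eqnarray}
Because $x_{k+1}$ is selected as the smaller of $f(z_{k+1})$ and $f(v_{k+1})$, we always have $f(x_{k+1}) \leq f(z_{k+1})$, so the accelerated estimate derived for $z_{k+1}$ can be transferred to $x_{k+1}$; this is precisely what lets the monotone min-selection coexist with the $O(1/T^2)$ rate.

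Second, following the Beck--Teboulle scheme I would instantiate this inequality twice, at $x = x_k$ and at $x = x^*$, form the convex combination with weights $1 - 1/t_{k+1}$ and $1/t_{k+1}$, and multiply through by $t_{k+1}^2$. Using the identity $t_{k+1}^2 - t_{k+1} = t_k^2$ implied by the update $t_{k+1} = \frac{\sqrt{4t_k^2+1}+1}{2}$, together with the extrapolation rule in line 3 of Algorithm \ref{algorithm2} to rewrite the $\|y_k - x\|^2 - \|z_{k+1} - x\|^2$ terms as a difference of squared distances of an auxiliary vector $u_{k+1}$ (a combination of $z_{k+1}$, $x_k$ and $x^*$ dictated by the form of $y_k$), I would arrive at a telescoping relation of the form
\begin{eqnarray}
2\gamma t_{k+1}^2\left(f(x_{k+1}) - f(x^*)\right) + \|u_{k+1}\|^2 \leq 2\gamma t_k^2\left(f(x_k) - f(x^*)\right) + \|u_k\|^2 + E_k, \nonumber
\end{eqnarray}
where $E_k$ collects the perturbations $2\gamma t_{k+1}^2\varepsilon_k$ and the cross terms $2t_{k+1}\langle f_k, \cdot\rangle$. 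Bounding the inner products by Cauchy--Schwarz with $\|f_k\| \leq \sqrt{2\gamma\varepsilon_k}$ is exactly where the factors $t_k\sqrt{2\varepsilon_k/\gamma}$ defining $A_m$ and $t_k^2\varepsilon_k$ defining $B_m$ originate.

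Third, I would sum the telescoped inequality over $k$, yielding a bound of the shape $\|u_m\| \leq \|x_0 - x^*\| + (\text{term linear in the unknown } \|u_k\|) + (\text{accumulated }\varepsilon)$, in which the norms $\|u_k\|$ reappear inside the linear error term. This self-referential inequality is resolved by the sequence lemma (Lemma 1 in \cite{schmidt2011convergence}), which converts $a_m^2 \leq S_m + \sum_k\lambda_k a_k$ into $a_m \leq \frac{1}{2}\sum_k\lambda_k + (S_m + (\frac{1}{2}\sum_k\lambda_k)^2)^{1/2}$, producing precisely the $2A_m + \sqrt{B_m}$ combination. Finally, discarding the nonnegative $\|u_m\|^2$ term and using the standard fact $t_m \geq (m+1)/2$ together with the stepsize relation $\gamma < 1/L$ to lower-bound $2\gamma t_{m+1}^2$ converts the left-hand side into the claimed $\frac{2L}{(m+1)^2}(\cdots)^2$ rate. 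The step I expect to be the main obstacle is the second one: carrying the inexactness error cleanly through the momentum recombination while simultaneously accommodating the nonstandard extrapolation $y_k$ (which, unlike plain FISTA, also involves $z_k$) and the monotone choice of $x_{k+1}$, so that the cross terms genuinely telescope and the residual errors package exactly into $A_m$ and $B_m$ rather than into a looser bound.
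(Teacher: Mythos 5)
Your proposal follows essentially the same route as the paper's own proof: the same one-step inexact inequality from Lemma 2 of \cite{schmidt2011convergence}, the same two instantiations at $x_k$ and $x^*$ combined with the $t_k$-weights and the identity $t_{k+1}^2 - t_{k+1} = t_k^2$, the same auxiliary vector $U_{k+1} = t_k z_{k+1} - (t_k - 1)x_k - x^*$ (with $U_1 = x_0 - x^*$) whose telescoping absorbs the nonstandard extrapolation, the monotone selection handled via $f(x_{k+1}) \leq f(z_{k+1})$, and the same resolution of the self-referential error recursion through Lemma 1 of \cite{schmidt2011convergence}, yielding exactly the $2A_m + \sqrt{B_m}$ structure and the $t_m \geq (m+1)/2$ rate conversion. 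The only differences are cosmetic indexing conventions, so the proposal is correct and matches the paper's argument.
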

\begin{proof}
We have that
\begin{eqnarray} \label{thm2_1.2}
&& f(z_{k+1}) = g(z_{k+1}) + h(z_{k+1})
\\ & \leq & \nonumber g(y_k) + \left \langle \nabla g(y_{k}) , z_{k+1} - y_k  \right \rangle + \frac{L}{2} \left \| z_{k+1} - y_k \right \|^2  + h(z_{k+1})
\\ & =  & \nonumber  g(y_k) + \left \langle \nabla g(y_{k}) ,x - y_k  \right \rangle +  \left \langle \nabla g(y_{k}) , z_{k+1} - x  \right \rangle + \frac{L}{2} \left \| z_{k+1} - y_k \right \|^2  + h(z_{k+1})
\\ & \leq & \nonumber  g(x)  +  \left \langle \nabla g(y_{k}) , z_{k+1} - x  \right \rangle + \frac{L}{2} \left \| z_{k+1} - y_k \right \|^2  + h(z_{k+1})
\\ & \leq & \nonumber  g(x)  +  \left \langle \nabla g(y_{k}) , z_{k+1} - x  \right \rangle + \frac{L}{2} \left \| z_{k+1} - y_k \right \|^2
\\ &  & \nonumber  + h(x)  - \left \langle \nabla g(y_k) + \frac{1}{\gamma}(z_{k+1} - y_{k} + f_k) , z_{k+1} - x \right \rangle + \varepsilon_k
\\ & = & \nonumber  f(x)   + \frac{L}{2} \left \| z_{k+1} - y_k \right \|^2
  - \left \langle  \frac{1}{\gamma}(z_{k+1} - y_{k}) , z_{k+1}- y_{k} +  y_{k} - x \right \rangle + \left \langle  \frac{1}{\gamma} f_k , x - z_{k+1}   \right \rangle  + \varepsilon_k
\\ & = & \nonumber  f(x)  - \left ( \frac{1}{\gamma} - \frac{L}{2} \right ) \left \| z_{k+1} - y_k \right \|^2
  - \left \langle  \frac{1}{\gamma} (z_{k+1} - y_{k} ) ,   y_{k} - x \right \rangle + \left \langle  \frac{1}{\gamma} f_k , x - z_{k+1}   \right \rangle  + \varepsilon_k
\\ & \leq & \nonumber  f(x)  - \frac{1}{2\gamma} \left \| z_{k+1} - y_k \right \|^2
  - \left \langle  \frac{1}{\gamma} (z_{k+1} - y_{k} ) ,   y_{k} - x \right \rangle + \left \langle  \frac{1}{\gamma} f_k , x - z_{k+1}   \right \rangle  + \varepsilon_k
\end{eqnarray}
where the first inequality uses (4), the second inequality uses the  convexity of $g(x)$, the third inequality uses  the convexity of $h(x)$ and (\ref{thm1_1.02}), the final inequality uses $\gamma < \frac{1}{L}$.
Let $x=x_k$ and $x^*$, we have
\begin{eqnarray} \label{thm2_1.3}
f(z_{k+1}) - f(x_k) \leq - \frac{1}{2\gamma} \left \| z_{k+1} - y_k \right \|^2
  - \left \langle  \frac{1}{\gamma}(z_{k+1} - y_{k} ) ,   y_{k} - x_k \right \rangle + \left \langle  \frac{1}{\gamma} f_k , x_k- z_{k+1}   \right \rangle  + \varepsilon_k
\\   \label{thm2_1.4} f(z_{k+1}) - f(x^*) \leq - \frac{1}{2\gamma} \left \| z_{k+1} - y_k \right \|^2
  - \left \langle  \frac{1}{\gamma}(z_{k+1} - y_{k} ) ,   y_{k} - x^* \right \rangle + \left \langle  \frac{1}{\gamma} f_k ,x^*- z_{k+1}   \right \rangle  + \varepsilon_k
\end{eqnarray}
Multiplying (\ref{thm2_1.3}) by $t_k-1$ and adding (\ref{thm2_1.4}), we have
\begin{eqnarray} \label{thm2_1.5}
&& t_k f(z_{k+1}) - (t_k-1) f(x_k) - f(x^*)
\\ & \leq & \nonumber - \frac{ t_k}{2 \gamma} \left \| z_{k+1} - y_k \right \|^2  - \left \langle  \frac{1}{\gamma}(z_{k+1} - y_{k} ) ,  (t_k-1) \left ( y_{k} - x_k \right ) +  y_{k} - x^* \right \rangle
\\ &  & \nonumber + \left \langle  \frac{1}{\gamma} f_k , (t_k-1) \left ( x_k- z_{k+1} \right ) + x^*- z_{k+1}   \right \rangle  + t_k \varepsilon_k
\end{eqnarray}
Thus, we have
\begin{eqnarray} \label{thm2_1.6}
&& t_k \left  (f(z_{k+1}) - f(x^*) \right  )- (t_k-1) \left  ( f(x_k) - f(x^*) \right )
\\ & \leq & \nonumber - \frac{ t_k}{2 \gamma} \left \| z_{k+1} - y_k \right \|^2  - \left \langle  \frac{1}{\gamma}(z_{k+1} - y_{k} ) ,  (t_k-1) \left ( y_{k} - x_k \right ) +  y_{k} - x^* \right \rangle
\\ &  & \nonumber + \left \langle  \frac{1}{\gamma} f_k , (t_k-1) \left ( x_k- z_{k+1} \right ) + x^*- z_{k+1}   \right \rangle  + t_k \varepsilon_k
\end{eqnarray}
 Multiplying both sides of (\ref{thm2_1.6}) by $t_k$ and using $t_k^2  - t_k = (t_{k-1})^2$ in Algorithm \ref{algorithm2}, we have that
 \begin{eqnarray} \label{thm2_1.7}
&& t_k^2 \left  (f(z_{k+1}) - f(x^*) \right  )- t_{k-1}^2 \left  ( f(x_k) - f(x^*) \right )
\\ & \leq & \nonumber - \frac{ t_k^2}{2 \gamma} \left \| z_{k+1} - y_k \right \|^2  - \left \langle t_k \frac{1}{\gamma}(z_{k+1} - y_{k} ) ,  (t_k-1) \left ( y_{k} - x_k \right ) +  y_{k} - x^* \right \rangle
\\ &  & \nonumber + \left \langle t_k \frac{1}{\gamma} f_k , (t_k-1) \left ( x_k- z_{k+1} \right ) + x^*- z_{k+1}   \right \rangle  + t_k^2 \varepsilon_k
\\ & = & \nonumber - \frac{ t_k^2}{2 \gamma} \left \| z_{k+1} - y_k \right \|^2  - \left \langle t_k \frac{1}{\gamma}(z_{k+1} - y_{k} ) ,  t_k y_{k}   -  (t_k-1)  x_k  - x^* \right \rangle
\\ &  & \nonumber + \left \langle t_k \frac{1}{\gamma} f_k , (t_k-1)  x_k  - t_k z_{k+1} + x^*   \right \rangle  + t_k^2 \varepsilon_k
\\ & = & \nonumber \frac{1}{2\gamma } \left ( \left \| (t_k-1)x_k -t_k y_{k}  + x^*   \right \|^2 - \left \| (t_k-1)x_k -t_k z_{k+1}  + x^*  \right \|^2  \right )
\\ &  & \nonumber + \left \langle t_k \frac{1}{\gamma} f_k , (t_k-1)  x_k  - t_k z_{k+1} + x^*   \right \rangle  + t_k^2 \varepsilon_k
\end{eqnarray}
Define $U_{k+1} = t_k z_{k+1} - (t_k-1) x_k -  x^* $. Because $y_k = x_k + \frac{t_{k-1}}{t_k} (z_k - x_k ) + \frac{t_{k-1 } -1 }{ t_k} (x_k - x_{k-1})$, we have that \begin{eqnarray}\label{thm2_1.8}
z_{k} = \frac{t_k}{t_{k-1}}y_k +  \frac{1 - t_k}{t_{k-1}} x_k + \frac{t_{k-1} - 1}{t_{k-1}}x_{k-1} \end{eqnarray}
  Thus, we have that  $U_{k} = t_{k-1} z_{k} - (t_{k-1}-1) x_{k-1} -  x^* = t_ky_k - (t_{k-1} - 1) x_k - x^* $. Substitute  $U_{k+1}$ and $U_{k}$ into (\ref{thm2_1.7}), we have that
 \begin{eqnarray} \label{thm2_1.9}
&&  t_k^2 \left  (f(z_{k+1}) - f(x^*) \right  )- t_{k-1}^2 \left  ( f(x_k) - f(x^*) \right )
\\ & \leq & \nonumber \frac{1 }{2 \gamma}  \left ( \left \| U_{k} \right \|^2 -  \left \| U_{k+1} \right \|^2 \right ) - \left \langle t_k \frac{1}{\gamma} f_k ,  U_{k+1}  \right \rangle  + t_k^2 \varepsilon_k
\end{eqnarray}
If $f(z_{k+1}) \leq f(v_{k+1})$, we have $x_{k+1} = z_{k+1}$. Thus,
 \begin{eqnarray} \label{thm2_1.10}
&&  t_k^2 \left  (f(x_{k+1}) - f(x^*) \right  )- t_{k-1}^2 \left  ( f(x_k) - f(x^*) \right )
\\ & = & \nonumber t_k^2 \left  (f(z_{k+1}) - f(x^*) \right  )- t_{k-1}^2 \left  ( f(x_k) - f(x^*) \right )
\\ & \leq & \nonumber \frac{1 }{2 \gamma}  \left ( \left \| U_{k} \right \|^2 -  \left \| U_{k+1} \right \|^2 \right ) - \left \langle t_k \frac{1}{\gamma} f_k ,  U_{k+1}  \right \rangle  + t_k^2 \varepsilon_k
\end{eqnarray}
If $f(z_{k+1}) > f(v_{k+1})$, we have $x_{k+1} = v_{k+1}$. Thus,
 \begin{eqnarray} \label{thm2_1.11}
&&  t_k^2 \left  (f(x_{k+1}) - f(x^*) \right  )- t_{k-1}^2 \left  ( f(x_k) - f(x^*) \right )
\\ & = & \nonumber t_k^2 \left  (f(z_{k+1}) - f(x^*) \right  )- t_{k-1}^2 \left  ( f(x_k) - f(x^*) \right )
\\ & \leq & \nonumber  \frac{1 }{2 \gamma}  \left ( \left \| U_{k} \right \|^2 -  \left \| U_{k+1} \right \|^2 \right ) - \left \langle t_k \frac{1}{\gamma} f_k ,  U_{k+1}  \right \rangle  + t_k^2 \varepsilon_k
\end{eqnarray}
Combining (\ref{thm2_1.10}) and (\ref{thm2_1.11}), we have
 \begin{eqnarray} \label{thm2_1.12}
&&  t_k^2 \left  (f(x_{k+1}) - f(x^*) \right  )- t_{k-1}^2 \left  ( f(x_k) - f(x^*) \right )
\\ & \leq & \nonumber \frac{1 }{2 \gamma} \left ( \left \| U_{k} \right \|^2 -  \left \| U_{k+1} \right \|^2 \right ) - \left \langle t_k \frac{1}{\gamma} f_k ,  U_{k+1}  \right \rangle  + t_k^2 \varepsilon_k
\\ & \leq & \nonumber \frac{1 }{2 \gamma}  \left ( \left \| U_{k} \right \|^2 -  \left \| U_{k+1} \right \|^2 \right ) + t_k\sqrt{{2 \frac{1}{\gamma} \varepsilon_k }} \left \|U_{k+1}  \right \|  + t_k^2 \varepsilon_k
\end{eqnarray}
By summing
the the inequality (\ref{thm2_1.12}) over $k = 1,\cdots,m$,  we obtain
 \begin{eqnarray} \label{thm2_1.13}
 &&  t_m^2 \left  (f(x_{k+1}) - f(x^*) \right  )
\\ & = & \nonumber t_m^2 \left  (f(x_{k+1}) - f(x^*) \right  )- t_{0}^2 \left  ( f(x_1) - f(x^*) \right )
\\ & \leq & \nonumber \frac{1 }{2 \gamma}  \left ( \left \| U_{1} \right \|^2 -  \left \| U_{m+1} \right \|^2 \right ) + \sum_{k=1}^m t_k\sqrt{{2 \frac{1}{\gamma} \varepsilon_k }} \left \|U_{k+1}  \right \|  + \sum_{k=1}^m  t_k^2 \varepsilon_k
\end{eqnarray}
According to (\ref{thm2_1.13}), we have that
 \begin{eqnarray} \label{thm2_1.14}
  \left \| U_{m+1} \right \|^2
 \leq \underbrace{ \left \| U_{1} \right \|^2  + 2 \gamma \sum_{k=1}^m  t_k^2\varepsilon_k }_{A} +  \sum_{k=1}^m  \underbrace{2 \gamma t_k\sqrt{{2 \frac{1}{\gamma} \varepsilon_k }}}_{\lambda_k} \left \|U_{k+1}  \right \|
\end{eqnarray}
According to Lemma 1 in \citep{schmidt2011convergence}, we have that
\begin{eqnarray} \label{thm2_1.15}
 && \left \| U_{m+1} \right \|
 \\ & \leq & \nonumber  {\frac{1}{2}\sum_{k=1}^m {\lambda_k} } + \left ( A+ \left ( \frac{1}{2}\sum_{k=1}^m {\lambda_k} \right )^2 \right )^{\frac{1}{2}}
 \\ & = & \nonumber \underbrace{ \frac{1}{2}\sum_{k=1}^m {2 \gamma t_k\sqrt{{2 \frac{1}{\gamma} \varepsilon_k }}} }_{A_m} + \left (   \left \| U_{1} \right \|^2   + \underbrace{ 2 \gamma \sum_{k=1}^m  t_k^2\varepsilon_k}_{B_m} + \left ( \underbrace{ \frac{1}{2}\sum_{k=1}^m {2 \gamma t_k\sqrt{{2 \frac{1}{\gamma} \varepsilon_k }}} }_{A_m} \right )^2 \right )^{\frac{1}{2}}
 \\ & \leq & \nonumber A_m + \left ( \left \| U_{1} \right \|^2  + B_m + A_m^2  \right )^{\frac{1}{2}}
\end{eqnarray}
Because $A_k$ and $B_k$ are increasing sequences,  $\forall k \leq m$, we have that
\begin{eqnarray} \label{thm2_1.16}
\left \| U_{k} \right \|
  & \leq &  A_m + \left ( \left \| U_{1} \right \|^2 + B_m + A_m^2  \right )^{\frac{1}{2}}
    \leq   A_m +  \left \| U_{1} \right \| + \sqrt{B_m} + A_m
 \\ & \leq & \nonumber 2A_m +   \left \| U_{1} \right \| + \sqrt{B_m}
 \end{eqnarray}
 According to (\ref{thm2_1.13}) and (\ref{thm2_1.16}),  we have that
 \begin{eqnarray} \label{thm1_1.17}
 &&  t_m^2 \left  (f(x_{k+1}) - f(x^*) \right  )
\\ & \leq & \nonumber \frac{1 }{2 \gamma}  \left ( \left \| U_{1} \right \|^2 -  \left \| U_{m+1} \right \|^2 \right ) + \sum_{k=1}^m  t_k^2 \varepsilon_k + \sum_{k=1}^m t_k\sqrt{{2 \frac{1}{\gamma}\varepsilon_k }} \left \|U_{k+1}  \right \|
\\ & \leq & \nonumber \frac{1 }{2 \gamma}  \left \| U_{1} \right \|^2  + \frac{1}{2 \gamma} B_m + \frac{1}{\gamma} A_m \left ( 2A_m +   \left \| U_{1} \right \| + \sqrt{B_m} \right )
\\ & \leq & \nonumber \frac{1 }{2 \gamma}  \left ( 2A_m +   \left \| U_{1} \right \| + \sqrt{B_m} \right )^2
\end{eqnarray}
Because  $t_{k+1} = \frac{\sqrt{4t_k^2 + 1} + 1}{2}$, it is easy to verify that $t_k \geq \frac{k+1}{2}$. Thus, we have
\begin{eqnarray} \label{thm2_1.18}
f(x_{k+1}) - f(x^*)\leq \frac{2L}{(m+1)^2 } \left ( \left \| x_0 - x^* \right \|  + 2A_m + \sqrt{B_m} \right )^2
\end{eqnarray}
This completes the proof.
\end{proof}

\subsection{Experiments}\label{experiments}

\subsubsection{OSCAR}
For the  robust regression, we replace the square loss originally used in OSCAR  with the  correntropy induced loss \citep{he2011maximum}. Thus, we consider the OSCAR with the functions $g(x )$ and $h(x )$ as following.
\begin{eqnarray}\label{formulation5.1.1}
g(x ) &=&
 \frac{\sigma^2}{2}\sum_{i=1}^{l}\left (1 -e^{ - \frac{ (y_i -  X_i^Tx)^2 }{\sigma^2 }}\right )
\\  h(x ) &=& \lambda_1  \| x \|_1 +\lambda_2 \sum_{i<j} \max \{ | x_i | , |x_j  | \}
\end{eqnarray}
where  $\lambda_1\geq 0$ and $\lambda_2\geq 0$ are two  regularization parameters.  Based on the function $g(x)$, we have that $\nabla g(x) =  - \sum_{i=1}^{l} e^{ - \frac{ (y_i -  X_i^Tx)^2 }{\sigma^2 }} (y_i -  X_i^Tx) X_i$. As mentioned previously, for the each iteration of the proximal gradient  algorithm, we need to solve the (exact or inexact)  proximal operators.
\cite{zhong2012efficient}  proposed an  algorithm for exactly computing $\textrm{Prox}_{\gamma h}(x_{k-1} - \gamma  \nabla g(x_{k-1}))$. To implement IPG and AIPG efficiently, we need to compute an inexact  proximal operator $x_{k} \in \textrm{Prox}^{\varepsilon}_{\gamma h} \left (x_{k-1} - \gamma  \nabla g(x_{k-1})  \right )$, where $\varepsilon$ denotes an error in the proximal operator.

In our experiments, we proposed a subgradient algorithm (i.e., Algorithm \ref{algorithm3}) for computing  an inexact  proximal operator $x_{k} \in \textrm{Prox}^{\varepsilon}_{\gamma h} \left (x_{k-1} - \gamma  \nabla g(x_{k-1})  \right )$.  Specifically, we approximately solve the following subproblem.
\begin{eqnarray} \label{formulation5.1.2}
 Q(x;x_{k-1})=  \frac{1}{2 \gamma}  \left \| x- \left (x_{k-1} - \gamma  \nabla g(x_{k-1}) \right ) \right \|^2 + h(x)
\end{eqnarray}
There are two key procedures for  solving (\ref{formulation5.1.2})    in Algorithm \ref{algorithm3}:
 \begin{enumerate}
\item  Compute  the subgradient $\partial Q(x_t;x_{k-1})$.
\item Compute the duality gap $G(x_t;x_{k-1})=Q(x_t;x_{k-1}) - \widetilde{Q}(\alpha_t;x_{k-1})$, where $\alpha_t$ is the dual variables.
\end{enumerate}
\begin{algorithm}
\renewcommand{\algorithmicrequire}{\textbf{Input:}}
\renewcommand{\algorithmicensure}{\textbf{Output:}}
\caption{Subgradient algorithm for computing the proximal operator of OSCAR}
\begin{algorithmic}[1]
\REQUIRE   $\varepsilon$ (error), $\gamma'  $ (step size), $x_{k-1}$.
\ENSURE $x_{m}$.
\STATE  Initialize  $x_0 = x_{k-1} $, $t=0$.
\WHILE{$G(x_t;x_{k-1})=Q(x_t;x_{k-1}) - \widetilde{Q}(\alpha_t;x_{k-1}) < \varepsilon$}

 \STATE  $x_{t+1} \leftarrow  x_t - \gamma' g_t'$, where  $g_t' \in \partial Q(x_t;x_{k-1})$. 
 \STATE $t \leftarrow t+1$
\ENDWHILE
\end{algorithmic}
\label{algorithm3}
\end{algorithm}

\noindent \textbf{Compute  the subgradient:} \ \ Let $o(j) \in \{1,2,\cdots,N \}$ denote the order  of $| x_j |$ \footnote{Here, $x_j$ denotes the $j$-th coordinate of the vector $x$.} among $\{| x_{1} |,| x_{2} |,\cdots,| x_{N} | \}$ such that if    $o(j_1)<o(j_2) $, we have
$|x_{j_1 } | \leq  | x_{j_2 } |$. Thus, $\partial h(x) = \sum_{j=1}^{N}\left (\lambda_1 + \lambda_2 (o(j)-1)  \right  ) \partial | x_j |$.
\begin{eqnarray} \label{formulation5.1.3}
\partial Q(x;x_{k-1})= \frac{1}{ \gamma}\left (  x- \left (x_{k-1} - \gamma  \nabla g(x_{k-1}) \right ) \right ) + \partial h(x)
\end{eqnarray}

\noindent \textbf{Compute the  duality gap:} \ \
As mentioned in \cite{bach2011convex}, the dual function $\widetilde{Q}(\alpha;x_{k-1})$ of $Q(x;x_{k-1})$  can be computed as
\begin{eqnarray}\label{formulation5.1.4}
\widetilde{Q}(\alpha;x_{k-1})= \max_{\alpha} && \frac{- \gamma }{2}\alpha^T \alpha -\alpha^T \left  (x_{k-1} - \gamma  \nabla g(x_{k-1}) \right )
\\ s.t. &&  \max_{\sum_{j=1}^{d}\left (\lambda_1 + \lambda_2 (o(j)-1)  \right  ) | x_j | =1} {\alpha^T x} \leq 1 \nonumber
\end{eqnarray}
Similar to \citep{zhong2012efficient},   the optimal $\alpha$ of $\widetilde{Q}(\alpha;x_{k-1})$ can be analytically computed as
\begin{eqnarray} \label{equation10.1}
\alpha = \min \left \{1, \frac{1}{ r^*( \nabla g( x_t))} \right \} \nabla g( x_t)
\end{eqnarray}
 where $\nabla g(x_t) =  \frac{1}{ \gamma}\left (  x_t- \left (x_{k-1} - \gamma  \nabla g(x_{k-1}) \right ) \right )$. Assume  that the indices of  $\xi$ are sorted by $|\xi_1| \leq |\xi_2| \leq \cdots  \leq |\xi_N|$, and  $r^*({\xi}) = \max_{j \in \{1,2,\cdots,N \} } \frac{\sum_{i=1}^j |\xi_i|}{\sum_{i=1}^j  \lambda_1 +(i-1) \lambda_2 }$.  Similar with \citep{zhong2012efficient}, we provide the algorithm for computing the duality gap  in Algorithm \ref{algorithm4}.
 \begin{algorithm}[htbp]
\renewcommand{\algorithmicrequire}{\textbf{Input:}}
\renewcommand{\algorithmicensure}{\textbf{Output:}}
\caption{Duality gap }
\begin{algorithmic}[1]
\REQUIRE $x_{k-1}$.
\ENSURE The duality $ \widetilde{Q}(\alpha;x_{k-1})$.

 \STATE Compute $ \xi =  \nabla g(x)$ and sort  $\xi_i$ in ascend order.

  \STATE Compute $ r^*({\xi}) $.

\STATE Compute the optimal $\alpha$ of $\widetilde{Q}(\alpha_t;x_{k-1})$ according to (\ref{equation10.1}).

\STATE Compute the duality gap $G(x_t;x_{k-1})=Q(x_t;x_{k-1}) - \widetilde{Q}(\alpha_t;x_{k-1})$.
\end{algorithmic}
\label{algorithm4}
\end{algorithm}

\bibliographystyle{aaai}
\bibliography{Bibliography-File}
\end{document}